\newtheorem{theorem}{Theorem}
\newtheorem{lemma}{Lemma}
\newtheorem{proposition}{Proposition}
\newtheorem{definition}{Definition}
\newtheorem{corollary}{Corollary}
\newtheorem{assumption}{Assumption}
\newcommand{\defeq}{\overset{def}{=}}
\newcommand{\argmax}{{\rm arg}\max}
\newcommand{\argmin}{{\rm arg}\min}
\def\pd<#1>{\left\langle #1 \right\rangle}
\def\floor[#1]{\left\lfloor #1 \right\rfloor}
\def\ceil[#1]{\left\lceil #1 \right\rceil}
\newcommand{\dotp}[2]{#1^{\top}#2}
\newcommand{\realsp}{\mathbb{R}}
\newcommand{\naturalsp}{\mathbb{N}}
\newcommand{\expec}{\mathbb{E}}
\newcommand{\prob}{\mathbb{P}}
\newcommand{\fdim}{d}
\newcommand{\edim}{D}
\newcommand{\nclass}{c}
\newcommand{\ndata}{n}
\newcommand{\featuresp}{\mathcal{X}}
\newcommand{\labelsp}{\mathcal{Y}}
\newcommand{\tpr}{\nu}
\newcommand{\predict}[2]{\dotp{#1}{#2}}
\newcommand{\radcomp}{\Re}
\newcommand{\empradcomp}{\hat{\radcomp}}
\def\regrisk{\mathcal{R}}
\def\risk{\mathcal{L}}
\def\dumrisk{\mathcal{R}'}
\icmltitlerunning{Functional Gradient Boosting based on Residual Network Perception}
\begin{document}

\twocolumn[
\icmltitle{Functional Gradient Boosting based on Residual Network Perception}

% It is OKAY to include author information, even for blind
% submissions: the style file will automatically remove it for you
% unless you've provided the [accepted] option to the icml2018
% package.

% List of affiliations: The first argument should be a (short)
% identifier you will use later to specify author affiliations
% Academic affiliations should list Department, University, City, Region, Country
% Industry affiliations should list Company, City, Region, Country

% You can specify symbols, otherwise they are numbered in order.
% Ideally, you should not use this facility. Affiliations will be numbered
% in order of appearance and this is the preferred way.
\icmlsetsymbol{equal}{*}

\begin{icmlauthorlist}
\icmlauthor{Atsushi Nitanda}{tokyo,riken}
\icmlauthor{Taiji Suzuki}{tokyo,riken}
\end{icmlauthorlist}

\icmlaffiliation{tokyo}{Graduate School of Information Science and Technology, The University of Tokyo}
\icmlaffiliation{riken}{Center for Advanced Intelligence Project, RIKEN}

\icmlcorrespondingauthor{Atsushi Nitanda}{atsushi\_nitanda@mist.i.u-tokyo.ac.jp}
\icmlcorrespondingauthor{Taiji Suzuki}{taiji@mist.i.u-tokyo.ac.jp}

% You may provide any keywords that you
% find helpful for describing your paper; these are used to populate
% the "keywords" metadata in the PDF but will not be shown in the document
\icmlkeywords{Machine Learning, Classification, Deep Learning, Residual Network, Functional Gradient}

\vskip 0.3in
]

% this must go after the closing bracket ] following \twocolumn[ ...

% This command actually creates the footnote in the first column
% listing the affiliations and the copyright notice.
% The command takes one argument, which is text to display at the start of the footnote.
% The \icmlEqualContribution command is standard text for equal contribution.
% Remove it (just {}) if you do not need this facility.

\printAffiliationsAndNotice{}  % leave blank if no need to mention equal contribution
%\printAffiliationsAndNotice{\icmlEqualContribution} % otherwise use the standard text.

\begin{abstract}
  Residual Networks (ResNets) have become state-of-the-art models in deep learning and 
  several theoretical studies have been devoted to understanding why ResNet works so well.
  One attractive viewpoint on ResNet is that it is optimizing the risk in a functional space by combining an ensemble of effective features. 
  In this paper, we adopt this viewpoint to construct a new {\it gradient boosting method}, which is known to be very powerful in data analysis.
  To do so, we formalize the gradient boosting perspective of ResNet mathematically using the notion of functional gradients
  and propose a new method called {\it ResFGB} for classification tasks by leveraging ResNet perception.
  Two types of generalization guarantees are provided from the optimization perspective:
  one is the margin bound and the other is the expected risk bound by the sample-splitting technique.
  Experimental results show superior performance of the proposed method over state-of-the-art methods such as LightGBM.
\end{abstract}
\sloppy
\section{Introduction}
Deep neural networks have achieved great success in classification tasks; in particular,
residual network (ResNet) \cite{he2016deep} and its variants such as wide-ResNet \cite{Zagoruyko2016WRN}, ResNeXt \cite{xie2017aggregated}, and DenseNet \cite{huang2017densely}
% , which have identity skip-connections to avoid the vanishing gradient problem,
have become the most prominent architectures in computer vision.
Thus, to reveal a factor in their success, several studies have explored the behavior of ResNets and some promising perceptions have been advocated.
Concerning the behavior of ResNets, there are mainly two types of thoughts.
One is the ensemble views, which were pointed out in \citet{veit2016residual,littwin2016loss}.
They presented that ResNets are ensemble of shallower models using an unraveled view of ResNets.
Moreover, \citet{veit2016residual} enhanced their claim by showing that dropping or shuffling residual blocks does not affect the performance of ResNets experimentally.
The other is the optimization or ordinary differential equation views.
In \citet{jastrzebski2017residual}, it was observed experimentally that
ResNet layers iteratively move data representations along the negative gradient of the loss function with respect to hidden representations. % through ResNet-layers.
Moreover, several studies \cite{weinan2017proposal,haber2017learning,chang2017reversible,chang2017multi,lu2017beyond} have pointed out that ResNet layers can be
regarded as discretization steps of ordinary differential equations.
Since optimization methods are constructed based on the discretization of gradient flows, these studies are closely related to each other.

On the other hand, gradient boosting \cite{mason1999boosting,friedman2001greedy} is known to be a state-of-the-art method in data analysis;
in particular, XGBoost \cite{chen2016xgboost} and LightGBM \cite{ke2017lightgbm} are notable because of their superior performance.
Although ResNets and gradient boosting are prominent methods in different domains, we notice an interesting similarity by recalling that
gradient boosting is an ensemble method based on iterative refinement by functional gradients for optimizing predictors.
However, there is a key difference between ResNets and gradient boosting methods.
While gradient boosting directly updates the predictor, ResNets iteratively optimize the feature extraction by stacking ResNet layers rather than the predictor,
according to the existing work. 

In this paper, leveraging this observation, we propose a new gradient boosting method called {\it ResFGB} for classification tasks based on ResNet perception,
that is, the feature extraction gradually grows by functional gradient methods in the space of feature extractions and
the resulting predictor naturally forms a ResNet-type architecture.
The expected benefit of the proposed method over usual gradient boosting methods is
that functional gradients with respect to feature extraction can learn a deep model rather than a shallow model like usual gradient boosting.
As a result, more efficient optimization is expected. 

In the theoretical analysis of the proposed method, we first formalize the gradient boosting perspective of ResNet mathematically using the notion of functional gradients in the space of feature extractions.
That is, we explain that optimization in that space is achieved by stacking ResNet layers.
We next show a good consistency property of the functional gradient, which motivates us to find feature extraction with small functional gradient norms for estimating the correct label of data.
This fact is very helpful from the optimization perspective because minimizing the gradient norm is much easier than minimizing the objective function without strong convexity.
Moreover, we show the margin maximization property of the proposed method and derive the margin bound
by utilizing this formalization and the standard complexity analysis techniques developed in \citet{koltchinskii2002empirical, bartlett2002rademacher},
which guarantee the generalization ability of the method.
This bound gives theoretical justification for minimizing functional gradient norms in terms of both optimization and better generalization.
Namely, we show that faster convergence of functional gradient norms leads to smaller classification errors.
As for another generalization guarantee, we also provide convergence analysis of the sample-splitting variant of the method for the expected risk minimization.
We finally show superior performance, empirically, of the proposed method over state-of-the-art methods including LightGBM.

\paragraph{Related work.}
Several studies have attempted to grow neural networks sequentially based on the boosting theory.
\citet{bengio2006convex} introduced convex neural networks consisting of a single hidden layer, and proposed a gradient boosting-based method
in which linear classifiers are incrementally added with their weights. However, the expressive power of the convex neural network is somewhat limited because 
the method cannot learn deep architectures.
\citet{moghimi2016boosted} proposed boosted convolutional neural networks and showed superior empirical performance on fine-grained classification tasks,
where convolutional neural networks are iteratively added, %like Adaboost \cite{freund1997decision} 
while our method constructs a deeper network by iteratively adding layers.
\citet{cortes2017adanet} proposed AdaNet to adaptively learn both the structure of the network and its weight, and 
provided data-dependent generalization guarantees for an adaptively learned network; however, the learning strategy quite differs from our method and the convergence rate is unclear.
The most related work is BoostResNet \cite{huang2017learning}, which constructs ResNet iteratively like our method; however, this method is based on an different theory
rather than functional gradient boosting with a constant learning rate.
This distinction makes the different optimization-generalization tradeoff.
Indeed, our method exhibits a tradeoff with respect to the learning rate, which recalls perception of usual functional gradient boosting methods, namely
a smaller learning rate leads to a good generalization performance.

\section{Preliminary}
In this section, we provide several notations and describe a problem setting of the classification.
An important notion in this paper is the functional gradient, which is also introduced in this section.

% \subsection{Problem setting}
\paragraph{Problem setting.}
Let $\featuresp = \realsp^\fdim$ and $\labelsp$ be a feature space and a finite label set of cardinal $\nclass$, respectively.
We denote by $\tpr$ a true Borel probability measure on $\featuresp \times \labelsp$ and
by $\tpr_{\ndata}$ an empirical probability measure of samples $(x_i,y_i)_{i=1}^\ndata$ independently drawn from $\tpr$,
i.e., $d\tpr_{\ndata}(X,Y)=\sum_{i=1}^\ndata \delta_{(x_i,y_i)}(X,Y)dXdY/\ndata$, where $\delta$ denotes the Dirac delta function.
We denote by $\tpr_X$ the marginal distribution on $X$ and by $\tpr(\cdot| X)$ the conditional distribution on $Y$.
We also denote empirical variants of these distributions by $\tpr_{\ndata,X}$ and $\tpr_{\ndata}(\cdot | X)$.
In general, for a probability measure $\mu$, we denote by $\expec_{\mu}$ the expectation with respect to a random variable 
according to $\mu$, by $L_2(\mu)$ the space of square-integrable functions with respect to $\mu$,
and by $L_2^q(\mu)$ the product space of $L_2(\mu)$ equipped with $\pd<\cdot,\cdot>_{L_2^q(\mu)}$-inner product: for $\forall \xi,\forall \zeta \in L_2^q(\mu)$,
\vspace{-2mm}
\begin{equation*}
  \pd<\xi,\zeta>_{L_2^q(\mu)}
  \defeq \expec_{\mu}[\dotp{\xi(X)}{\zeta(X)}] 
  = \expec_{\mu}\left[ \sum_{j=1}^q \xi_j(X)\zeta_j(X) \right].
\end{equation*}
We also use the following norm: for $\forall p \in (0,2]$ and $\forall \xi \in L_2^q(\mu)$,
$\|\xi\|_{L_p^q(\mu)}^p \defeq \expec_{\mu}[ \|\xi(X)\|_2^p]
= \expec_{\mu}\left[ (\sum_{j=1}^q \xi_j^2(X))^{p/2} \right]$.

The ultimate goal in classification problems is to find a predictor $f \in L_2^\nclass(\tpr_X)$ such that $\argmax_{y\in \labelsp} f_y(x)$ correctly classifies its label.
The quality of the predictor is measured by a loss function $l(\zeta,y) \geq 0$. 
A typical choice of $l$ in multiclass classification problems is $l(\zeta,y) = - \log( \exp(\zeta_y)/\sum_{\overline{y}\in \labelsp}\exp(\zeta_{\overline{y}}) )$,
which is used for the multiclass logistic regression.
The goal of classification is achieved by solving the expected risk minimization problem:
\begin{equation}
  \min_{f \in L_2^\nclass(\tpr_X)} \left\{ \risk(f) \defeq
  \expec_{\tpr}[l(f(X),Y)] \right\}. \label{original_expected_risk_prob}
\end{equation}
However, the true probability measure $\tpr$ is unknown, so we approximate $\risk$ using the observed data probability measure $\tpr_{\ndata}$ and solve the empirical risk minimization problems:
\begin{equation}
  \min_{f \in L_2^\nclass(\tpr_X)} \left\{ \risk_{\ndata}(f) \defeq
  \expec_{\tpr_{\ndata}}[l(f(X),Y)] \right\}. \label{original_empirical_risk_prob}
\end{equation}
In general, some regularization is needed for the problem (\ref{original_empirical_risk_prob}) to guarantee generalization.
In this paper, we rely on early stopping \cite{zhang2005boosting} and some restriction on optimization methods for solving the problem.

Similar to neural networks, we split the predictor $f$ into the feature extraction and linear predictor, that is, $f(x) = \predict{w}{\phi(x)}$,
where $w \in \realsp^{\fdim\times \nclass}$ is a weight for the last layer and $\phi \in L_2^{\fdim}(\tpr_X)$ is a feature extraction from $\featuresp$ to $\featuresp$.
For simplicity, we also denote $l(z,y,w)=l(\predict{w}{z},y)$.
Usually, $\phi$ is parameterized by a neural network and optimized using the stochastic gradient method.
In this paper, we propose a way to optimize $\phi$ in $L_2^\fdim(\tpr_X)$ via the following problem:

\begin{equation}
  \hspace{-3mm}  \min_{ \substack{w \in \realsp^{\fdim \times \nclass} \\ \phi \in L_2^\fdim(\tpr_X) }}
  \left\{ \regrisk(\phi,w) \defeq
  \expec_{\tpr}[l(\phi(X),Y,w)] + \frac{\lambda}{2}\|w\|_2^2 \right\} \label{expected_risk_prob}
\end{equation}
where $\lambda>0$ is a regularization parameter to stabilize the optimization procedure and $\|\cdot\|_{2}$ for $w$ is a Euclidean norm.
When we focus on the problem with respect to $\phi$, we use the notation $\regrisk(\phi)\defeq \min_{w \in \realsp^{\fdim \times \nclass}} \regrisk(\phi,w)$.
We also denote by $\regrisk_\ndata(\phi,w)$ and $\regrisk_\ndata(\phi)$ empirical variants of $\regrisk(\phi,w)$ and $\regrisk(\phi)$, respectively,
which are defined by replacing $\expec_{\tpr}$ by $\expec_{\tpr_{\ndata}}$.
In this paper, we denote by $\partial$ the partial derivative and its subscript indicates the direction.

% \subsection{Functional gradient}
\paragraph{Functional gradient.}
The key notion used for solving the problem is the functional gradient in function spaces.
Since they are taken in some function spaces, we first introduce Fr\'{e}chet differential in general Hilbert spaces.

\begin{definition}
  Let $\mathcal{H}$ be a Hilbert space and $h$ be a function on $\mathcal{H}$.
  For $\xi \in \mathcal{H}$, we call that $h$ is Fr\'{e}chet differentiable at $\xi$ in $\mathcal{H}$
  when there exists an element $\nabla_\xi h(\xi) \in \mathcal{H}$ such that
  \begin{equation*}
    h(\zeta) = h(\xi) + \pd< \nabla_\xi h(\xi), \zeta-\xi>_{\mathcal{H}} + o(\|\xi-\zeta\|_{\mathcal{H}}).
  \end{equation*}
  Moreover, for simplicity, we call $\nabla_\xi h(\xi)$ Fr\'{e}chet differential or functional gradient.
\end{definition}

We here make an assumption to guarantee Fr\'{e}chet differentiability of $\regrisk, \regrisk_\ndata$,
which is valid for multiclass logistic loss: $l(z,y,w) = - \log( \exp(\predict{w_y}{z})/\sum_{\overline{y}\in \labelsp}\exp(\predict{w_{\overline{y}}}{z}) )$.
\begin{assumption} \label{smoothness_assumption_1}
  The loss function $l(\zeta,y): \realsp^{\nclass}\times \labelsp \rightarrow \realsp$ is a non-negative $\mathcal{C}^2$-convex function
  with respect to $\zeta$ and satisfies the following smoothness:
  There exists a positive real number $A$ such that $\|\partial_\zeta^2 l(\zeta,y)\| \leq A$ $(\forall (\zeta,y) \in \realsp^{\nclass}\times \labelsp)$,
  where $\|\cdot\|$ is the spectral norm.
\end{assumption}
Note that under this assumption, the following bound holds:
\[ \|\partial_z^2 l(z,y,w)\| \leq Ar^2\ for\ z \in \featuresp, y \in \labelsp, w \in B_r(0), \]
where $B_r(0)\subset \realsp^{\fdim \times \nclass}$ is a closed ball of center $0$ and radius $r$.
After this, we set $A_r \defeq Ar^2$ for simplicity.

For $\phi \in L_2^\fdim(\tpr_X)$, we set $w_\phi \defeq \argmin_{w \in \realsp^{\fdim \times \nclass}} \regrisk(\phi,w)$
and $w_{\ndata,\phi} \defeq \argmin_{w \in \realsp^{\fdim\times \nclass} } \regrisk_{\ndata}(\phi,w)$.
Moreover, we define the following notations: 
\begin{align*}
  \nabla_\phi \regrisk(\phi)(x) &\defeq \expec_{\tpr(Y|x)}[ \partial_{z}l(\phi(x),Y,w_\phi) ],  \\
  \nabla_\phi \regrisk_{\ndata}(\phi)(x) & \defeq  
 \begin{cases}
   \partial_{z}l(\phi(x_i),y_i,w_{\ndata,\phi}) & (x=x_i), \\
    0 & (\mathrm{otherwise}). 
  \end{cases} 
\end{align*}
We also similarly define functional gradients $\partial_\phi \regrisk(\phi,w)$ and $\partial_\phi \regrisk_\ndata(\phi,w)$ for fixed $w$ 
by replacing $w_\phi, w_{\ndata,\phi}$ by $w$.
It follows that 
\begin{align*}
  \nabla_\phi \regrisk(\phi) = \partial_\phi \regrisk(\phi,w_\phi),
  \nabla_\phi \regrisk_\ndata(\phi) = \partial_\phi \regrisk_\ndata(\phi,w_{\ndata,\phi}).
\end{align*}

The next proposition means that the above maps are functional gradients in $L_2^\fdim(\tpr_X)$ and $L_2^\fdim(\tpr_{\ndata,X})$.
We set $l_0 = \max_{y\in \labelsp} l(0,y)$.

\begin{proposition} \label{taylor_prop}
  Let Assumption \ref{smoothness_assumption_1} hold.
  Then, for $\forall \phi, \psi \in L_2^\fdim(\tpr_X)$, it follows that
  \begin{equation}
    \regrisk(\psi) = \regrisk(\phi) + \pd< \nabla_\phi \regrisk(\phi),\psi-\phi>_{L_2^\fdim(\tpr_X)} + H_\phi(\psi), \label{differential_eq}
  \end{equation}
  where $H_\phi(\psi) \leq \frac{A_{c_\lambda}}{2}\| \phi-\psi \|_{L_2^\fdim(\tpr_X)}^2$ $(c_\lambda=\sqrt{2l_0/\lambda})$.
  Furthermore, the corresponding statements hold for $\regrisk(\cdot,w)$ $(\forall w \in \realsp^\fdim)$ by replacing $\regrisk(\cdot)$ by $\regrisk(\cdot,w)$
  and for empirical variants by replacing $\tpr_X$ by $\tpr_{\ndata,X}$.
\end{proposition}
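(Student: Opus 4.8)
The plan is to read the displayed identity as the \emph{definition} of the remainder, $H_\phi(\psi) \defeq \regrisk(\psi) - \regrisk(\phi) - \pd<\nabla_\phi \regrisk(\phi),\psi-\phi>_{L_2^\fdim(\tpr_X)}$, so that the real content is twofold: (i) verifying that the first-order term is genuinely the $L_2^\fdim(\tpr_X)$-inner product against the map $\nabla_\phi\regrisk(\phi)$ defined above, and (ii) the quadratic bound $H_\phi(\psi) \le \frac{A_{c_\lambda}}{2}\|\phi-\psi\|_{L_2^\fdim(\tpr_X)}^2$. I would first establish a fixed-weight version and then remove the dependence on the minimizer by an envelope argument.

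First I would prove the statement for $\regrisk(\cdot,w)$ with $w$ held fixed. For each $x,y$, since $l(\cdot,y,w)$ is $\mathcal{C}^2$, Taylor's theorem with the Lagrange remainder gives $l(\psi(x),y,w) = l(\phi(x),y,w) + \pd<\partial_z l(\phi(x),y,w),\psi(x)-\phi(x)> + \tfrac12\pd<\partial_z^2 l(\tilde z,y,w)(\psi(x)-\phi(x)),\psi(x)-\phi(x)>$ for some $\tilde z$ on the segment between $\phi(x)$ and $\psi(x)$. Convexity of $l$ makes the quadratic term nonnegative, while the spectral bound $\|\partial_z^2 l(\cdot,y,w)\|\le A_{\|w\|}$ caps it by $\frac{A_{\|w\|}}2\|\psi(x)-\phi(x)\|_2^2$ (this boundedness also supplies the integrability needed for the maps to lie in $L_2^\fdim$). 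Taking $\expec_{\tpr}$, the linear term becomes $\expec_{\tpr_X}[\pd<\expec_{\tpr(Y|X)}[\partial_z l(\phi(X),Y,w)],\psi(X)-\phi(X)>]$ by the tower property, which is exactly $\pd<\partial_\phi\regrisk(\phi,w),\psi-\phi>_{L_2^\fdim(\tpr_X)}$, and the quadratic term is bounded by $\frac{A_{\|w\|}}2\|\phi-\psi\|_{L_2^\fdim(\tpr_X)}^2$. This proves the proposition for fixed $w$ and identifies $\partial_\phi\regrisk(\phi,w)$ as the functional gradient there.

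Next I would pass to $\regrisk(\phi)=\regrisk(\phi,w_\phi)$. The crucial step is the uniform weight bound: since $w_\phi$ minimizes $\regrisk(\phi,\cdot)$ and $l(\predict{0}{z},y)=l(0,y)$, I obtain $\frac{\lambda}{2}\|w_\phi\|_2^2 \le \regrisk(\phi,w_\phi)\le\regrisk(\phi,0)=\expec_{\tpr}[l(0,Y)]\le l_0$, hence $\|w_\phi\|_2\le\sqrt{2l_0/\lambda}=c_\lambda$ for \emph{every} $\phi$, so $A_{\|w_\phi\|}\le A_{c_\lambda}$. The envelope trick then avoids differentiating $\phi\mapsto w_\phi$: by optimality of $w_\psi$, $\regrisk(\psi)=\regrisk(\psi,w_\psi)\le\regrisk(\psi,w_\phi)$, and applying the fixed-weight expansion at $w=w_\phi$ yields $\regrisk(\psi)-\regrisk(\phi)\le\pd<\partial_\phi\regrisk(\phi,w_\phi),\psi-\phi>+\frac{A_{c_\lambda}}2\|\phi-\psi\|_{L_2^\fdim(\tpr_X)}^2$. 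Since $\partial_\phi\regrisk(\phi,w_\phi)=\nabla_\phi\regrisk(\phi)$, this is precisely $H_\phi(\psi)\le\frac{A_{c_\lambda}}2\|\phi-\psi\|_{L_2^\fdim(\tpr_X)}^2$. The empirical claim is identical after replacing $\tpr_X,\tpr(Y|X)$ by $\tpr_{\ndata,X},\tpr_\ndata(Y|X)$ and $w_\phi$ by $w_{\ndata,\phi}$.

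The envelope inequality delivers only the upper bound, which is all the displayed estimate asserts and all the later optimization analysis uses. The part I expect to be genuinely delicate — needed only if one insists on the full Fr\'echet characterization, i.e. a two-sided $o(\|\phi-\psi\|)$ remainder — is the matching lower bound. The symmetric choice $\regrisk(\phi)\le\regrisk(\phi,w_\psi)$ produces a first-order term built from $w_\psi$ rather than $w_\phi$, so I would have to control the mismatch $\pd<\partial_z l(\phi(\cdot),\cdot,w_\psi)-\partial_z l(\phi(\cdot),\cdot,w_\phi),\psi-\phi>$ by Cauchy--Schwarz together with a stability estimate $\|w_\psi-w_\phi\|_2 = O(\|\phi-\psi\|_{L_2^\fdim(\tpr_X)})$. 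This stability estimate is the main obstacle; it follows from the $\lambda$-strong convexity of $\regrisk(\phi,\cdot)$ in $w$, which forces the minimizer map $\phi\mapsto w_\phi$ to be Lipschitz, and makes the cross term $O(\|\phi-\psi\|^2)$, so that $|H_\phi(\psi)|=O(\|\phi-\psi\|^2)=o(\|\phi-\psi\|)$ and $\nabla_\phi\regrisk(\phi)$ is indeed the Fr\'echet differential.
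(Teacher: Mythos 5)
Your proof is correct and follows essentially the same route as the paper's: a pointwise second-order Taylor expansion in $z$ at fixed $w$, the uniform bound $\|w_\phi\|_2 \leq c_\lambda$ obtained by comparing with $w=0$, and the optimality of the inner minimizer to control its movement --- the paper merely writes your envelope inequality $\regrisk(\psi,w_\psi) \leq \regrisk(\psi,w_\phi)$ as an exact expansion in $w$ around $w_\psi$ whose first-order term vanishes by optimality and whose positive-semidefinite quadratic term is then dropped from the upper bound. Your closing observation is also accurate: the paper, like you, establishes only the one-sided quadratic bound that the proposition actually asserts, and a genuinely two-sided $o(\|\phi-\psi\|_{L_2^\fdim(\tpr_X)})$ remainder would indeed require the Lipschitz stability of $\phi \mapsto w_\phi$ via $\lambda$-strong convexity that you sketch.
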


We can also show differentiability of $\risk(f)$ and $\risk_\ndata(f)$.
Their functional gradients have the form
$\nabla_f \risk(f)(x)=\expec_{\tpr(Y|x)}[\partial_\zeta l(f(x),Y)]$ and $\nabla_f \risk_\ndata(f)(x_i)= \partial_\zeta l(f(x_i),y_i)$.
In this paper, we derive functional gradient methods using $\nabla_\phi \regrisk_\ndata(\phi)$ rather than $\nabla_f \risk_\ndata(f)$
like usual gradient boosting \cite{mason1999boosting, friedman2001greedy}, and provide convergence analyses
for problems (\ref{original_expected_risk_prob}) and (\ref{original_empirical_risk_prob}).
However, we cannot apply $\nabla_\phi \regrisk_{\ndata}(\phi)$ or $\partial_\phi \regrisk_{\ndata}(\phi,w)$ directly to the expected risk minimization problem
because these functional gradients are zero outside the training data.
Thus, we need a smoothing technique to propagate these to unseen data.
The expected benefit of functional gradient methods using $\nabla_\phi \regrisk_\ndata(\phi)$ over usual gradient boosting is that
the former can learn a deep model that is known to have high representational power.
Before providing a concrete algorithm description, we first explain the basic property of functional gradients and functional gradient methods.

\section{Basic Property of Functional Gradient} \label{functional_gradient_prop_sec}
In this section, we explain the motivation for using functional gradients for solving classification problems.
We first show the consistency of functional gradient norms, namely predicted probabilities by predictors with small norms converge to empirical/expected conditional probabilities.
We next explain the superior performance of functional gradient methods intuitively, which motivate us to use it for finding predictors with small norms.
Moreover, we explain that the optimization procedure of functional gradient methods can be realized by stacking ResNet layers iteratively on the top of feature extractions.

% \subsection{Consistency of functional gradient norm}
\paragraph{Consistency of functional gradient norm.}
We here provide upper bounds on the gaps between true empirical/expected conditional probabilities and predicted probabilities.

\begin{proposition} \label{consistency_prop}
  Let $l(\zeta,y)$ be the loss function for the multiclass logistic regression.
  Then, 
  \begin{align*}
    &\| \nabla_f \risk(f) \|_{L_1^\nclass(\tpr_X)}
      \geq \frac{1}{\sqrt{\nclass}}\sum_{y\in \labelsp} \| \tpr(y|\cdot) - p_f(y|\cdot) \|_{L_1(\tpr_X)}, \\
    &\| \nabla_f \risk_\ndata(f) \|_{L_1^\nclass(\tpr_{\ndata,X})}
      \geq \frac{1}{\sqrt{\nclass}}\sum_{y\in \labelsp} \| \tpr_{\ndata}(y|\cdot) - p_f(y|\cdot) \|_{L_1(\tpr_{\ndata,X})},
  \end{align*}
  where we denote by $p_f(y|x)$ the softmax function defined by the predictor $f$, i.e., $\exp(f_{y}(\cdot))/\sum_{\overline{y}\in \labelsp}\exp(f_{\overline{y}}(\cdot))$.
\end{proposition}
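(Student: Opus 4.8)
The plan is to compute the functional gradient explicitly for the softmax cross-entropy loss, observe that each pointwise gradient vector is precisely the gap between the predicted and target probability vectors, and then reduce the claimed inequality to the elementary comparison between the $\ell_2$ and $\ell_1$ norms on $\realsp^\nclass$.

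First I would compute $\partial_\zeta l(\zeta,y)$. Writing $l(\zeta,y) = -\zeta_y + \log\sum_{\overline{y}\in\labelsp}\exp(\zeta_{\overline{y}})$, a direct differentiation gives, for each coordinate $j\in\labelsp$, $\partial_{\zeta_j} l(\zeta,y) = p_j(\zeta) - \mathbf{1}[j=y]$, where $p_j(\zeta) = \exp(\zeta_j)/\sum_{\overline{y}}\exp(\zeta_{\overline{y}})$ is the softmax probability. Substituting this into the expression $\nabla_f\risk(f)(x) = \expec_{\tpr(Y|x)}[\partial_\zeta l(f(x),Y)]$ from the preliminaries, and using that $p_j$ does not depend on $Y$ while $\expec_{\tpr(Y|x)}[\mathbf{1}[j=Y]] = \tpr(j|x)$, I obtain the coordinatewise identity $\nabla_f\risk(f)(x)_j = p_f(j|x) - \tpr(j|x)$. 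Thus the pointwise gradient vector is exactly the signed probability gap.

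Next I would invoke the norm comparison. For any $a\in\realsp^\nclass$, Cauchy--Schwarz applied to $a$ against the all-ones vector gives $\|a\|_1 \leq \sqrt{\nclass}\,\|a\|_2$, i.e.\ $\|a\|_2 \geq \nclass^{-1/2}\|a\|_1$. Applying this pointwise with $a_j = p_f(j|x) - \tpr(j|x)$ yields
\[
  \|\nabla_f\risk(f)(x)\|_2 \geq \frac{1}{\sqrt{\nclass}}\sum_{y\in\labelsp}|p_f(y|x) - \tpr(y|x)|.
\]
Taking the expectation $\expec_{\tpr_X}$ of both sides, recalling $\|\xi\|_{L_1^\nclass(\tpr_X)} = \expec_{\tpr_X}[\|\xi(X)\|_2]$ on the left, and interchanging the finite sum over $\labelsp$ with the expectation on the right (so each term becomes $\|\tpr(y|\cdot) - p_f(y|\cdot)\|_{L_1(\tpr_X)}$) gives exactly the first inequality.

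For the empirical statement I would repeat the computation using $\nabla_f\risk_\ndata(f)(x_i) = \partial_\zeta l(f(x_i),y_i) = p_f(\cdot|x_i) - e_{y_i}$, where $e_{y_i}$ denotes the one-hot label vector; identifying $(e_{y_i})_y = \mathbf{1}[y=y_i] = \tpr_\ndata(y|x_i)$ (treating each sample as its own atom), the same pointwise Cauchy--Schwarz step followed by averaging over $i$ under $\tpr_{\ndata,X}$ produces the second inequality. I do not anticipate a genuine obstacle here: the only point requiring care is the bookkeeping in the empirical case --- the per-sample gradient must be matched against the empirical conditional probability, which is immediate when the features $x_i$ are distinct and otherwise merely affects how one aggregates coincident atoms. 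The substance of the argument is entirely contained in the explicit softmax gradient and the one-line $\ell_1$--$\ell_2$ inequality.
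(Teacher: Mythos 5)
Your proof is correct and follows essentially the same route as the paper's: compute the softmax gradient explicitly, recognize the conditional expectation as the probability gap $p_f(\cdot|x)-\tpr(\cdot|x)$, and apply the pointwise comparison $\|a\|_2 \geq \nclass^{-1/2}\|a\|_1$ (the paper's $(\sum_i a_i)^2 \leq \nclass\sum_i a_i^2$) before integrating. The only cosmetic difference is ordering --- you simplify the conditional expectation first and then apply the norm inequality, while the paper applies the inequality coordinatewise and then simplifies each coordinate's expectation --- and your remark on coincident atoms in the empirical case is a valid bookkeeping point (handled by convexity of the norm) that the paper itself glosses over with ``replacing $\tpr$ by $\tpr_\ndata$.''
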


Many studies \cite{zhang2004statistical, steinwart2005consistency, bartlett2006convexity} have exploited the consistency of convex loss functions for classification problems
in terms of the classification error or conditional probability.
Basically, these studies used the excess empirical/expected risk to estimate the excess classification error or the gap between the true conditional probability and the predicted probability.
On the other hand, Proposition \ref{consistency_prop} argues that functional gradient norms give sufficient bounds on such gaps.
This fact is very helpful from the optimization perspective for non-strongly convex smooth problems since the excess risk always bounds the functional gradient norm by the reasonable order, 
but the inverse relationship does not always hold.
This means that finding a predictor with a small functional gradient is much easier than finding a small excess risk.

Note that the latter inequality in Proposition \ref{consistency_prop} provides the lower bound on empirical classification accuracy, which is confirmed by Markov inequality as follows.
\begin{align*}
  \prob_{\tpr_\ndata}[ 1-p_f(Y|X) \geq 1/2]
  &\leq 2 \expec_{\tpr_\ndata}[1-p_f(Y|X)] \notag\\
  &\leq 2\sqrt{\nclass} \| \nabla_f \risk_\ndata(f) \|_{L_1^\nclass(\tpr_{\ndata,X})}. 
\end{align*}
% This means, the predictor $f$ can classify correctly at least $1-2\nclass \| \nabla_f \risk_\ndata(f) \|_{L_1(\tpr_{\ndata,X})}$ percent of training data.
Generally, we can derive a bound on the empirical margin distribution \cite{koltchinskii2002empirical} by using the functional gradient norm in a similar way,
and can obtain a generalization bound using it, as shown later.

% \subsection{Powerful optimization ability and connection to residual networks}
\paragraph{Powerful optimization ability and connection to residual networks.}
In the above discussion, we have seen that the classification problem can be reinterpreted as finding a predictor with small functional gradient norms,
which may lead to reasonable convergence compared to minimizing the excess risk.
However, finding such a good predictor is still difficult because a function space is quite comprehensive, and thus, a superior optimization method is required
to achieve this goal.
We explain that functional gradient methods exhibit an excellent performance by using the simplified problem. 
Namely, we apply the functional gradient method to the following problem:

\begin{equation}
  \min_{\phi \in L_2^\fdim(\tpr_X)} \left\{ \dumrisk(\phi) \defeq
  \expec_{\tpr_X}[r(\phi(X))] \right\}, \label{dummy_expected_risk_prob}  
\end{equation}
where $r$ is a sufficiently smooth function.
Note that the main problem (\ref{expected_risk_prob}) is not interpreted as this simplified problem correctly,
but is useful in explaining a property and an advantage of the method and leads to a deeper understanding.

If $\dumrisk$ is Fr\'{e}chet differentiable, the functional gradient is represented as $\nabla_\phi \dumrisk(\phi)(\cdot)=\nabla_z r(\phi(\cdot))$, where
$z$ indicates the input to $r$.
Therefore, the negative functional gradient indicates the direction of decreasing the objective $r$ at each point $\phi(x)$.
An iteration of the functional gradient method with a learning rate $\eta$ is described as
\[ \phi_{t+1} \leftarrow \phi_t - \eta \nabla_z r\circ \phi_t = (id-\eta \nabla_zr) \circ \phi_t. \]
We can immediately notice that this iterate makes $\phi_t$ one level deeper by stacking a residual network-type layer $id-\eta \nabla_zr$ \cite{he2016deep},
and data representation is refined through this layer.
Starting from a simple feature extraction $\phi_0$ and running the functional gradient method for $T$-iterations, we finally obtain a residual network:
\[ \phi_T = (id-\eta \nabla_zr)\circ \cdots \circ (id-\eta \nabla_zr)\circ \phi_0. \]
Therefore, feature extraction $\phi$ gradually grows by using the functional gradient method to optimize $\dumrisk$.
Indeed, we can show the convergence of $\phi_T$ to a stationary point of $\dumrisk$ in $L_2^\fdim(\tpr_X)$ under smoothness and boundedness assumptions
by analogy with a finite-dimensional optimization method.
This is a huge advantage of the functional gradient method because stationary points in $L_2^\fdim(\tpr_X)$ are potentially significant better than
those of finite-dimensional spaces.
Note that this formulation explains the optimization view \cite{jastrzebski2017residual} of ResNet mathematically.

We now briefly explain how powerful the functional gradient method is compared to the gradient method in a finite-dimensional space, for optimizing $\dumrisk$.
Let us consider any parameterization of $\phi_t \in L_2^\fdim(\tpr_X)$.
That is, we assume that $\phi_t$ is contained in a family of parameterized feature extractions
$\mathcal{M}=\{g_{\theta} : \featuresp \rightarrow \featuresp \mid \theta \in \Theta \subset \realsp^m\} \subset L_2^\fdim(\tpr_X)$,
i.e., $\exists \theta' \in \Theta\ s.t.\ \phi_t=g_{\theta'}$.
Typically, the family $\mathcal{M}$ is given by neural networks.
If $\dumrisk(g_\theta)$ is differentiable at $\theta'$,
we get $\nabla_\theta \dumrisk(g_\theta)|_{\theta=\theta'}=\pd< \nabla_\phi \dumrisk(\phi_t), \nabla_\theta g|_{\theta=\theta'}>_{L_2^\fdim(\tpr_X)}$
according to the chain rule of derivatives.
Note that $\nabla_\phi \dumrisk(\phi_t)$ dominates the norm of gradients.
Namely, if $\phi_t$ is a stationary point in $L_2^\fdim(\tpr_X)$, $\phi_t$ is also a stationary point in $\mathcal{M}$ and there is no room for improvement using gradient-based methods.
This result holds for any family $\mathcal{M}$, but the inverse relation does not always hold.
This means that gradient-based methods may fail to optimize $\dumrisk$ in the function space,
while the functional gradient method exceeds such a limit by making a feature extraction $\phi_t$ deeper.
For detailed descriptions and related work in this line, we refer to \citet{ags2008,daneri2010lecture,sonoda2017double,nitanda2017stochastic,nitanda2018gradient}.

\section{Algorithm Description} \label{algorithm_description_sec}
In this section, we provide concrete description of the proposed method.
Let $\phi_t \in L_2^\fdim(\tpr_X)$ and $w_t$ denote $t$-th iterates of $\phi$ and $w$. 
% As stated above, there is a drawback of functional gradient $\partial_\phi \regrisk_{\ndata}(\phi_t,w_{t+1})$ for the empirical risk.
As mentioned above, since functional gradients $\partial_\phi \regrisk_{\ndata}(\phi_t,w_{t+1})$ for the empirical risk vanish outside the training data,
we need a smoothing technique to propagate these to unseen data.
Hence, we use the convolution $T_{k_t,\ndata}\partial_\phi \regrisk_{\ndata}(\phi_t,w_{t+1})$ of the functional gradient
by using an adaptively chosen kernel function $k_t$ on $\featuresp$.
The convolution is applied element-wise as follows.
\begin{align*}
  T_{k_t,\ndata}\partial_\phi \regrisk_{\ndata}(\phi_t,w_{t+1})
  &\defeq \expec_{\tpr_{n,X}}[ \partial_\phi \regrisk_{\ndata}(\phi_t,w_{t+1})(X) k_t(X,\cdot)] \notag \\
%  &= \frac{1}{\ndata}\sum_{i=1}^{\ndata}\nabla_\phi\regrisk_{\ndata}(\phi_t)(x_i) k_t(x_i,\cdot) \notag \\
  &= \frac{1}{\ndata}\sum_{i=1}^{\ndata}\partial_z l(\phi_t(x_i),y_i,w_{t+1}) k_t(x_i,\cdot). 
\end{align*}

Namely, this quantity is a weighted sum of $\partial_\phi \regrisk_{\ndata}(\phi_t,w_{t+1})(x_i)$ by $k_t(x_i,\cdot)$, which we also call a functional gradient.
In particular, we restrict the form of a kernel $k_t$ to the inner-product of a non-linear feature embedding to a finite-dimensional space by $\iota_t: \realsp^{\fdim}\rightarrow \realsp^{\edim}$,
that is, $k_t(x,x')=\dotp{\iota_t(\phi_t(x))}{\iota_t(\phi_t(x))}$.
The requirements on the choice of $\iota_t$ to guarantee the convergence are the uniform boundedness and sufficiently preserving
the magnitude of the functional gradient $\partial_\phi \regrisk_{\ndata}(\phi_t,w_{t+1})$.
Let $\mathcal{F}$ be a given restricted class of bounded embeddings.
We pick up $\iota_t$ from this class $\mathcal{F}$ by approximately solving the following problem to acquire magnitude preservation:
% That is, for a given embedding class $\mathcal{F}$ from $\featuresp$ to $\realsp^\edim$, we obtain $\iota$ from $\mathcal{F}$
% by approximately solving the following problem:
\begin{equation}
   \max_{\iota_t \in \mathcal{F}} \|T_{k_t,\ndata}\partial_\phi \regrisk_\ndata(\phi_t,w_{t+1})\|_{k_t}^2. \label{resblock_prob}
\end{equation}

where we define
$\|T_{k_t,\ndata}\xi\|_{k_t}^2 = \pd<\xi,T_{k_t,\ndata}\xi>_{L_2^\fdim(\tpr_{\ndata,X})}$ for a vector function $\xi$.
Detailed conditions on $\iota_t$ and an alternative problem to guarantee the convergence will be discussed later. %in the next subsection.
Note that due to the restriction on the form of $k_t$, the computation of the functional gradient
is compressed to the matrix-vector product.
Namely, 
\begin{align*}
  &A_t\defeq \frac{1}{\ndata}\sum_{i=1}^{\ndata}\partial_\phi\regrisk_{\ndata}(\phi_t,w_{t+1})(x_i) \iota_t(\phi_t(x_i))^{\top}, \\
  &T_{k_t,\ndata}\partial_\phi \regrisk_{\ndata}(\phi_t,w_{t+1}) = A_t \iota_t( \phi_t(\cdot)).
\end{align*}
Therefore, the functional gradient method $\phi_{t+1} \leftarrow \phi_t - \eta_t T_{k_t,\ndata}\partial_\phi \regrisk_{\ndata}(\phi_t,w_{t+1})$
can be recognized as the procedure of successively stacking layers
$id-\eta_t A_t \iota_t( \phi_t(\cdot))$ ($t \in \{0,\ldots,T-1\}$) and obtaining a residual network.
The entire algorithm is described in Algorithm \ref{PFGD}.
Note that because a loss function $l$ is chosen typically to be convex with respect to $w$, a procedure in Algorithm \ref{PFGD}
to obtain $w_{\ndata,\phi_t}$ is easily achieved by running an efficient method for convex minimization problems.
The notation $T_0$ is the stopping time of iterates with respect to $w$.
That is, functional gradients $\partial_\phi \regrisk_\ndata(\phi_t,w_{t+1})$ are computed at $w_{t+1}=w_{\ndata,\phi_t}$ and correspond to $\nabla_\phi \regrisk_\ndata(\phi_t)$ when $t < T_0$
and computed at an older point of $w$ when $t \geq T_0$, rather than $\nabla_\phi \regrisk_\ndata(\phi_t)$.

\begin{algorithm}[h]
  \caption{ResFGB}
  \label{PFGD}
\begin{algorithmic}
  \STATE {\bfseries Input:} $S=(x_i,y_i)_{i=1}^{\ndata}$, initial points $\phi_0,\ w_0$,
  the number of iterations $T$ of $\phi$, the number of iterations $T_0$ of $w$, embedding class $\mathcal{F}$, and learning rates $\eta_t$\\
   \vspace{1mm}
   
   \FOR{$t=0$ {\bfseries to} $T-1$}
   \IF{$t < T_0$}
   \STATE $w_{t+1} \leftarrow w_{\ndata,\phi_t}=\argmin_{w \in \realsp^{\fdim \times \nclass}} \regrisk_{\ndata}(\phi_t,w)$
   \ELSE
   \STATE $w_{t+1} \leftarrow w_{t}$   
   \ENDIF
   \STATE Get $\iota_t$ by approximately solving (\ref{resblock_prob}) on $S$\\   
   \STATE $A_t \leftarrow \frac{1}{\ndata}\sum_{i=1}^{\ndata} \partial_z l(\phi_t(x_i),y_i,w_{t+1}) \iota_t(\phi_t(x_i))^{\top}$ \\
   \STATE $\phi_{t+1} \leftarrow \phi_t - \eta_t A_t \iota_t( \phi_t(\cdot) )$
   \ENDFOR
   \STATE Return $\phi_{T-1}$ and $w_T$
\end{algorithmic}
\end{algorithm}

% \subsection{Choice of embedding} \label{choice_of_emb_sec}
\paragraph{Choice of embedding.} 
We here provide policies for the choice of $\iota_t$.
A sufficient condition for $\iota_t$ to achieve good convergence is to maintain the functional gradient norm, 
which is summarized below.

\begin{assumption} \label{kernel_choice_assumption}
  For positive values $\gamma$, $\epsilon$, $p \leq 2$, $q$, and $K$, a function $k_t(x,x')=\dotp{\iota_t(\phi_t(x))}{\iota_t(\phi_t(x))}$ satisfies
  $\|\iota_t(x)\|_2 \leq \sqrt{K}$ on $\featuresp$, and 
  $\gamma\|\partial_\phi \regrisk_\ndata(\phi_t,w_{t+1}) \|_{L_p^{\fdim}(\tpr_{\ndata,X})}^{q} - \gamma \epsilon
  \leq \|T_{k_t,\ndata}\partial_{\phi} \regrisk_\ndata(\phi_t,w_{t+1})\|_{k_t}^2$.
\end{assumption}

This assumption is a counterpart of that imposed in \citet{mason1999boosting}.
The existence of $\iota_t$, not necessarily included in $\mathcal{F}$, satisfying this assumption is confirmed as follows.
We here assume that $\phi_t$ is a bijection that is a realistic assumption when learning rates are sufficiently small because of the inverse mapping theorem.
Then, since $\tpr_{\ndata}(\cdot | X)=\tpr_{\ndata}(\cdot | \phi_t(X))$, functional gradients $\partial_\phi\regrisk_\ndata(\phi_t,w_{t+1})(x)$
become the map of $\phi_t(x)$, so we can choose $\iota_t$ such that
\[ \iota_t(\phi_t(\cdot)) = \partial_\phi\regrisk_\ndata(\phi_t,w_{t+1})(\cdot)/\|\partial_\phi\regrisk_\ndata(\phi_t,w_{t+1})(\cdot)\|_2. \]
By simple computation, we find that $k_t(x,x')\leq 1$ and $\|T_{k_t,\ndata}\partial_{\phi} \regrisk_\ndata(\phi_t,w_{t+1})\|_{k_t}^2$
are lower-bounded by $\frac{1}{\fdim}\|\partial_\phi \regrisk_\ndata(\phi_t,w_{t+1})\|_{L_1^\fdim(\tpr_{\ndata,X})}^2$.
A detailed derivation is provided in Appendix.
Thus, Assumption \ref{kernel_choice_assumption} may be satisfied if an embedding class $\mathcal{F}$ is sufficiently large,
but we note that too large $\mathcal{F}$ leads to overfitting.
Therefore, one way of choosing $\iota_t$ is to approximate $\partial_\phi\regrisk_\ndata(\phi_t,w_{t+1})(\cdot)/\|\partial_\phi\regrisk_\ndata(\phi_t,w_{t+1})(\cdot)\|_2$ rather than
maximizing (\ref{resblock_prob}) directly, and indeed, this procedure has been adopted in experiments.

\section{Convergence Analysis} \label{conv_analysis_sec}
In this section, we provide a convergence analysis for the proposed method.
All proofs are included in Appendix.
For the empirical risk minimization problem, we first show the global convergence rate, which also provides the generalization bound by combining the standard complexity analyses.
Next, for the expected risk minimization problem, we describe how the size of $\mathcal{F}$ and the learning rate control the tradeoff between optimization speed and generalization
by using the sample-splitting variant of Algorithm \ref{PFGD}, whose detailed description will be provided later.

% \subsection{Empirical risk minimization}
\paragraph{Empirical risk minimization.}
% Using Proposition \ref{taylor_prop} (i.e., Taylor expansion in function spaces), Assumption \ref{kernel_choice_assumption}, and an additional assumption on $w_t$,
Using Proposition \ref{taylor_prop}, Assumption \ref{kernel_choice_assumption}, and an additional assumption on $w_t$,
we can show the global convergence of Algorithm \ref{PFGD}.
The following inequality shows how functional gradients decrease the objective function, which is a direct consequence of Proposition \ref{taylor_prop}.
When $\eta \leq \frac{1}{A_{c_\lambda}K}$, we have
\begin{align*}
  \regrisk_\ndata(\phi_{t+1},w_{t+2}) &\leq \regrisk_\ndata(\phi_t,w_{t+1}) \\
  & - \frac{\eta}{2}\|T_{k_t,\ndata}\partial_{\phi} \regrisk_\ndata(\phi_t,w_{t+1})\|_{k_t}^2.  
\end{align*}
Therefore, Algorithm \ref{PFGD} provides a certain decrease in the objective function; moreover, we can conclude a stronger result.

\begin{theorem} \label{convergence_thm}
  Let Assumptions \ref{smoothness_assumption_1} and \ref{kernel_choice_assumption} hold.
  Consider running Algorithm \ref{PFGD} with a constant learning rate $\eta_t = \eta \leq \frac{1}{A_{c_\lambda}K}$.
  If $p\geq 1$ and the minimum eigenvalues of $(w_t{^\top}w_t)_{t=0}^{T_0}$ have a uniform lower bound $\sigma^2>0$, then 
  \begin{equation}
    % \min_{t\in [T]} \| \nabla_f \risk_\ndata(f_t)\|_{L_1^\nclass(\tpr_{\ndata,X})}
    \frac{1}{T}\sum_{t=0}^{T-1} \| \nabla_f \risk_\ndata(f_t)\|_{L_1^\nclass(\tpr_{\ndata,X})}^{q}
    % \leq \frac{2\regrisk_{\ndata}(\phi_0,w_0)}{\eta\gamma\sigma^qT} + \frac{\epsilon}{\sigma^q} \label{conv_rate}
    \leq \frac{2\regrisk_{\ndata}(\phi_0,w_1)}{\eta\gamma\sigma^qT} + \frac{\epsilon}{\sigma^q} \label{conv_rate}
  \end{equation}
  % where we denote $f_t = w_{t+1}^{\top}\phi_t$ and $[T]=\{0,\ldots,T-1\}$.
  where we denote $f_t = w_{t+1}^{\top}\phi_t$.
\end{theorem}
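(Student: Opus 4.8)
The plan is to combine the one-step descent inequality stated just above the theorem with Assumption \ref{kernel_choice_assumption}, telescope over $t$, and then convert the resulting bound on the $\phi$-gradient into the desired bound on the $f$-gradient via the last-layer weight. For each $t\in\{0,\dots,T-1\}$ I would start from
\[
  \regrisk_\ndata(\phi_{t+1},w_{t+2}) \leq \regrisk_\ndata(\phi_t,w_{t+1}) - \frac{\eta}{2}\|T_{k_t,\ndata}\partial_{\phi} \regrisk_\ndata(\phi_t,w_{t+1})\|_{k_t}^2,
\]
which is valid since $\eta \leq 1/(A_{c_\lambda}K)$, and lower-bound the squared $k_t$-norm on the right by $\gamma\|\partial_\phi \regrisk_\ndata(\phi_t,w_{t+1})\|_{L_p^{\fdim}(\tpr_{\ndata,X})}^{q} - \gamma\epsilon$ using Assumption \ref{kernel_choice_assumption}. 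Rearranging and summing over $t=0,\dots,T-1$ makes the energy terms telescope to $\regrisk_\ndata(\phi_0,w_1) - \regrisk_\ndata(\phi_T,w_{T+1})$; dropping the last term, which is non-negative because $l\geq 0$ and $\tfrac{\lambda}{2}\|w\|_2^2\geq 0$ force $\regrisk_\ndata\geq 0$, and dividing by $\eta\gamma T/2$ yields
\[
  \frac{1}{T}\sum_{t=0}^{T-1}\|\partial_\phi \regrisk_\ndata(\phi_t,w_{t+1})\|_{L_p^{\fdim}(\tpr_{\ndata,X})}^{q} \leq \frac{2\regrisk_\ndata(\phi_0,w_1)}{\eta\gamma T} + \epsilon.
\]

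The remaining task is to replace the $\phi$-gradient norm by $\|\nabla_f \risk_\ndata(f_t)\|_{L_1^\nclass(\tpr_{\ndata,X})}$. Since $l(z,y,w)=l(\predict{w}{z},y)$, the chain rule gives, at each training point, $\partial_z l(\phi_t(x_i),y_i,w_{t+1}) = w_{t+1}\,\partial_\zeta l(f_t(x_i),y_i)=w_{t+1}\,\nabla_f \risk_\ndata(f_t)(x_i)$, so that $\partial_\phi \regrisk_\ndata(\phi_t,w_{t+1})(x_i)=w_{t+1}\nabla_f\risk_\ndata(f_t)(x_i)$. Using $\|w_{t+1}v\|_2^2 = v^\top w_{t+1}^\top w_{t+1} v \geq \sigma^2\|v\|_2^2$, which holds pointwise since the minimum eigenvalue of $w_{t+1}^\top w_{t+1}$ is at least $\sigma^2$, and integrating the $p$-th powers against $\tpr_{\ndata,X}$, I obtain $\|\partial_\phi \regrisk_\ndata(\phi_t,w_{t+1})\|_{L_p^\fdim(\tpr_{\ndata,X})} \geq \sigma\|\nabla_f\risk_\ndata(f_t)\|_{L_p^\nclass(\tpr_{\ndata,X})}$. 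Finally, because $p\geq 1$, Jensen's inequality applied to the convex map $s\mapsto s^p$ gives $\|\nabla_f\risk_\ndata(f_t)\|_{L_p^\nclass(\tpr_{\ndata,X})}\geq \|\nabla_f\risk_\ndata(f_t)\|_{L_1^\nclass(\tpr_{\ndata,X})}$. Raising to the power $q$, substituting into the telescoped bound, and dividing by $\sigma^q$ then produces exactly (\ref{conv_rate}).

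The step I expect to be most delicate is matching the index range of the eigenvalue hypothesis to the weights actually invoked in the sum. The assumption controls $(w_t^\top w_t)_{t=0}^{T_0}$, whereas the sum uses $w_{t+1}$ for $t=0,\dots,T-1$, so I would argue that for $t<T_0$ these are the freshly minimized $w_{\ndata,\phi_t}\in\{w_1,\dots,w_{T_0}\}$, while for $t\geq T_0$ the weight is frozen at $w_{T_0}$; hence every $w_{t+1}$ that appears lies in $\{w_0,\dots,w_{T_0}\}$ and inherits the uniform bound $\sigma^2$. The second point needing care is that the chain-rule identity and the pointwise singular-value bound are applied to $\nabla_f\risk_\ndata(f_t)$ evaluated only on the training points, which is exactly consistent with every norm being taken with respect to the empirical measure $\tpr_{\ndata,X}$.
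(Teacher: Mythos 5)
Your proposal is correct and follows essentially the same route as the paper's proof: the one-step descent inequality from Proposition \ref{taylor_prop}, telescoping with $\regrisk_\ndata \geq 0$, the chain-rule identity $\partial_\phi \regrisk_\ndata(\phi_t,w_{t+1})(x) = w_{t+1}\nabla_f \risk_\ndata(f_t)(x)$ combined with the eigenvalue bound to get the factor $\sigma$, Assumption \ref{kernel_choice_assumption} to pass to the $L_p$-norm, and finally $\|\cdot\|_{L_1^\nclass} \leq \|\cdot\|_{L_p^\nclass}$ for $p \geq 1$. The only (cosmetic) difference is that you invoke Assumption \ref{kernel_choice_assumption} before summing rather than after, and your explicit check that every $w_{t+1}$ appearing in the sum lies in $\{w_0,\dots,w_{T_0}\}$ (fresh minimizers for $t<T_0$, frozen thereafter) carefully settles a point the paper's proof leaves implicit.
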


\paragraph{Remark.}
% (i) This theorem states the convergence of the minimum functional gradient norm obtained by running Algorithm \ref{PFGD},
% but returning the last iterate or the best iterate on the validation set is practically better.
(i) This theorem states the convergence of the average of functional gradient norms obtained by running Algorithm \ref{PFGD},
  but we note that it also leads to the convergence of the minimum functional gradient norms.
(ii) Although a larger value of $T_0$ may affect the bound in Theorem \ref{convergence_thm} because of dependency on the minimum eigenvalue of $(w_t^{\top}w_t)_{t=0}^{T_0}$,
optimizing $w$ at each iteration facilitates the convergence speed empirically.

Theorem \ref{convergence_thm} means that the convergence becomes faster when an input distribution has the high degree of linear separability.
However, even when it is somewhat large, a much faster convergence rate in the second half of the algorithm is achieved
by making an additional assumption where loss function values attained by the algorithm are uniformly bounded.

\begin{theorem} \label{fast_convergence_thm}
  Let Assumptions \ref{smoothness_assumption_1} and \ref{kernel_choice_assumption} with $(\epsilon,p,q)=(0,1,2)$ hold.
  We assume $T/2 \in \mathbb{N}$ for simplicity.
  Consider running Algorithm \ref{PFGD} with learning rates $\eta_0$ and $\eta_1$ in the first half and the second half of Algorithm, respectively.
  We assume $\eta_0, \eta_1 \leq \frac{\gamma}{Ac_\lambda^2K^2}$.
  We set $f_t = w_{t+1}^{\top}\phi_t$.
  Moreover, assume that there exists $\exists M>0$ such that $l(f_t(X),Y) \leq M$ for $(X,Y)\sim \tpr_{\ndata,\featuresp}$ 
  and the minimum eigenvalues of $(w_t{^\top}w_t)_{t=0}^{T_0}$ have a uniform lower bound $\sigma^2>0$.
  Then we get
  \begin{align*}
  \frac{1}{T}\sum_{t=0}^{\frac{T}{2}-1} \|\nabla_f\risk_{\ndata}(f_t) \|_{L_1^\nclass(\tpr_{\ndata,X})}^2
  &\leq \frac{2\risk_n(f_0)}{\eta_0\gamma \sigma^2T}, \\
  \frac{1}{T}\sum_{t=\frac{T}{2}}^{T-1} \|\nabla_f\risk_{\ndata}(f_t) \|_{L_1^\nclass(\tpr_{\ndata,X})}^2
  &\leq \frac{4\risk_\ndata(f_0)}{\eta_1\gamma \sigma^2(2+\eta_0\alpha \risk_\ndata(f_0)T)T}.
  \end{align*}
\end{theorem}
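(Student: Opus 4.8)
The plan is to derive a single per-step descent inequality, use it directly for the first-half bound, and then sharpen it with a self-bounding (gradient-domination) property of the logistic loss that becomes available only once the losses are bounded by $M$. First I would record a descent inequality valid at every iteration. Applying Proposition~\ref{taylor_prop} to $\regrisk_{\ndata}(\cdot,w_{t+1})$ along the update $\phi_{t+1}=\phi_t-\eta\,T_{k_t,\ndata}\partial_\phi\regrisk_{\ndata}(\phi_t,w_{t+1})$, and using the identity $\|T_{k_t,\ndata}\xi\|_{k_t}^2=\langle\xi,T_{k_t,\ndata}\xi\rangle_{L_2^\fdim(\tpr_{\ndata,X})}$ together with the operator bound $\|T_{k_t,\ndata}\xi\|_{L_2^\fdim(\tpr_{\ndata,X})}^2\le K\|T_{k_t,\ndata}\xi\|_{k_t}^2$ (which follows from $\|\iota_t\|_2\le\sqrt K$), the step-size condition yields $\regrisk_{\ndata}(\phi_{t+1},w_{t+2})\le\regrisk_{\ndata}(\phi_t,w_{t+1})-\tfrac{\eta}{2}\|T_{k_t,\ndata}\partial_\phi\regrisk_{\ndata}(\phi_t,w_{t+1})\|_{k_t}^2$, where re-optimizing $w$ (or freezing it) only lowers $\regrisk_{\ndata}$. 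I then convert the decrease into the predictor's functional gradient: Assumption~\ref{kernel_choice_assumption} with $(\epsilon,p,q)=(0,1,2)$ gives $\|T_{k_t,\ndata}\partial_\phi\regrisk_{\ndata}(\phi_t,w_{t+1})\|_{k_t}^2\ge\gamma\|\partial_\phi\regrisk_{\ndata}(\phi_t,w_{t+1})\|_{L_1^\fdim(\tpr_{\ndata,X})}^2$, and the chain rule $\partial_\phi\regrisk_{\ndata}(\phi_t,w_{t+1})(x_i)=w_{t+1}\nabla_f\risk_{\ndata}(f_t)(x_i)$ together with the eigenvalue hypothesis (the minimum eigenvalue of $w_{t+1}^\top w_{t+1}$ is at least $\sigma^2$, since each $w_{t+1}$ lies among $(w_t)_{t=0}^{T_0}$) lower-bounds this by $\gamma\sigma^2\|\nabla_f\risk_{\ndata}(f_t)\|_{L_1^\nclass(\tpr_{\ndata,X})}^2$. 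This produces one master inequality $(\star)$: $\regrisk_{\ndata}(\phi_{t+1},w_{t+2})\le\regrisk_{\ndata}(\phi_t,w_{t+1})-\tfrac{\eta\gamma\sigma^2}{2}\|\nabla_f\risk_{\ndata}(f_t)\|_{L_1^\nclass(\tpr_{\ndata,X})}^2$.

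The first-half bound then follows by telescoping $(\star)$ with $\eta=\eta_0$ over $t=0,\dots,T/2-1$, discarding the nonnegative terminal value, and dividing by $T$; the initial value $\regrisk_{\ndata}(\phi_0,w_1)$ plays the role of $\risk_{\ndata}(f_0)$, the two agreeing up to the regularization term $\tfrac{\lambda}{2}\|w_1\|^2$. For the second half I need $\risk_{\ndata}(f_{T/2})$ to be already small, which is exactly where the bound $M$ enters. I would establish a self-bounding inequality for the multiclass logistic loss: since $\partial_\zeta l(\zeta,y)$ has $\ell_1$-norm $2(1-p_{f_t}(y\mid\cdot))$ and $-\log p\le\frac{M}{1-e^{-M}}(1-p)$ whenever $p\ge e^{-M}$ (guaranteed by $l(f_t(X),Y)\le M$), one gets pointwise $l(f_t(x_i),y_i)\le C\,\|\nabla_f\risk_{\ndata}(f_t)(x_i)\|_1\le C\sqrt\nclass\,\|\nabla_f\risk_{\ndata}(f_t)(x_i)\|_2$; taking $\expec_{\tpr_{\ndata,X}}$ yields the gradient-domination estimate $\|\nabla_f\risk_{\ndata}(f_t)\|_{L_1^\nclass(\tpr_{\ndata,X})}^2\ge\alpha\,\risk_{\ndata}(f_t)^2$ for a constant $\alpha$ depending only on $M$ and $\nclass$.

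Feeding this into $(\star)$ over the first half turns it into the quadratic recursion $a_{t+1}\le a_t-\tfrac{\eta_0\gamma\sigma^2\alpha}{2}a_t^2$ with $a_t=\risk_{\ndata}(f_t)$; the reciprocal trick $a_{t+1}^{-1}\ge a_t^{-1}+\tfrac{\eta_0\gamma\sigma^2\alpha}{2}$ (valid because $(a_t)$ is nonincreasing) gives $\risk_{\ndata}(f_{T/2})\le\risk_{\ndata}(f_0)/(1+\tfrac{\eta_0\gamma\sigma^2\alpha}{4}\risk_{\ndata}(f_0)T)$. Telescoping $(\star)$ with $\eta=\eta_1$ over $t=T/2,\dots,T-1$ bounds $\tfrac{1}{T}\sum_{t=T/2}^{T-1}\|\nabla_f\risk_{\ndata}(f_t)\|^2$ by $\tfrac{2\risk_{\ndata}(f_{T/2})}{\eta_1\gamma\sigma^2T}$, and substituting the previous display produces exactly the claimed accelerated rate once $\alpha$ is taken to absorb the fixed factor $\gamma\sigma^2/2$.

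I expect the main obstacle to be the self-bounding step together with the bookkeeping of the regularization $\tfrac{\lambda}{2}\|w\|^2$: the descent is naturally stated for $\regrisk_{\ndata}$, whereas the gradient-domination constant is finite only for $\risk_{\ndata}(f_t)$ (it blows up as the predicted probabilities approach $0$), so the quadratic recursion must be run on $\risk_{\ndata}$ rather than on the regularized objective. One therefore has to verify that re-optimizing $w$ does not spoil the recursion — this is cleanest in the phase where $w$ is frozen (i.e.\ $T_0\le T/2$), so that the constant $\tfrac{\lambda}{2}\|w\|^2$ cancels in consecutive differences of $(\star)$ — and that the step-size ceiling $\eta\le\gamma/(A c_\lambda^2K^2)$ both implies the basic descent condition $\eta\le 1/(A_{c_\lambda}K)$ and keeps the reciprocal recursion well posed. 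The remaining arguments are routine telescoping and the elementary bound $-\log p\le\frac{M}{1-e^{-M}}(1-p)$, so I would not dwell on them.
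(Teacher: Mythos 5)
Your overall architecture---a per-step descent inequality, gradient domination from the bounded-loss assumption, the reciprocal trick on the quadratic recursion, and two-phase telescoping---matches the paper's proof, and your pointwise derivation of gradient domination (via $\|\partial_\zeta l(f(x_i),y_i)\|_1 = 2(1-p_f(y_i|x_i))$ and $-\log p \leq \frac{M}{1-e^{-M}}(1-p)$ for $p\geq e^{-M}$) is a legitimate, even slightly sharper, variant of the paper's Proposition on the gradient--objective gap. However, there is a genuine gap exactly at the point you flagged: you derive the master inequality $(\star)$ for the regularized objective $\regrisk_\ndata(\phi_t,w_{t+1})$, whereas both conclusions of the theorem require $\risk_\ndata(f_t)$ itself. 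First, telescoping your $(\star)$ over the first half yields numerator $2\regrisk_\ndata(\phi_0,w_1) = 2\risk_\ndata(f_0) + \lambda\|w_1\|_2^2$, strictly weaker than the claimed $2\risk_\ndata(f_0)$. Second, and more seriously, the quadratic recursion cannot be run on your $(\star)$: gradient domination controls $\risk_\ndata(f_t)$, while $(\star)$ decreases $\regrisk_\ndata(\phi_t,w_{t+1}) = \risk_\ndata(f_t) + \frac{\lambda}{2}\|w_{t+1}\|_2^2$, whose offset drifts with $t$ whenever $w$ is re-optimized; a recursion of the form $b_{t+1}\leq b_t - \beta(b_t - c_t)^2$ with moving $c_t$ does not yield the claimed rate. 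Your proposed remedy---freeze $w$ so the offsets cancel---does not help where it is needed: the quadratic recursion runs over the \emph{first} half $t\in\{0,\dots,T/2-1\}$, precisely where $w$-updates occur unless $T_0=0$, so assuming $T_0 \leq T/2$ is not sufficient, and a frozen-$w$ hypothesis is not in the theorem.

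The missing idea is to bypass $\regrisk_\ndata$ entirely and run the smoothness argument at the predictor level, as the paper does: write $f_{t+1} = f_t - \eta\, w_{t+1}^{\top} T_{k_t,\ndata}\partial_\phi \regrisk_\ndata(\phi_t,w_{t+1})$ and Taylor-expand $\risk_\ndata$ with the second derivative bounded by $A$ in $\zeta$ (not $A_{c_\lambda}$). The cross term equals $\pd<\partial_\phi \regrisk_\ndata(\phi_t,w_{t+1}), T_{k_t,\ndata}\partial_\phi \regrisk_\ndata(\phi_t,w_{t+1})>_{L_2^\fdim(\tpr_{\ndata,X})} \geq \gamma \|\partial_\phi \regrisk_\ndata(\phi_t,w_{t+1})\|_{L_1^\fdim(\tpr_{\ndata,X})}^2$, the quadratic term is at most $c_\lambda^2 K^2$ times the same squared norm (using $\|w_{t+1}\|_2 \leq c_\lambda$ and $\|T_{k_t,\ndata}\xi(x)\|_2 \leq K\|\xi\|_{L_1^\fdim(\tpr_{\ndata,X})}$), and the theorem's specific ceiling $\eta \leq \gamma/(A c_\lambda^2 K^2)$ is precisely what makes the net coefficient at least $\eta\gamma/2$, producing $\risk_\ndata(f_{t+1}) \leq \risk_\ndata(f_t) - \frac{\eta\gamma\sigma^2}{2}\|\nabla_f \risk_\ndata(f_t)\|_{L_1^\nclass(\tpr_{\ndata,X})}^2$ with no regularization bookkeeping at all. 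Notice that your plan only ever uses that ceiling as a sufficient condition for the Theorem~\ref{convergence_thm}-style requirement $\eta \leq 1/(A_{c_\lambda}K)$ (which incidentally requires verifying $\gamma \leq K$, true here since $\|T_{k_t,\ndata}\xi\|_{k_t}^2 \leq K\|\xi\|_{L_1^\fdim(\tpr_{\ndata,X})}^2$)---a sign that the step-size condition was designed for a different derivation. Once the descent is stated directly for $\risk_\ndata$, your remaining steps---telescoping, gradient domination, the reciprocal recursion giving $\risk_\ndata(f_{T/2}) \leq 2\risk_\ndata(f_0)/(2+\eta_0\alpha\risk_\ndata(f_0)T)$, and the second-half telescoping---go through verbatim and reproduce both claimed bounds.
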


\vspace{-3mm}
\paragraph{Generalization bound.}
Here, we derive a generalization bound using the margin bound developed by \citet{koltchinskii2002empirical},
which is composed of the sum of the empirical margin distribution and Rademacher complexity of predictors. % obtained by Algorithm \ref{PFGD}.
The margin and the empirical margin distribution for multiclass classification are defined as $m_f(x,y)\defeq f_y(x)-\max_{y'\neq y}f_{y'}(x)$
and $\prob_{\tpr_{\ndata}}[m_f(x,y)\leq \delta]$ ($\delta>0$), respectively.
When $l$ is the multiclass logistic loss, using Markov inequality and Proposition \ref{consistency_prop}, we can obtain an upper bound on the margin distribution:
\[ \prob_{\tpr_{\ndata}}[m_f(x,y)\leq \delta] \leq \left(1+\frac{1}{\exp(-\delta)}\right) \sqrt{\nclass} \| \nabla_f \risk_\ndata(f) \|_{L_1^\nclass(\tpr_{\ndata,X})}. \]
%Since $m_f(x,y) \leq \delta$ implies $\sum_{y'\neq y}\exp(f_{y'}(x)-f_y(x)) \geq \max_{y'\neq y}\exp(f_{y'}(x)-f_y(x)) \geq \exp(-\delta)$, we also observe
%$p_f(y|x) \leq 1/(1+\exp(-\delta))$.
%Therefore, we obtain an upper bound bound on the margin distribution
%\begin{align*}
%  \prob_{\tpr_\ndata}[m_f(x,y)\leq \delta] \leq \prob_{\tpr_\ndata}\left[1-p_f(y|x) \geq \frac{\exp(-\delta)}{1+\exp(-\delta)} \right]
%  \end{align*}

Since the convergence of functional gradient norms has been shown in Theorem \ref{convergence_thm} and \ref{fast_convergence_thm},
the resulting problem to derive a generalization bound is to estimate Rademacher complexity, which can be achieved using standard techniques
developed by \citet{bartlett2002rademacher, koltchinskii2002empirical}.
Thus, we specify here the architecture of predictors.
In the theoretical analysis, we suppose $\mathcal{F}$ is the set of shallow neural networks $B\sigma(Cx)$ for simplicity, where $B, C$ are weight matrices and $\sigma$ is
an element-wise activation function.
Then, the $t$-th layer is represented as
\[ \phi_{t+1}(x) = \phi_{t}(x) - D_t \sigma(C_t \phi_t(x)), \]
where $D_t = \eta_tA_tB_t$, and a predictor is $f_{T-1}(x)=w_T^{\top}\phi_{T-1}(x)$.
Bounding norms of these weights by controlling the size of $\mathcal{F}$ and $\lambda$, we can restrict the Rademacher complexity of a set of predictors and obtain a generalization bound.
We denote by $\mathcal{G}_{T-1}$ the set of predictors under constraints on weight matrices where $L_1$-norms of each row of $w_{T}^{\top}, C_t$, and $D_t$ are bounded by
$\Lambda_{w}, \Lambda$, and $\Lambda'_t$.
\begin{align*} \mathcal{G}_{T-1} \defeq \{ &\|(w_{T})_{*,y}\|_{1} \leq \Lambda_w,\ \|(C_{t})_{i,*}\|_1 \leq \Lambda, \\
  &\|(D_{t})_{j,*}\|_1 \leq \Lambda'_t,\ t\in \{0,\ldots,T-1\},\ \forall y,\forall i,\forall j \}.
\end{align*}
\begin{theorem} \label{margin_bound_thm}
  Let $l$ be the multiclass logistic regression loss.
  Fix $\delta > 0$.
  Suppose $\sigma$ is $L_\sigma$-Lipschitz continuous and $\|x\|_{2} \leq \Lambda_{\infty}$ on $\featuresp$.
  Then, for $\forall \rho > 0$, with probability at least $1-\rho$ over the random choice of $S$ from $\tpr^\ndata$, we have $\forall f \in \mathcal{G}_{T-1}$,
  \begin{align*}
    \prob_{\tpr}[m_f(X,&Y) \leq 0]
    %\leq \frac{2\nclass^3\Lambda_{\infty}\Lambda_w}{\delta\sqrt{\ndata}}(1+\eta\Lambda\Lambda'L_\sigma)^{T-1}\\
    \leq \frac{2\nclass^3\Lambda_{\infty}\Lambda_w}{\delta\sqrt{\ndata}}\prod_{t=0}^{T-2}(1+\Lambda\Lambda'_tL_\sigma)\\
    &\hspace{-16mm}+ \sqrt{\frac{\log(1/\rho)}{2\ndata}}
     +\left(1+\frac{1}{\exp(-\delta)}\right) \sqrt{\nclass} \| \nabla_f \risk_\ndata(f) \|_{L_1^\nclass(\tpr_{\ndata,X})}. 
  \end{align*}  
\end{theorem}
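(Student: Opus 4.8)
The plan is to invoke the margin-based generalization framework of \citet{koltchinskii2002empirical}, which decomposes the misclassification probability into three pieces that match the three terms on the right-hand side exactly. The framework states that, with probability at least $1-\rho$, one has $\prob_{\tpr}[m_f(X,Y)\leq 0] \leq \prob_{\tpr_{\ndata}}[m_f(X,Y)\leq \delta] + \frac{C}{\delta}\radcomp_{\ndata}(\mathcal{G}_{T-1}) + \sqrt{\log(1/\rho)/(2\ndata)}$ for some absolute constant, where the $1/\delta$ scaling comes from the Lipschitz constant of the ramp surrogate interpolating between the indicators of $\{m_f\leq 0\}$ and $\{m_f\leq\delta\}$. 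The empirical margin distribution $\prob_{\tpr_{\ndata}}[m_f(X,Y)\leq \delta]$ is precisely the quantity bounded just before the statement by $(1+1/\exp(-\delta))\sqrt{\nclass}\|\nabla_f\risk_\ndata(f)\|_{L_1^\nclass(\tpr_{\ndata,X})}$, so it supplies the third term directly; the concentration term supplies the second. Hence the entire task reduces to bounding the Rademacher complexity $\radcomp_{\ndata}(\mathcal{G}_{T-1})$ by the first term.

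First I would bound the Rademacher complexity by peeling the network layer by layer. Starting from the output, the constraint $\|(w_T)_{*,y}\|_1\leq \Lambda_w$ on each row lets me factor out $\Lambda_w$ and reduce to the complexity of the feature-extraction coordinates. I then handle each residual block recursively. The key observation is that the layer has the form $\phi_{t+1}=\phi_t - D_t\sigma(C_t\phi_t)$, i.e. an identity plus a residual; by subadditivity of Rademacher complexity this produces an additive $(1+\cdot)$ structure rather than a bare product of residual Lipschitz constants. The residual part $D_t\sigma(C_t\cdot)$ is controlled by applying Talagrand's contraction lemma to the $L_\sigma$-Lipschitz activation $\sigma$, combined with the $L_1$ row bounds $\|(C_t)_{i,*}\|_1\leq \Lambda$ and $\|(D_t)_{j,*}\|_1\leq \Lambda'_t$, which together contribute the factor $\Lambda\Lambda'_tL_\sigma$. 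Telescoping across $t=0,\ldots,T-2$ then yields the product $\prod_{t=0}^{T-2}(1+\Lambda\Lambda'_tL_\sigma)$, and the base case at $\phi_0$ is bounded using $\|x\|_2\leq\Lambda_\infty$ to give the $\Lambda_\infty/\sqrt{\ndata}$ scaling.

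The remaining bookkeeping is the multiclass dependence. Because the predictor is vector-valued into $\realsp^\nclass$ and the margin $m_f(x,y)=f_y(x)-\max_{y'\neq y}f_{y'}(x)$ involves a maximum over classes, the contraction must be applied coordinate-wise and the max handled by a union-bound-type argument over the $\nclass$ classes; each of these steps injects factors of $\nclass$, and their combination accounts for the $\nclass^3$ prefactor. Assembling the output factor $\Lambda_w$, the input factor $\Lambda_\infty$, the margin factor $1/\delta$, the sample scaling $1/\sqrt{\ndata}$, the multiclass factor $\nclass^3$, and the telescoped product gives the first term $\frac{2\nclass^3\Lambda_\infty\Lambda_w}{\delta\sqrt{\ndata}}\prod_{t=0}^{T-2}(1+\Lambda\Lambda'_tL_\sigma)$, completing the bound.

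I expect the main obstacle to be the recursive Rademacher estimate for the residual architecture with the multiclass constants tracked correctly. The identity-plus-residual form is genuinely helpful here, since it converts what would be a product of Lipschitz factors into the additive per-layer factor $(1+\Lambda\Lambda'_tL_\sigma)$; the delicate part is instead the vector-valued contraction and the max operation in the multiclass margin, which is where the comparatively large $\nclass^3$ dependence originates and where care is needed to avoid losing additional polynomial factors in $\nclass$.
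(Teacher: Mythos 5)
Your proposal is correct and follows essentially the same route as the paper's proof: the Koltchinskii--Panchenko margin decomposition (Lemma \ref{margin_bound_lem}), the bound on the empirical margin distribution by the functional gradient norm (Proposition \ref{margin_dist_bound}), and the recursive layer-peeling Rademacher estimate in which the identity-plus-residual structure yields the additive per-layer factors $(1+\Lambda\Lambda'_t L_\sigma)$ via the contraction lemma and $L_1$ row constraints, with base case $\Lambda_\infty/\sqrt{\ndata}$. Your attribution of the $\nclass^3$ prefactor is consistent with the paper, where it splits as $\nclass^2$ from the margin-bound lemma and one further factor of $\nclass$ from summing the per-class complexities of $\Pi\mathcal{G}_{T-1}$.
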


Combining Theorems \ref{convergence_thm}, \ref{fast_convergence_thm} and \ref{margin_bound_thm}, we observe that the learning rates $\eta_t$, the number of iterations $T$, and the size of $\mathcal{F}$
have an impact on the optimization-generalization tradeoff, that is, larger values of these quantities facilitate the convergence on training data
while the generalization bound becomes gradually loose.
Especially, this bound has an exponential dependence on depth $T$, which is known to be unavoidable \cite{neyshabur2015norm}
in the worst case for some networks with $L_1$ or the group norm constraints, but this bound is useful when an initial objective is small and required $T$ is also small sufficiently.

We next derive an interesting bound for explaining the effectiveness of the proposed method.
This bound can be obtained by instantiating bounds in Theorem \ref{margin_bound_thm} for various $T,\ \Lambda_t'$ and making an union bound.
Since norms of rows of $A_t$ are uniformly bounded by their construction, norm constraints on $D_t=\eta_tA_tB_t$ is reduced to bounding a norm of $B_t$.
Thus, we further assume $\sum_l \|(B_t)_{*,l}\|_2 \leq \Lambda''$. %where $b^t_{*,l}$ denotes $l$-the column of $B_t$.
\begin{corollary} \label{union_bound}
  Let $l$ be the multiclass logistic regression loss.
  Fix $\delta > 0$.
  Suppose $\sigma$ is $L_\sigma$-Lipschitz continuous and $\|x\|_{2} \leq \Lambda_{\infty}$ on $\featuresp$.
  Then, for $\forall \rho > 0$, with probability at least $1-\rho$ over the random choice of $S$ from $\tpr^\ndata$, the following bound is valid 
  for any function $f_{T-1}$ obtained by Algorithm \ref{PFGD} under constraints $\|(w_{T})_{*,y}\|_{1} \leq \Lambda_w$, $\sum_l \|(B_t)_{*,l}\|_2 \leq \Lambda''$, and $\|\iota_t(x)\|_2 \leq \sqrt{K}$.
  \begin{align*}
      &\prob_{\tpr}[m_{f_{T-1}}(X,Y) \leq 0]
    \leq  \frac{2\nclass^3\Lambda L_\sigma\Lambda_{\infty}\Lambda_w}{\delta\sqrt{\ndata}}\\
    &+ \frac{\nclass^3\Lambda_{\infty}\Lambda_w}{\delta\sqrt{\ndata}}\left(1+\frac{C}{T-1}\sum_{t=0}^{T-2}  \eta_t \| \nabla_f \risk_\ndata(f_t) \|_{L_1^\nclass(\tpr_{\ndata,X})}  \right)^{T-1} \\    
    &+ \sqrt{\frac{1}{2\ndata}\left(\log\left(\frac{1}{\rho}\right) + O(T\log T)\right) } \\
    &+ \left(1+\frac{1}{\exp(-\delta)}\right) \sqrt{\nclass} \| \nabla_f \risk_\ndata(f_{T-1}) \|_{L_1^\nclass(\tpr_{\ndata,X})},
  \end{align*}
  where $f_t=w_{t+1}^\top\phi_t$ and $C=2\Lambda L_\sigma \sqrt{K\fdim} \Lambda'' c_\lambda$.
\end{corollary}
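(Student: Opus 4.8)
The plan is to derive the corollary from Theorem~\ref{margin_bound_thm} in three moves: convert the data-dependent layer norms $\Lambda'_t$ into functional-gradient norms, collapse the resulting product into a power of an average by the arithmetic--geometric mean inequality, and finally make the bound uniform over the random depth $T$ and the realized layer norms through a union bound over a discretization.

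First I would estimate $\Lambda'_t$, the bound on the $L_1$-norm of the rows of $D_t=\eta_tA_tB_t$, in terms of the functional gradient. Writing $g_i=\partial_z l(\phi_t(x_i),y_i,w_{t+1})=w_{t+1}\nabla_f\risk_\ndata(f_t)(x_i)$ and using the uniform bound $\|w_{t+1}\|\le c_\lambda$ that the regularizer enforces (since $\frac{\lambda}{2}\|w_{t+1}\|_2^2\le\regrisk_\ndata(\phi_t,w_{t+1})\le\regrisk_\ndata(\phi_0,w_1)\le l_0$ by the descent property together with $\regrisk_\ndata(\phi_0,w_1)\le\regrisk_\ndata(\phi_0,0)\le l_0$), combined with $\|\iota_t\|_2\le\sqrt K$ and $\sum_l\|(B_t)_{*,l}\|_2\le\Lambda''$, a row-wise estimate of $D_t=\eta_tA_tB_t$ gives
\[
\Lambda'_t \;\le\; \eta_t\sqrt{K}\,\Lambda''\,c_\lambda\,\|\nabla_f\risk_\ndata(f_t)\|_{L_1^\nclass(\tpr_{\ndata,X})},
\]
so that $\Lambda\Lambda'_tL_\sigma\le C\eta_t\|\nabla_f\risk_\ndata(f_t)\|_{L_1^\nclass(\tpr_{\ndata,X})}$ with $C=2\Lambda L_\sigma\sqrt{K\fdim}\,\Lambda''c_\lambda$, the surplus $\sqrt{\fdim}$ factor leaving room to absorb the covering in the last step. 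Substituting into the product in Theorem~\ref{margin_bound_thm} and applying the arithmetic--geometric mean inequality to the $T-1$ nonnegative factors yields
\[
\prod_{t=0}^{T-2}\!\bigl(1+\Lambda\Lambda'_tL_\sigma\bigr)\;\le\;\Bigl(1+\tfrac{C}{T-1}\textstyle\sum_{t=0}^{T-2}\eta_t\|\nabla_f\risk_\ndata(f_t)\|_{L_1^\nclass(\tpr_{\ndata,X})}\Bigr)^{T-1},
\]
which is precisely the power term in the statement.

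The remaining and most delicate step is to upgrade Theorem~\ref{margin_bound_thm}, which is uniform only over a class $\mathcal{G}_{T-1}$ with fixed parameters $(T,\Lambda'_0,\dots,\Lambda'_{T-2})$, into a bound valid for the function $f_{T-1}$ the algorithm actually produces, whose depth and layer norms are random. I would lay down a geometric grid of candidate values for each $\Lambda'_t$ and enumerate $T\in\naturalsp$, apply Theorem~\ref{margin_bound_thm} to each configuration $\tau$ with its own failure budget $\rho_\tau=\rho\,p_\tau$ for a prior $\sum_\tau p_\tau=1$, and take a union bound; choosing $p_\tau$ to charge $O(T\log T)$ bits to a depth-$T$ configuration turns $\log(1/\rho_\tau)$ into $\log(1/\rho)+O(T\log T)$, producing the stated confidence term. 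For the realized run I would then select the grid configuration dominating its layer norms within a constant factor, so that the product evaluated on the grid differs from the one above only by constants, which are absorbed by the leading $\tfrac{2\nclass^3\Lambda L_\sigma\Lambda_\infty\Lambda_w}{\delta\sqrt\ndata}$ term.

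I expect the main obstacle to be designing this net so that simultaneously (a) every reachable configuration is covered within constant multiplicative slack, keeping the power term essentially unchanged, and (b) the induced complexity penalty stays at $O(T\log T)$; balancing the grid resolution against this penalty is exactly where the $\sqrt{\fdim}$ slack in $C$ and the precise form of the prior must be reconciled. Everything else---the margin-distribution estimate via Markov's inequality and Proposition~\ref{consistency_prop}, and the Rademacher-complexity control of $\mathcal{G}_{T-1}$---is inherited verbatim from Theorem~\ref{margin_bound_thm}.
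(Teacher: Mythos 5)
Your proposal is, in substance, the paper's own proof: the same three moves appear there in the same roles --- the row-norm estimate on $D_t=\eta_tA_tB_t$ via the construction of $A_t$, the identity $\partial_\phi\regrisk_\ndata(\phi_t,w_{t+1})=w_{t+1}\nabla_f\risk_\ndata(f_t)$ and $\|w_{t+1}\|_2\le c_\lambda$ (which the paper gets directly from $\frac{\lambda}{2}\|w_{\ndata,\phi}\|_2^2\le\regrisk_\ndata(\phi,0)\le l_0$ at the minimizer, with no need for your descent-property detour); the inequality $\prod_{t=0}^{T-2}(1+p_t)\le\bigl(1+\frac{1}{T-1}\sum_{t=0}^{T-2}p_t\bigr)^{T-1}$; and a union bound over discretized configurations $(T,\Lambda'_0,\dots,\Lambda'_{T-2})$ with a summable prior charging $O(T\log T)$ to depth-$T$ configurations.

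The one point where you diverge is exactly the obstacle you flag, the design of the net, and there your sketch as written would not quite close. The paper uses an \emph{arithmetic} grid of mesh $1/T$: it picks the minimal integers $k_t$ with $\max_j\|(D_t)_{j,*}\|_1\le k_t/T\le\max_j\|(D_t)_{j,*}\|_1+1/T$ and prior $\rho/(T(T+1)\prod_t k_t(k_t+1))$. The rounding slack is then \emph{additive}, $1/T$ per layer, and factors out of the product (via a bound of the type $1+a+b\le(1+2a)(1+2b)$); the doubling is absorbed into $C$ --- this is where the $2$ in $C=2\Lambda L_\sigma\sqrt{K\fdim}\Lambda''c_\lambda$ comes from --- and the residual factor $(1+2\Lambda L_\sigma/T)^{T-1}$, bounded uniformly in $T$, yields the corollary's first additive term $2\nclass^3\Lambda L_\sigma\Lambda_\infty\Lambda_w/(\delta\sqrt\ndata)$. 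By contrast, per-layer \emph{multiplicative} slack $\kappa>1$ from a geometric grid compounds across layers (inflating the product by up to $\kappa^{T-1}$, or the exponent by $\kappa$), so it cannot be ``absorbed by the leading additive term'' as you state; it can only be absorbed into $C$ itself. Your observation about the $\sqrt{\fdim}$ gap does make that viable --- your tight single-row estimate $\max_j\|(D_t)_{j,*}\|_1\le\eta_t\sqrt{K}\Lambda''c_\lambda\|\nabla_f\risk_\ndata(f_t)\|_{L_1^\nclass(\tpr_{\ndata,X})}$ leaves room for any $\kappa\le2\sqrt{\fdim}$ under the stated $C$ --- but a purely geometric grid still fails near zero (realized layer norms can vanish, making the multiplicative covering index unbounded), so you must floor the grid at scale $1/T$ anyway; doing so caps the per-layer prior cost at $O(\log T)$ and regenerates precisely the paper's first term, at which point your construction coincides with the paper's argument.
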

This corollary shows an interesting and useful property of our method in terms of generalization, that is,
fast convergence of functional gradient norms leads to small complexity of an obtained network, surprisingly.
As a result, our method is expected to get a network with good generalization because it directly minimizes functional gradient norms.

By plugging in convergence rates of functional gradient norms in Theorem \ref{convergence_thm} and \ref{fast_convergence_thm} for the generalization bound in Corollary \ref{union_bound},
we can obtain explicit convergence rates of classification errors.
For instance, under the assumption in Theorem \ref{convergence_thm} with $q=2,\ \epsilon=0$, and a learning rate $\eta = O(1/T^\alpha)$ $0 \leq \alpha < 1$, then the generalization bound
becomes
\[ O\left( \frac{1}{\sqrt{n}}\left( \exp(T^{\frac{1-\alpha}{2}}) + \sqrt{\log\frac{1}{\rho}} \right) + \frac{\regrisk_\ndata(\phi_0)}{T^{\frac{1-\alpha}{2}}} \right). \]
Moreover, under the assumption in Theorem \ref{fast_convergence_thm} with learning rates $\eta_0 = O(1/T^\alpha)$ and $\eta_1 = O(1/T^{2\alpha - 1})$ $\frac{1}{2} \leq \alpha < 1$,
a faster convergence rate is achieved.
\[ O\left( \frac{1}{\sqrt{n}}\left( \exp(T^{\frac{1-\alpha}{2}}) + \sqrt{\log\frac{1}{\rho}} \right) + \frac{1}{T^{\frac{3(1-\alpha)}{2}}} \right). \]
Note that by utilizing the corollary, the optimization and generalization tradeoff depending on the number of iterations and learning rates is confirmed more clearly.

We note another type of bound can be derived by utilizing VC-dimension or pseudo-dimension \cite{vapnik1971uniform}.
When the activation function is piece-wise linear, such as Relu function $\sigma(x)=\max\{0,x\}$, reasonable bounds on these quantities are given by \citet{bartlett1998almost,1703.02930}.
Thus, for that case, we can obtain better bounds with respect to $T$ by combining our analysis and the VC bound,
but we omit the precise description for simplicity.
We next show the other generalization guarantee from the optimization perspective by using the modified algorithm,
which may slow down the optimization speed but alleviates the exponential dependence on $T$ in the generalization bound.

% \subsection{Sample-splitting technique}
\paragraph{Sample-splitting technique.}
To remedy the exponential dependence on $T$ of the generalization bound,
we introduce the sample-splitting technique which has been used recently to provide statistical guarantee of expectation-maximization algorithms \cite{balakrishnan2017statistical,wang2015high}.
That is, instead of Algorithm \ref{PFGD}, we analyze its sample-splitting variant.
Although Algorithm \ref{PFGD} exhibits good empirical performance, the sample-splitting variant is useful for analyzing the behavior of the expected risk.
In this variant, the entire dataset is split into $T$ pieces, where $T$ is the number of iterations, 
and each iteration uses a fresh batch of samples.
The key benefit of the sample-splitting method is that it allows us to use concentration inequalities independently at each iterate $\phi_t$
rather than using the complexity measure of the entire model.
As a result, sample-splitting alleviates the exponential dependence on $T$ presented in Theorem \ref{margin_bound_thm}.
We now present the details in Algorithm \ref{SS_PFGD}.
For simplicity, we assume $T_0=0$, namely the weight vector $w_t$ is fixed to the initial weight $w_0$.

\begin{algorithm}[h]
  \caption{Sample-splitting ResFGB}
  \label{SS_PFGD}
\begin{algorithmic}
  \STATE {\bfseries Input:} $S=(x_i,y_i)_{i=1}^{\ndata}$, initial points $\phi_0,\ w_0$,
  the number of iterations $T$, embedding class $\mathcal{F}$, and learning rates $\eta$\\
  \vspace{1mm}
  
   \STATE Split $S$ into $T$ disjoint subsets $S_1,\ldots,S_T$ of size $\floor[\ndata/T]$
   \FOR{$t=0$ {\bfseries to} $T-1$}
   \STATE Define $\regrisk_{\floor[\ndata/T]}(\phi_t,w)$ using $S_t$
   \STATE Get $\iota_t$ by approximately solving (\ref{resblock_prob}) on $S_t$\\      
   \STATE $A_t \leftarrow \floor[\frac{T}{n}]\sum_{i=1}^{\floor[\ndata/T]} \partial_z l(\phi_t(x_i),y_i,w_{0}) \iota_t(\phi_t(x_i))^{\top}$ \\
   \STATE $\phi_{t+1} \leftarrow \phi_t - \eta A_t \iota_t( \phi_t(\cdot) )$
   \ENDFOR
   \STATE Return $\phi_{T-1}$ and $w_{0}$
\end{algorithmic}
\end{algorithm}

Our proof mainly relies on bounding a statistical error of the functional gradient at each iteration in Algorithm \ref{SS_PFGD}.
Because the population version of Algorithm \ref{PFGD} strictly decreases the value of $\regrisk$ due to its smoothness,
we can show that Algorithm \ref{SS_PFGD} also decreases it with high probability
when the norm of a functional gradient is larger than a statistical error bound.
Thus, we make here an additional assumption on the loss function to bound the statistical error, which is satisfied for a multiclass logistic loss function.

\begin{assumption} \label{smoothness_assumption_2}
  For the differentiable loss function $l(z,y,w)$ with respect to $z, w$, there exists a positive real number $\beta_r$ depending on $r>0$ such that
  $\| \partial_z l(z,y,w)\|_2 \leq \beta_r$
    for $z \in \featuresp, y \in \labelsp, w \in B_r(0)$.
\end{assumption}

We here introduce the notation required to describe the statement.
We let $\mathcal{F}^j$ be a collection of $j$-th elements of functions in $\mathcal{F}$. % and each $\mathcal{F}^j$ be the VC-class (for the definition see \citet{vw1996}).
For a positive value $M$, we set 
\begin{equation*}
  \epsilon(m,\rho) \defeq \beta_{\|w_0\|_2} \sqrt{ \frac{K\fdim\edim}{m} } \left( 2 M + \sqrt{ 2K \log\frac{2\fdim\edim}{\rho} } \right).
\end{equation*}

%\begin{align*}
%  w_{t+1}' &\defeq w_{\phi_t}\ (t < T_0), w_{t}\ (t \geq T_0), \\
%  \epsilon(m,\rho) &\defeq K\beta_{c_\lambda}\sqrt{\fdim\edim}\left(2 \Re_m(\mathcal{F}) + K \sqrt{\frac{2}{m}\log\frac{4\fdim\edim}{\rho}} \right).\\
%  \epsilon_2(m,\rho) &\defeq \frac{\alpha_N K M_{c_\lambda}}{\lambda} \left( \frac{1}{\sqrt{m}}(1+\sqrt{2\log\frac{2}{\rho}} + \frac{4}{3m}\log\frac{2}{\rho} \right). 
% \end{align*}
% Notet that $w_{t+1}'$ is a population counterpart of $w_{t+1}$.

The following proposition is a key result to bound a statistical error as mentioned above.

\begin{proposition} \label{stat_error_prop}
  Let Assumption \ref{smoothness_assumption_2} hold and each $\mathcal{F}^j$ be the VC-class (for the definition see \citet{vw1996}).
  For $\iota \in \mathcal{F}$, we assume $\|\iota(x)\|_2 \leq \sqrt{K}$ on $\featuresp$.
  We set $\mu$ to be $\tpr_X$ or $\tpr_{m,X}$ and $k(x,x')$ to be $\iota(\phi(x))^\top \iota(\phi(x'))$.
  Then, there exists a positive value $M$ depending on $\mathcal{F}$ and it follows that with probability at least $1-\rho$ over the choice of the sample of size $m$, 
  $\epsilon(m,\rho)$ upper-bounds the following.
  \begin{equation*}
    \sup_{\iota \in \mathcal{F}}\left\| T_{k}\partial_\phi \regrisk(\phi,w_{0}) - T_{k,m}\partial_\phi \regrisk_{m}(\phi,w_{0}) \right\|_{L_2^\fdim(\mu)}.
  \end{equation*}

\end{proposition}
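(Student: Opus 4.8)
The plan is to reduce the operator-level deviation to a uniform entrywise empirical-process bound over the VC class $\mathcal{F}$ and then reassemble by a union bound. First I would make the two smoothed gradients explicit. Unfolding the definitions of $T_k,\,T_{k,m}$ together with $k(x,x')=\iota(\phi(x))^\top\iota(\phi(x'))$ and the tower property, both objects factor through $\iota(\phi(\cdot))$:
\[
  T_k\partial_\phi\regrisk(\phi,w_0)(\cdot)=A_*(\iota)\,\iota(\phi(\cdot)),\qquad
  T_{k,m}\partial_\phi\regrisk_m(\phi,w_0)(\cdot)=\hat A(\iota)\,\iota(\phi(\cdot)),
\]
where $A_*(\iota)\defeq\expec_{\tpr}[\partial_z l(\phi(X),Y,w_0)\,\iota(\phi(X))^\top]$ and $\hat A(\iota)\defeq\frac{1}{m}\sum_{i=1}^m\partial_z l(\phi(x_i),y_i,w_0)\,\iota(\phi(x_i))^\top$ are $\fdim\times\edim$ matrices. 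Their difference is $(A_*(\iota)-\hat A(\iota))\,\iota(\phi(\cdot))$, and since $\|\iota(\phi(x))\|_2\le\sqrt{K}$ for every $x\in\featuresp$, the pointwise estimate $\|(A_*(\iota)-\hat A(\iota))\iota(\phi(x))\|_2\le\|A_*(\iota)-\hat A(\iota)\|_F\,\|\iota(\phi(x))\|_2$ gives, for either $\mu=\tpr_X$ or $\mu=\tpr_{m,X}$,
\[
  \left\|(A_*(\iota)-\hat A(\iota))\,\iota(\phi(\cdot))\right\|_{L_2^\fdim(\mu)}\le\sqrt{K}\,\|A_*(\iota)-\hat A(\iota)\|_F .
\]
This deterministic reduction eliminates $\mu$ and concentrates the problem into a bound on the Frobenius deviation that is uniform over $\iota$.

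Next I would pass from the Frobenius norm to individual entries through $\|A_*(\iota)-\hat A(\iota)\|_F\le\sqrt{\fdim\edim}\,\max_{j,s}\big|[A_*(\iota)]_{js}-[\hat A(\iota)]_{js}\big|$ and treat each entry as an empirical process indexed by $\iota$. For fixed $(j,s)$ the entry is the sample average of $h_\iota(x,y)=[\partial_z l(\phi(x),y,w_0)]_j\,\iota_s(\phi(x))$; by Assumption~\ref{smoothness_assumption_2} the multiplier obeys $|[\partial_z l(\cdot)]_j|\le\beta_{\|w_0\|_2}$ and $|\iota_s|\le\sqrt{K}$, so $\|h_\iota\|_\infty\le\beta_{\|w_0\|_2}\sqrt{K}$. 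I would bound $\sup_{\iota\in\mathcal{F}}|[A_*(\iota)]_{js}-[\hat A(\iota)]_{js}|$ by splitting it into its expectation and a fluctuation: McDiarmid's bounded-difference inequality, whose per-sample constant is $2\beta_{\|w_0\|_2}\sqrt{K}/m$, gives a fluctuation of at most $\beta_{\|w_0\|_2}\sqrt{(2K/m)\log(1/\rho')}$ above the mean with probability $1-\rho'$, while symmetrization bounds the mean by twice the Rademacher complexity of $\{h_\iota:\iota\in\mathcal{F}\}$.

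The remaining step is the Rademacher bound, which is where the VC hypothesis enters. Since $[\partial_z l(\phi(\cdot),\cdot,w_0)]_j$ is a fixed bounded multiplier and $\iota_s\circ\phi$ is the image of the VC class $\mathcal{F}^s$ under the fixed map $\phi$, the class $\{h_\iota\}$ is, up to this multiplier, VC-subgraph, because composing on the domain with a fixed map and multiplying by a fixed function preserve a finite VC dimension. Dudley's entropy integral then yields a Rademacher complexity of order $\beta_{\|w_0\|_2}M/\sqrt{m}$, where $M$ is a finite constant determined only by $\mathcal{F}$ through its uniform entropy and the radius $\sqrt{K}$; this is the $M$ whose existence is asserted. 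Thus on its event each entry is at most $\frac{\beta_{\|w_0\|_2}}{\sqrt{m}}\big(2M+\sqrt{2K\log(1/\rho')}\big)$. Taking a union bound over the $2\fdim\edim$ events (two signs times $\fdim\edim$ entries) with $\rho'=\rho/(2\fdim\edim)$, substituting the entrywise bound into the Frobenius and then the $L_2^\fdim(\mu)$ reductions, and noting that every estimate is uniform in $\iota$, produces exactly $\epsilon(m,\rho)=\beta_{\|w_0\|_2}\sqrt{K\fdim\edim/m}\,\big(2M+\sqrt{2K\log(2\fdim\edim/\rho)}\big)$.

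The main obstacle is the uniformity over $\iota\in\mathcal{F}$: because both $A_*(\iota)$ and $\hat A(\iota)$ depend on $\iota$, the matrix deviation is a genuine supremum of an empirical process rather than a single concentration event, so the VC/entropy structure of $\mathcal{F}$ must be transported through the composition with $\phi$ and through the fixed gradient multiplier to yield the dimension-free constant $M$. Verifying that these operations preserve the VC-subgraph property and that the entropy integral converges to a finite $M$ depending only on $\mathcal{F}$ is the delicate point; the factorization of the first paragraph and the McDiarmid-plus-union-bound bookkeeping are then routine.
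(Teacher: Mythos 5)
Your proposal is correct and takes essentially the same route as the paper's proof: your factorization $T_k\partial_\phi\regrisk(\phi,w_0)=A_*(\iota)\,\iota(\phi(\cdot))$ with the $\sqrt{K}$/Frobenius reduction is exactly the paper's Cauchy--Schwarz and coordinatewise expansion (phrased there via push-forward measures $\phi_\sharp\tpr$, $\phi_\sharp\tpr_m$ rather than precomposition with $\phi$), and your McDiarmid-plus-symmetrization step, contraction through the bounded multiplier $\partial_{z_j}l$, VC-based Rademacher bound $M/\sqrt{m}$, and union bound over the $\fdim\edim$ entries reproduce the paper's Lemmas on uniform Rademacher bounds, VC complexity, and contraction with identical bookkeeping and the same final constant $\epsilon(m,\rho)$. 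The only cosmetic differences are the matrix notation and that you re-derive the packaged uniform-deviation lemma instead of citing it.
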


Since each iterate in Algorithm \ref{SS_PFGD} is computed on a fresh batch not depending on previous batches, Proposition \ref{stat_error_prop} can be applied to
all iterates with $m\leftarrow \floor[\ndata/T]$ and $\rho \leftarrow \delta/T$ for $\delta \in (0,1)$.
Thus, when $\floor[\ndata/T]$ is large and $\eta$ is small sufficiently, functional gradients used in Algorithm \ref{SS_PFGD} become good approximation to the population variant,
and we find that the expected risk function is likely to decrease from Proposition \ref{taylor_prop}.
Moreover, we note that statistical errors are accumulated additively rather than the exponential growth.
Concretely, we obtain the following generalization guarantee.

\begin{theorem} \label{expected_convergence_thm}
  Let Assumptions \ref{smoothness_assumption_1}, \ref{kernel_choice_assumption}, and \ref{smoothness_assumption_2}
  and the same assumption in Proposition \ref{stat_error_prop} hold.
  Consider running Algorithm \ref{SS_PFGD}.
  If $p\geq 1$, $\|\partial_\zeta l(\zeta,y)\|_2\leq B$, and the minimum eigenvalue of $w_0{^\top}w_0$ is lower-bounded by $\sigma^2>0$, then we get
  with probability at least $1-\rho$,
  \begin{align*}
    &\| \nabla_f\risk(w_{0}^\top \phi_{t_*})\|_{L_1^\nclass(\tpr_X)}
    \leq B\left( \frac{2T}{\ndata} \log \frac{T}{\rho} \right)^{\frac{1}{4}} 
    + \sqrt{ \frac{B}{\gamma^{\frac{1}{q}}\sigma} } \\
    &\cdot\left\{ \frac{ \regrisk_0 }{ \eta T }  
    + \beta_{\|w_0\|_2}\epsilon\left( \frac{\ndata}{T},\frac{\rho}{T} \right)
    + \frac{\eta}{2} A_{\|w_0\|_2}K^2 \beta_{\|w_0\|_2}^2
    + \gamma \epsilon
    \right\}^{\frac{1}{2q}}
  \end{align*}
  where $\regrisk_0 = \regrisk(w_0,\phi_0)$ and $t_*$ is the index giving the minimum value of
  $\| \nabla_f \risk_{\floor[\ndata/T]}(w_0^\top \phi_t) \|_{L_p^\nclass(\tpr_{\floor[\ndata_T],X}) }$. 
\end{theorem}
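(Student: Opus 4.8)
The plan is to track the \emph{population} regularized risk $\regrisk(\cdot,w_0)$ along the iterates while the update actually uses an \emph{empirically} computed smoothed functional gradient, and to absorb the resulting mismatch through Proposition~\ref{stat_error_prop}. Write $m=\floor[\ndata/T]$ and let $g_t \defeq T_{k_t,m}\partial_\phi\regrisk_m(\phi_t,w_0)$ be the update direction computed on the fresh batch $S_{t+1}$, so $\phi_{t+1}=\phi_t-\eta g_t$ (recall $T_0=0$ fixes $w\equiv w_0$). First I would apply Proposition~\ref{taylor_prop} to $\regrisk(\cdot,w_0)$, whose smoothness constant is $A_{\|w_0\|_2}$, to get for each $t$
\[
\regrisk(\phi_{t+1},w_0) \le \regrisk(\phi_t,w_0) - \eta\pd<\partial_\phi\regrisk(\phi_t,w_0),g_t>_{L_2^\fdim(\tpr_X)} + \tfrac{A_{\|w_0\|_2}}{2}\eta^2\|g_t\|_{L_2^\fdim(\tpr_X)}^2 .
\]
The curvature term is controlled by $\|g_t\|_{L_2^\fdim(\tpr_X)}\le K\beta_{\|w_0\|_2}$, which follows pointwise from $|k_t|\le K$ and Assumption~\ref{smoothness_assumption_2}; after dividing by $\eta$ this yields the $\tfrac{\eta}{2}A_{\|w_0\|_2}K^2\beta_{\|w_0\|_2}^2$ contribution in the claimed bound.

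The crux is the linear term. I would lower-bound the population inner product $\pd<\partial_\phi\regrisk(\phi_t,w_0),g_t>_{L_2^\fdim(\tpr_X)}$ by the \emph{empirical} kernel norm $\|g_t\|_{k_t}^2=\pd<\partial_\phi\regrisk_m(\phi_t,w_0),g_t>_{L_2^\fdim(\tpr_{m,X})}$ up to a statistical error. Using Cauchy--Schwarz together with Proposition~\ref{stat_error_prop} (invoked with $\mu=\tpr_X$ and with $\mu=\tpr_{m,X}$, at confidence $\rho/T$ on the fresh batch) to swap the population smoothed gradient for $g_t$ and the population measure for $\tpr_{m,X}$, and the bound $\|\partial_\phi\regrisk(\phi_t,w_0)\|_{L_2^\fdim(\tpr_X)}\le\beta_{\|w_0\|_2}$, the mismatch is $O(\beta_{\|w_0\|_2}\epsilon(m,\rho/T))$. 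Telescoping the descent over $t=0,\dots,T-1$ and using $\regrisk\ge0$ gives $\frac{1}{T}\sum_t\|g_t\|_{k_t}^2 \le \frac{\regrisk_0}{\eta T}+O(\beta_{\|w_0\|_2}\epsilon(m,\rho/T))+\tfrac{\eta}{2}A_{\|w_0\|_2}K^2\beta_{\|w_0\|_2}^2$. Invoking Assumption~\ref{kernel_choice_assumption} termwise (this produces the remaining $\gamma\epsilon$) and then the chain-rule identity $\partial_\phi\regrisk_m(\phi_t,w_0)(x_i)=w_0\,\nabla_f\risk_m(f_t)(x_i)$ with the eigenvalue hypothesis $\|w_0 v\|_2\ge\sigma\|v\|_2$, I obtain $\sigma^q\cdot\tfrac{1}{T}\sum_t\|\nabla_f\risk_m(f_t)\|_{L_p^\nclass(\tpr_{m,X})}^q \le \tfrac1\gamma\{\cdots\}$, where $\{\cdots\}$ is exactly the braced quantity of the theorem. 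Minimizing over $t$ at the index $t_*$ yields $\|\nabla_f\risk_m(f_{t_*})\|_{L_p^\nclass(\tpr_{m,X})}\le(\gamma\sigma^q)^{-1/q}\{\cdots\}^{1/q}$.

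It then remains to pass from this empirical $L_p$ bound to the population $L_1$ gradient of the \emph{unregularized} risk. Because sample-splitting makes $\phi_t$ independent of the batch $S_{t+1}$ on which $\nabla_f\risk_m(f_t)$ is evaluated, I would apply Hoeffding's inequality to the i.i.d.\ bounded variables $\|\partial_\zeta l(f_t(x_i),y_i)\|_2^2\le B^2$ together with the Jensen step $\|\nabla_f\risk(f_t)(X)\|_2^2=\|\expec_{\tpr(Y|X)}[\partial_\zeta l]\|_2^2\le\expec_{\tpr(Y|X)}[\|\partial_\zeta l\|_2^2]$, obtaining via a union bound over $t$ that $\|\nabla_f\risk(f_{t_*})\|_{L_1^\nclass(\tpr_X)}\le\|\nabla_f\risk_m(f_{t_*})\|_{L_2^\nclass(\tpr_{m,X})}+B\big(\tfrac{2T}{\ndata}\log\tfrac{T}{\rho}\big)^{1/4}$, the $1/4$ exponent arising from the outer square root. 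Finally the pointwise bound $\|\cdot\|_2\le B$ gives $\|\nabla_f\risk_m(f_{t_*})\|_{L_2^\nclass(\tpr_{m,X})}\le\sqrt{B\,\|\nabla_f\risk_m(f_{t_*})\|_{L_1^\nclass(\tpr_{m,X})}}\le\sqrt{B\,\|\nabla_f\risk_m(f_{t_*})\|_{L_p^\nclass(\tpr_{m,X})}}$ (using $p\ge1$), into which the optimization bound substitutes to produce the $\sqrt{B/(\gamma^{1/q}\sigma)}\{\cdots\}^{1/(2q)}$ term; allocating the failure probabilities of Proposition~\ref{stat_error_prop} and of the Hoeffding steps so that they total $\rho$ completes the argument.

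The main obstacle I anticipate is the careful bookkeeping of statistical errors at the three interfaces between population and empirical quantities: the descent inner product (population gradient against an empirically smoothed gradient, in the population measure), the transfer of the population descent signal to the empirical kernel norm so that Assumption~\ref{kernel_choice_assumption} applies, and the final population-$L_1$ versus empirical-$L_p$ conversion. The essential point—and the reason the sample-splitting variant is analyzed rather than Algorithm~\ref{PFGD}—is that the independence of the fresh batches lets Proposition~\ref{stat_error_prop} and Hoeffding's inequality be applied \emph{once per iteration} and combined by a union bound, so that these errors accumulate \emph{additively} (contributing the $\epsilon(m,\rho/T)$ and $\log(T/\rho)$ terms) instead of compounding multiplicatively as the Rademacher complexity of the full depth-$T$ network would in Theorem~\ref{margin_bound_thm}.
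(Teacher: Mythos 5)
Your proposal is correct and takes essentially the same route as the paper's own proof: a population-risk descent step from Proposition~\ref{taylor_prop} with curvature term $\tfrac{\eta}{2}A_{\|w_0\|_2}K^2\beta_{\|w_0\|_2}^2$, a per-iteration Cauchy--Schwarz/Proposition~\ref{stat_error_prop} control of the population--empirical mismatch in the linear term on fresh batches, telescoping, Assumption~\ref{kernel_choice_assumption} plus the chain-rule identity and the eigenvalue bound on $w_0$, and finally a Hoeffding--Jensen conversion (using independence of $\phi_t$ from its evaluation batch) producing the $1/4$-power term, with union bounds totaling $\rho$. The only local differences are that the paper collapses your two-stage swap into a single application of Proposition~\ref{stat_error_prop} with $\mu=\tpr_{m,X}$ via the Fubini-type identity $\pd<\partial_\phi\regrisk(\phi_t,w_0),\,T_{k_t,m}\partial_\phi\regrisk_m(\phi_t,w_0)>_{L_2^\fdim(\tpr_X)}=\pd<T_{k_t}\partial_\phi\regrisk(\phi_t,w_0),\,\partial_\phi\regrisk_m(\phi_t,w_0)>_{L_2^\fdim(\tpr_{m,X})}$ where $m=\floor[\ndata/T]$ (so your version loses only a constant factor on the $\beta_{\|w_0\|_2}\epsilon$ term, as your $O(\cdot)$ anticipates), and it applies Hoeffding to $\|\partial_\zeta l\|_2$ with the lower bound $\expec\|\partial_\zeta l\|_2\geq\tfrac{1}{B}\|\nabla_f\risk\|_{L_1^\nclass(\tpr_X)}^2$ rather than to the squared norms---both immaterial to the result.
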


%According to this theorem, $\eta, T$, and $\mathcal{F}$ control the optimization-generalization tradeoff like Theorem \ref{margin_bound_thm}.

\begin{table*}[h]
\caption{Test classification accuracy on binary and multiclass classification.}
\label{comparison_table}
\vskip 0.15in
\begin{center}
  \begin{small}
\begin{sc}
  \begin{tabular}{ccccccc}
    \toprule
Method & 
letter &
usps &
ijcnn1 &
mnist &
covtype &
susy 
\\
\midrule
\multirow{2}{*}{ResFGB (logistic)} & 
{\bf 0.976} &
{\bf 0.953} &
{\bf 0.989} &
0.986 &
0.966 &
{\bf 0.804} \\
 &
{\scriptsize {\bf (0.0019)} } &
{\scriptsize {\bf (0.0007)} }&
{\scriptsize {\bf (0.0004)} }&
{\scriptsize (0.0007) }&
{\scriptsize (0.0004) }&
{\scriptsize {\bf (0.0000)} }\\
% \multirow{2}{*}{ResFGB2 (logistic)} & 
% {\bf 0.978} &
% 0.952 &
% {\bf 0.990} &
% 0.986 &
% 0.966 &
% {\bf 0.804} \\
% &
% {\scriptsize {\bf (0.0008)} }&
% {\scriptsize (0.0007) }&
% {\scriptsize {\bf (0.0005)} }&
% {\scriptsize (0.0005) }&
% {\scriptsize (0.0005) }&
% {\scriptsize {\bf (0.0004)} }\\    
\multirow{2}{*}{ResFGB (smooth hinge)} & 
0.975 &
0.952 &
{\bf 0.989} &
{\bf 0.987} &
0.965 &
{\bf 0.804} \\
&
{\scriptsize (0.0014) }&
{\scriptsize (0.0023) }&
{\scriptsize {\bf (0.0005)} }&
{\scriptsize {\bf (0.0010)} }&
{\scriptsize (0.0005) }&
{\scriptsize {\bf (0.0004)} }\\    
% \multirow{2}{*}{ResFGB2 (smooth hinge)} & 
% 0.977 &
% 0.953 &
% {\bf 0.990} &
% 0.986 &
% 0.966 &
% {\bf 0.804} \\
% &
% {\scriptsize (0.0006) }&
% {\scriptsize (0.0004) }&
% {\scriptsize {\bf (0.0004)} }&
% {\scriptsize (0.0007) }&
% {\scriptsize (0.0004) }&
% {\scriptsize {\bf (0.0000)} }\\
\multirow{2}{*}{Multilayer Perceptron} &
0.971 &
0.948 &
0.988 &
0.986 &
0.965 &
{\bf 0.804} \\
&
{\scriptsize (0.0059) }&
{\scriptsize (0.0045) }&
{\scriptsize (0.0010) }&
{\scriptsize (0.0010) }&
{\scriptsize (0.0015) }&
{\scriptsize {\bf (0.0004)} }\\       
\multirow{2}{*}{Support Vector Machine} \rule[0mm]{0mm}{3mm}& 
0.959 &
0.948 &
0.977 &
0.969 &
0.824 &
0.754 \\
&
{\scriptsize (0.0062) }&
{\scriptsize (0.0023) }&
{\scriptsize (0.0015) }&
{\scriptsize (0.0041) }&
{\scriptsize (0.0059) }&
{\scriptsize (0.0534) }\\        
\multirow{2}{*}{Random Forest} \rule[0mm]{0mm}{3mm}& 
0.964 &
0.939 &
0.980 &
0.972 &
0.948 &
0.802 \\
&
{\scriptsize (0.0012) }&
{\scriptsize (0.0018) }&
{\scriptsize (0.0005) }&
{\scriptsize (0.0005) }&
{\scriptsize (0.0005) }&
{\scriptsize (0.0004) }\\        
\multirow{2}{*}{Gradient Boosting} \rule[0mm]{0mm}{3mm}& 
0.964 &
0.938 &
0.982 &
0.981 &
{\bf 0.972} &
{\bf 0.804} \\
&
{\scriptsize (0.0011) }&
{\scriptsize (0.0039) }&
{\scriptsize (0.0010) }&
{\scriptsize (0.0004) }&
{\scriptsize {\bf (0.0005)} }&
{\scriptsize {\bf (0.0005)} }\\ 
\bottomrule
\end{tabular}
\end{sc}
\end{small}
\end{center}
\vskip -0.1in
\end{table*}
\vspace{-3mm}
\section{Experiments}
In this section, we present experimental results of the binary and multiclass classification tasks.
We run Algorithm \ref{PFGD} and compare it with support vector machine, random forest, multilayer perceptron, and gradient boosting methods.
We here introduce settings used for Algorithm \ref{PFGD}.
As for the loss function, we test both multiclass logistic loss and smooth hinge loss, and as for the embedding class $\mathcal{F}$, we use two or three hidden-layer neural networks.
The number of hidden units in each layer is set to $100$ or $1000$.
Linear classifiers and embeddings are trained by Nesterov's momentum method. 
The learning rate is chosen from $\{10^{-3},10^{-2},10^{-1},1\}$.
These parameters and the number of iterations $T$ are tuned based on the performance on the validation set.

\begin{figure}[h]
  \begin{center}
    \includegraphics[width=80mm,angle=0]{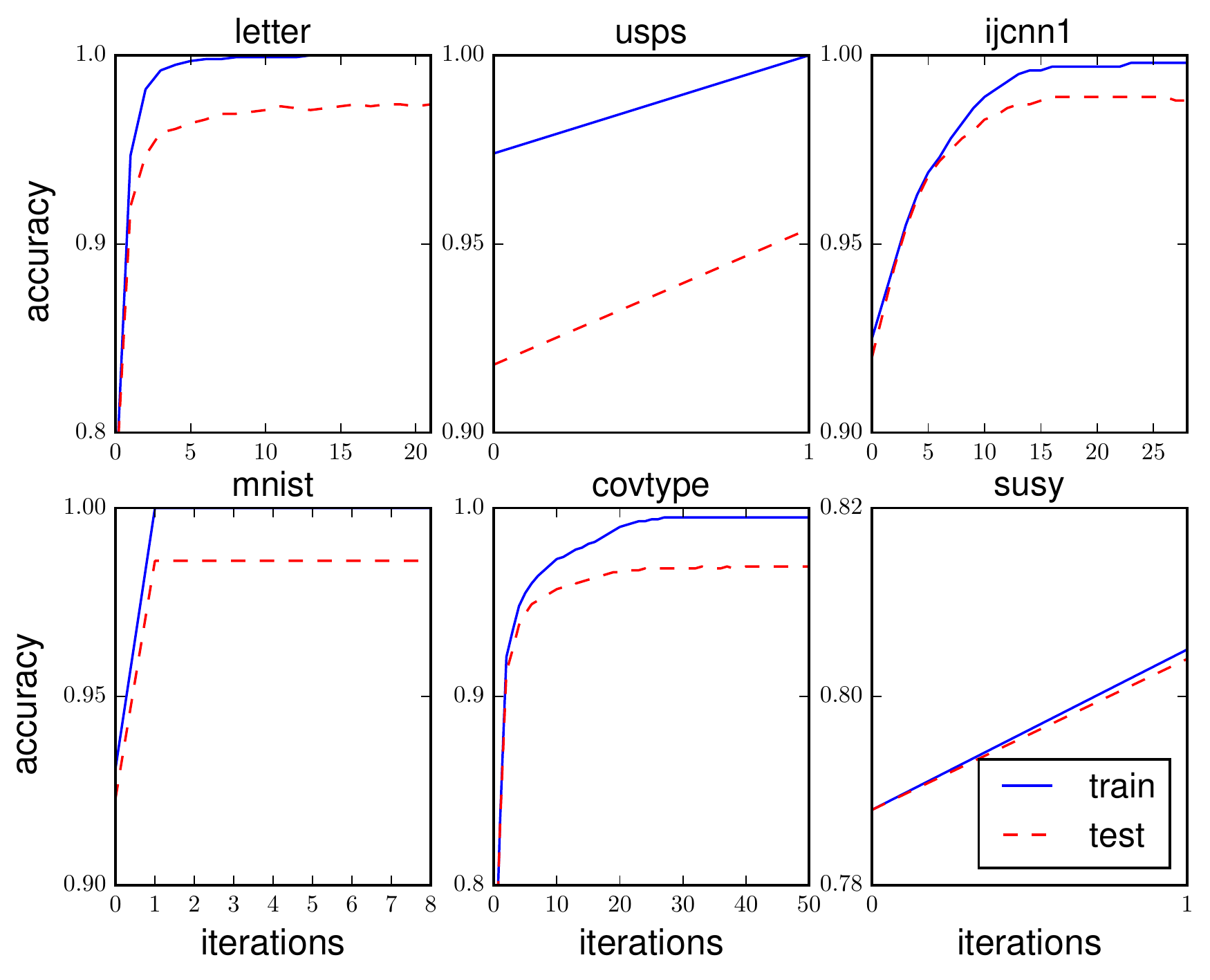}
    \end{center}
    \vspace{-3mm}
    \caption{Learning curves for Algorithm \ref{PFGD} with multiclass logistic loss on libsvm datasets showing classification accuracy on training and test sets 
    versus the number of iterations.} \label{learning_curve}
\end{figure}

We use the following benchmark datasets: letter, usps, ijcnn1, mnist, covtype, and susy.
We now explain the experimental procedure.
For datasets not providing a fixed test set, we first divide each dataset randomly into two parts: $80\%$ for training and the rest for test.
We next divide each training set randomly and use $80\%$ for training and the rest for validation.
We perform each method on the training dataset with several hyperparameter settings and choose the best setting on the validation dataset.
Finally, we train each model on the entire training dataset using this setting and evaluate it on the test dataset.
This procedure is run $5$ times.

The mean classification accuracy and the standard deviation are listed in Table \ref{comparison_table}.
The support vector machine is performed using a random Fourier feature \cite{rahimi2007random} with an embedding dimension of $10^3$ or $10^4$.
For multilayer perceptron, we use three, four, or five hidden layers and rectified linear unit as the activation function.
The number of hidden units in each layer is set to $100$ or $1000$.
As for random forest, the number of trees is set to $100$, $500$, or $1000$ and the maximum depth is set to $10$, $20$, or $30$.
Gradient boosting in Table \ref{comparison_table} indicates LightGBM \cite{ke2017lightgbm} %which is known to be one of the most powerful methods in data analysis.
% As for the hyperparameter settings of LightGBM, the maximum number of estimators is set to $1000$, the learning rate is chosen from $\{10^{-3},10^{-2},10^{-1},1\}$,
with the hyperparameter settings: the maximum number of estimators is $1000$, the learning rate is chosen from $\{10^{-3},10^{-2},10^{-1},1\}$,
and number of leaves in one tree is chosen from  $\{16,32,\ldots,1024\}$.
% Hyperparameter settings for all methods are decribed in Appendix.

As seen in Table \ref{comparison_table}, our method shows superior performance over the competitors except for covtype.
However, the method that achieves higher accuracy than our method is only LightGBM on covtype.
We plot learning curves for one run of Algorithm \ref{PFGD} with logistic loss, which depicts classification accuracies on training and test sets.
Note that the number of iterations are determined by classification results on validation sets.
This figure shows the efficiency of the proposed method.
\vspace{-3mm}
\section{Conclusion}
We have formalized the gradient boosting perspective of ResNet %using the notion of functional gradients
and have proposed new gradient boosting method by leveraging this viewpoint.
We have shown two types of generalization bounds: one is by the margin bound and the other is by the sample-splitting technique.
These bounds clarify the optimization-generalization tradeoff of the proposed method.
Impressive empirical performance of the method has been confirmed on several benchmark datasets.
We note that our method can take in convolutional neural networks as feature extractions,
but additional efforts will be required to achieve high performance on image datasets.
This is one of important topics left for future work.

\section*{Acknowledgements}
This work was partially supported by MEXT KAKENHI (25730013, 25120012, 26280009, 15H01678 and 15H05707), JST-PRESTO (JPMJPR14E4), and JST-CREST (JPMJCR14D7, JPMJCR1304).

% ----- for Arxiv ----------
\bibliographystyle{icml2018}

% ----- Prep for Arxiv -----
%\bibliography{ref}
%\bibliographystyle{icml2018}
% --------------------------

% ----- for Arxiv -----
% supplement
\clearpage
\onecolumn
\renewcommand{\thesection}{\Alph{section}}
\renewcommand{\thesubsection}{\Alph{section}. \arabic{subsection}}
\renewcommand{\thetheorem}{\Alph{theorem}}
\renewcommand{\thelemma}{\Alph{lemma}}
\renewcommand{\theproposition}{\Alph{proposition}}
\renewcommand{\thedefinition}{\Alph{definition}}
\renewcommand{\thecorollary}{\Alph{corollary}}
\renewcommand{\theassumption}{\Alph{assumption}}

\setcounter{section}{0}
\setcounter{subsection}{0}
\setcounter{theorem}{0}
\setcounter{lemma}{0}
\setcounter{proposition}{0}
\setcounter{definition}{0}
\setcounter{corollary}{0}
\setcounter{assumption}{0}

\part*{\Large{Appendix}}

\section{Auxiliary Lemmas}
In this section, we introduce auxiliary lemmas used in our analysis.
%The first one is Hoeffding's inequality in Hilbert spaces.
% \begin{lemma}[Hoeffding's inequality in Hilbert spaces, \cite{steinwart2008support}] \label{hoeffding_lemma}
%   Let $\mathcal{H}$ be a separable Hilbert space and $a > 0$.
%   Furthermore, let $Z_1,\ldots,Z_s$ be $\mathcal{H}$-valued i.i.d. random variables satisfying $\|Z_i\|_\infty \leq a$ for $i \in \{1,\ldots,2\}$.
%   Then, for any $\tau > 0$, we get
%   \begin{equation}
%    \prob\left[\left\| \frac{1}{s}\sum_{i=1}^s(Z_i - \expec[Z_i])\right\|_{\mathcal{H}} \geq a\left( \frac{1}{\sqrt{s}}( 1 + \sqrt{2\tau} ) + \frac{4\tau}{3s} \right) \right]
%     \leq \exp( -\delta ). \label{hoeffding_ineq}
%   \end{equation} 
% \end{lemma}
% Note that this statement can be reinterpreted as follows: it follows that for $\delta \in (0,1)$ with probability at least $1-\delta$
% \begin{equation}
%   \left\| \frac{1}{s}\sum_{i=1}^s(Z_i - \expec[Z_i])\right\|_{\mathcal{H}}
%   \leq a \left( \frac{1}{\sqrt{s}} \left(1+\sqrt{2\log\frac{1}{\delta}} \right) + \frac{4}{3s}\log\frac{1}{\delta} \right). \label{hoeffding_ineq_2}
% \end{equation}
The first one is Hoeffding's inequality.

\begin{lemma}[Hoeffding's inequality] \label{hoeffding_lemma}
  Let $Z_1,\ldots,Z_s$ be i.i.d. random variables to $[-a,a]$ for $a>0$.
  Denote by $A_s$ the sample average $\sum_{i=1}^sZ_i/s$.
  Then, for any $\epsilon > 0$, we get
  \begin{equation*}
    \prob[ A_s + \epsilon \leq \expec[A_s] ] \leq \exp\left( - \frac{\epsilon^2s}{2a^2}\right). %\label{hoeffding_ineq}
  \end{equation*} 
\end{lemma}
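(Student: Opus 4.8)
The plan is to prove this lower-tail estimate by the classical Chernoff bounding technique, reducing the probability to a moment generating function computation. First I would pass to centered variables: setting $W_i \defeq \expec[Z_i] - Z_i$, each $W_i$ has mean zero and is supported in an interval of length $2a$, and the event $\{A_s + \epsilon \leq \expec[A_s]\}$ coincides with $\{\sum_{i=1}^s W_i \geq \epsilon s\}$. For any $\lambda > 0$, applying Markov's inequality to the nonnegative variable $\exp(\lambda\sum_i W_i)$ and using independence to factor the expectation gives
\begin{equation*}
  \prob\left[ \sum_{i=1}^s W_i \geq \epsilon s \right]
  \leq e^{-\lambda\epsilon s}\, \expec\left[ \exp\left( \lambda\sum_{i=1}^s W_i \right) \right]
  = e^{-\lambda\epsilon s} \prod_{i=1}^s \expec\left[ e^{\lambda W_i} \right].
\end{equation*}

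The key step is to control each factor $\expec[e^{\lambda W_i}]$ by invoking Hoeffding's lemma: a mean-zero random variable supported in an interval of length $\ell$ satisfies $\expec[e^{\lambda W}] \leq \exp(\lambda^2\ell^2/8)$. With $\ell = 2a$ this yields $\expec[e^{\lambda W_i}] \leq \exp(\lambda^2 a^2/2)$, so the product above is bounded by $\exp(s\lambda^2 a^2/2)$ and the whole right-hand side becomes $\exp(-\lambda\epsilon s + s\lambda^2 a^2/2)$. Finally I would optimize the free parameter: the exponent is a convex quadratic in $\lambda$ minimized at $\lambda = \epsilon/a^2$, which produces exponent $-\epsilon^2 s/(2a^2)$ and hence the claimed inequality.

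The main obstacle is Hoeffding's lemma itself, which is the only nonroutine ingredient. I expect to prove it by analyzing the cumulant generating function $\psi(\lambda) \defeq \log\expec[e^{\lambda W}]$: a direct computation shows $\psi(0) = 0$, $\psi'(0) = \expec[W] = 0$, and $\psi''(\lambda)$ equals the variance of $W$ under the exponentially tilted law with density proportional to $e^{\lambda W}$. Since this tilted variable still takes values in an interval of length $\ell$, its variance is at most $(\ell/2)^2$ by Popoviciu's inequality, so $\psi'' \leq \ell^2/4$ uniformly. Integrating this bound twice from $0$ then gives $\psi(\lambda) \leq \lambda^2\ell^2/8$, completing the argument.
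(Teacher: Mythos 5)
Your proof is correct: the reduction of the event $\{A_s+\epsilon\leq \expec[A_s]\}$ to $\{\sum_{i=1}^s W_i\geq \epsilon s\}$, the Chernoff bound, Hoeffding's lemma with interval length $\ell=2a$ (giving $\expec[e^{\lambda W_i}]\leq e^{\lambda^2 a^2/2}$), and the optimization at $\lambda=\epsilon/a^2$ all check out and reproduce the stated constant $\exp(-\epsilon^2 s/(2a^2))$ exactly; the tilted-measure argument for Hoeffding's lemma via $\psi''(\lambda)\leq \ell^2/4$ (Popoviciu) is likewise sound. Note that the paper offers no proof to compare against --- it quotes this as a classical auxiliary lemma --- and your argument is the standard textbook derivation, so nothing further is needed.
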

Note that this statement can be reinterpreted as follows: it follows that for $\delta \in (0,1)$ with probability at least $1-\delta$
\begin{equation*}
 A_s + a\sqrt{\frac{2}{s}\log\frac{1}{\delta}} \geq \expec[A_s]. %\label{hoeffding_ineq_2}
\end{equation*}

We next introduce the uniform bound by Rademacher complexity.
For a set $\mathcal{G}$ of functions from $\mathcal{Z}$ to $[-a,a]$ and a dataset $S=\{z_i\}_{i=1}^s \subset \mathcal{Z}$,
we denote empirical Rademacher complexity by $\empradcomp_S(\mathcal{G})$ and
denote Rademacher complexity by $\radcomp_s(\mathcal{G})$; let $\sigma =(\sigma_i)_{i=1}^s$ be
i.i.d random variables taking $-1$ or $1$ with equal probability and let $S$ be distributed according to a distribution $\mu^s$,
\[ \empradcomp_S(\mathcal{G}) = \expec_\sigma\left[ \sup_{f\in\mathcal{G}}\frac{1}{s}\sum_{i=1}^s\sigma_if(x_i)\right],
  \ \  \radcomp_s(\mathcal{G})=\expec_{\mu^s}[\empradcomp_S(\mathcal{G})]. \]

\begin{lemma} \label{rademacher_lemma}
  Let $Z_1,\ldots, Z_s$ be i.i.d random variables to $\mathcal{Z}$.
  Denote by $A_s(f)$ the sample average $\sum_{i=1}^sf(Z_i)/s$. %for $f \in \mathcal{G}$. 
  Then, for any $\delta \in (0,1)$, we get with probability at least $1-\delta$ over the choice of $S$,
  \begin{equation*}
    \sup_{f \in \mathcal{G}} \left| A_s(f) - \expec[A_s(f)] \right| \leq 2 \radcomp_s(\mathcal{G}) + a\sqrt{\frac{2}{s}\log\frac{2}{\delta}}.
  \end{equation*}
\end{lemma}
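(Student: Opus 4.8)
The plan is to follow the classical symmetrization route: first concentrate the one-sided supremum around its mean via a bounded-differences argument, then bound that mean by the Rademacher complexity through a ghost-sample trick, and finally recombine the two signs by a union bound.

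First I would fix one sign and study $\Phi^{+}(S) \defeq \sup_{f \in \mathcal{G}}(A_s(f) - \expec[A_s(f)])$. Replacing a single coordinate $Z_i$ by an independent copy changes any $A_s(f)$ by at most $2a/s$ (since each $f$ takes values in $[-a,a]$), so $\Phi^{+}$ has bounded differences with constants $c_i = 2a/s$ and $\sum_i c_i^2 = 4a^2/s$. Invoking the bounded-differences (McDiarmid) inequality, the standard extension of Hoeffding's Lemma \ref{hoeffding_lemma} to functions of independent variables, gives with probability at least $1-\delta/2$ that
\[
\Phi^{+}(S) \leq \expec[\Phi^{+}(S)] + a\sqrt{\frac{2}{s}\log\frac{2}{\delta}}.
\]

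Next I would control $\expec[\Phi^{+}(S)]$ by symmetrization. Introducing an independent ghost sample $S' = (Z_1',\ldots,Z_s')$ and writing $\expec[A_s(f)] = \expec_{S'}[A_s'(f)]$, Jensen's inequality moves the inner expectation outside the supremum, yielding $\expec[\Phi^{+}(S)] \leq \expec_{S,S'}\!\left[\sup_{f}\frac{1}{s}\sum_{i}(f(Z_i)-f(Z_i'))\right]$. Because each pair $(Z_i,Z_i')$ is exchangeable, I can insert Rademacher signs $\sigma_i$ without changing the distribution, then split the supremum of the difference into two suprema and use that $-\sigma_i$ has the same law as $\sigma_i$; this lands exactly on $\expec[\Phi^{+}(S)] \leq 2\radcomp_s(\mathcal{G})$. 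The identical argument applied to $\Phi^{-}(S) \defeq \sup_{f}(\expec[A_s(f)] - A_s(f))$ produces the matching bound, again with probability at least $1-\delta/2$.

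Finally I would union-bound the two events, so that with probability at least $1-\delta$ both one-sided bounds hold simultaneously. Since $\sup_{f}|A_s(f)-\expec[A_s(f)]| = \max(\Phi^{+}(S),\Phi^{-}(S))$ and the two bounds coincide, this delivers precisely $2\radcomp_s(\mathcal{G}) + a\sqrt{(2/s)\log(2/\delta)}$, where the factor $2$ inside the logarithm is exactly the price of splitting over the two signs. I expect the symmetrization step to be the main obstacle, since the Rademacher-insertion bookkeeping must be done carefully to arrive at the no-absolute-value Rademacher complexity $\radcomp_s(\mathcal{G})$ as defined in the excerpt, rather than its absolute-value variant, which would only give a weaker constant.
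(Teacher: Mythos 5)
Your proof is correct: the paper states Lemma \ref{rademacher_lemma} without proof (it is quoted as a standard result in the style of \citet{bartlett2002rademacher, koltchinskii2002empirical}), and your argument---McDiarmid's bounded-differences inequality with constants $2a/s$ applied to each one-sided supremum, ghost-sample symmetrization giving $\expec[\Phi^{\pm}] \leq 2\radcomp_s(\mathcal{G})$, and a union bound over the two signs---is precisely the canonical derivation. Your bookkeeping also reproduces the stated constants exactly: the $\log(2/\delta)$ arises from splitting $\delta$ over the two signs, and the sign-flip symmetry $-\sigma_i \sim \sigma_i$ correctly yields the no-absolute-value complexity $\radcomp_s(\mathcal{G})$ as defined in the paper, for both $\Phi^{+}$ and $\Phi^{-}$, without requiring $\mathcal{G}$ to be symmetric.
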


When a function class is VC-class (for the definite see \cite{vw1996}), its Rademacher complexity is uniformly bounded as in the following lemma which can be easily
shown by Dudley's integral bound \cite{dud1999} and the bound on the covering number by VC-dimension (pseudo-dimension) \cite{vw1996}. 

\begin{lemma} \label{uniform_bounded_complexity}
  Let $\mathcal{G}$ be VC-class.
  Then, there exists positive value $M$ depending on $\mathcal{G}$ such that $\radcomp_s(\mathcal{G}) \leq M/\sqrt{m}$.
\end{lemma}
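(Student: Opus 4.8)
The plan is to combine Dudley's entropy integral (the chaining bound on empirical Rademacher complexity) with the standard covering number estimate for VC-classes. First I would recall Dudley's bound, which states that for any dataset $S=\{z_i\}_{i=1}^s$,
\[ \empradcomp_S(\mathcal{G}) \leq \frac{12}{\sqrt{s}}\int_0^{\infty} \sqrt{\log N(\epsilon, \mathcal{G}, L_2(S))}\, d\epsilon, \]
where $N(\epsilon, \mathcal{G}, L_2(S))$ denotes the $\epsilon$-covering number of $\mathcal{G}$ in the empirical $L_2(S)$ metric \citep{dud1999}. Because every $f\in\mathcal{G}$ takes values in $[-a,a]$, the class has $L_2(S)$-diameter at most $2a$, so the covering number equals $1$ for $\epsilon > 2a$ and the integral may be truncated at $2a$.

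Next I would invoke the covering number bound for VC-classes. If $\mathcal{G}$ has VC-dimension (pseudo-dimension) $V$, then the result of \citet{vw1996} provides a uniform-in-$Q$ estimate of the form
\[ \log N(\epsilon, \mathcal{G}, L_2(Q)) \leq C\, V \log\!\left(\frac{a}{\epsilon}\right) \qquad (0 < \epsilon \leq 2a), \]
for an absolute constant $C$; crucially this holds for \emph{every} probability measure $Q$, and in particular for the empirical measure supported on $S$. This uniformity is what lets me bound $\empradcomp_S(\mathcal{G})$ by a quantity that does not depend on the realized sample $S$.

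Substituting this estimate into Dudley's integral and rescaling $\epsilon = 2a\,u$ reduces the bound to
\[ \empradcomp_S(\mathcal{G}) \leq \frac{C' a \sqrt{V}}{\sqrt{s}} \int_0^1 \sqrt{\log(1/u)}\, du, \]
and the inner integral is a convergent constant equal to $\sqrt{\pi}/2$. Hence $\empradcomp_S(\mathcal{G}) \leq M/\sqrt{s}$ for a constant $M$ depending only on $a$, $V$, and absolute constants, that is, depending only on $\mathcal{G}$. Taking the expectation over $S \sim \mu^s$ preserves the inequality, since it holds for every realization of $S$, and this yields $\radcomp_s(\mathcal{G}) \leq M/\sqrt{s}$ as claimed.

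The only delicate point is the convergence of Dudley's integral near $\epsilon = 0$: the logarithmic growth $\log N \sim V\log(1/\epsilon)$ of the VC covering number makes $\sqrt{\log N}$ integrable at the origin, which is precisely what produces the $1/\sqrt{s}$ rate. Had the metric entropy grown polynomially in $1/\epsilon$, as happens for richer function classes, the integral would diverge and a slower rate would result; the VC structure is exactly what keeps the entropy logarithmic and the chaining integral finite. I would therefore treat the integrability verification, together with the correct invocation of the uniform-in-$Q$ covering bound, as the crux of the argument, while the remaining steps are routine.
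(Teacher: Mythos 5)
Your proof is correct and follows exactly the route the paper indicates: the paper justifies this lemma in one line by citing Dudley's entropy integral \citep{dud1999} together with the VC (pseudo-dimension) covering number bound of \citet{vw1996}, and your argument is a careful expansion of precisely that sketch, including the truncation at the diameter and the integrability of $\sqrt{\log(1/\epsilon)}$ near the origin. The only discrepancy is notational: the paper's statement writes $M/\sqrt{m}$ where it should read $M/\sqrt{s}$, and your derivation correctly produces the $1/\sqrt{s}$ rate.
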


The following lemma is useful in estimating Rademacher complexity.

\begin{lemma} \label{rad_comp_lemma}
  (i) Let $h_i : \realsp \rightarrow \realsp$ $(i\in \{1,\ldots,s\})$ be $L$-Lipschitz functions.
  Then it follows that
  \begin{equation*}
    \expec_\sigma\left[ \sup_{f\in\mathcal{G}} \sum_{i=1}^s\sigma_i h_i\circ f(x_i)\right]
    \leq L \expec_\sigma\left[ \sup_{f\in\mathcal{G}} \sum_{i=1}^s\sigma_i \circ f(x_i)\right].
  \end{equation*}
  (ii) We denote by ${\rm conv}(\mathcal{G})$ the convex hull of $\mathcal{G}$. Then, we have $\empradcomp_S({\rm conv}(\mathcal{G}))=\empradcomp_S(\mathcal{G})$.\\
  % (iii) Let $\mathcal{Y}$ be a finite set.
  % For a set of functions $\mathcal{G}\subset \{ f : \mathcal{Z}\times \mathcal{Y} \rightarrow \realsp \}$,
  % we denote $\mathcal{G}_y = \{ f(\cdot,y) \mid y \in \mathcal{Y} \}$.
  % Then, it follows that $\empradcomp_S(\mathcal{G}) \leq \sum_{y \in \labelsp} \empradcomp_S(\mathcal{G}_y)$.
\end{lemma}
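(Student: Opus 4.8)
The plan is to recognize Lemma \ref{rad_comp_lemma} as two classical facts about Rademacher averages and to establish each separately; neither relies on the earlier machinery of the paper, so these are self-contained. For part (i), which is the Ledoux--Talagrand contraction principle, I would first reduce the function-class statement to a geometric statement about a fixed subset of $\realsp^s$. Because only the values $f(x_i)$ enter the sum, setting $T \defeq \{(f(x_1),\ldots,f(x_s)) : f\in\mathcal{G}\} \subset \realsp^s$ recasts the claim as
\[ \expec_\sigma \sup_{t\in T}\sum_{i=1}^s \sigma_i h_i(t_i) \ \leq\ L\,\expec_\sigma \sup_{t\in T}\sum_{i=1}^s \sigma_i t_i . \]
I would then prove this by peeling off one coordinate at a time, showing that replacing a single $h_i$ by the (scaled) identity $L\cdot\mathrm{id}$ never decreases the quantity; iterating over $i=1,\ldots,s$ converts the left side into the right side. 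Note that no assumption $h_i(0)=0$ is needed, since the symmetry of $\sigma_i$ absorbs additive constants automatically.

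The crux is the one-step lemma: freezing $\sigma_2,\ldots,\sigma_s$ and writing $\Phi(t)\defeq\sum_{i\geq 2}\sigma_i h_i(t_i)$, I would take the expectation over $\sigma_1\in\{-1,+1\}$ explicitly, obtaining
\[ \expec_{\sigma_1}\sup_{t\in T}\bigl[\sigma_1 h_1(t_1)+\Phi(t)\bigr] = \tfrac12\sup_{t,t'\in T}\bigl[ h_1(t_1)-h_1(t'_1)+\Phi(t)+\Phi(t')\bigr]. \]
The main obstacle, and the only genuinely delicate point, is the \emph{sign-matching} argument that bounds this by the corresponding expression with $h_1$ replaced by $L\cdot\mathrm{id}$: for each admissible pair I would use $h_1(t_1)-h_1(t'_1)\leq L|t_1-t'_1|$, and then, according to whether $t_1\geq t'_1$ or $t_1<t'_1$, dominate the contribution by the pair $(t,t')$ or its swap $(t',t)$ inside $\sup_{t,t'}[L t_1 - L t'_1 + \Phi(t)+\Phi(t')]$. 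Since the supremum is symmetric under swapping $t\leftrightarrow t'$, both cases land inside the contracted supremum, giving the one-step bound; reinstating the expectation over the frozen signs and iterating yields part (i).

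Part (ii) is routine and follows from the fact that a linear functional attains the same supremum over a set and over its convex hull. One inclusion is immediate: $\mathcal{G}\subset\mathrm{conv}(\mathcal{G})$ gives $\empradcomp_S(\mathcal{G})\leq\empradcomp_S(\mathrm{conv}(\mathcal{G}))$. For the reverse, any $g\in\mathrm{conv}(\mathcal{G})$ is $g=\sum_k\lambda_k f_k$ with $\lambda_k\geq 0$, $\sum_k\lambda_k=1$, $f_k\in\mathcal{G}$, so by linearity of the Rademacher sum,
\[ \sum_{i=1}^s \sigma_i g(x_i) = \sum_k \lambda_k \sum_{i=1}^s \sigma_i f_k(x_i) \leq \sup_{f\in\mathcal{G}} \sum_{i=1}^s \sigma_i f(x_i). \]
Taking the supremum over $g\in\mathrm{conv}(\mathcal{G})$ and then $\expec_\sigma$ gives $\empradcomp_S(\mathrm{conv}(\mathcal{G}))\leq\empradcomp_S(\mathcal{G})$, and combining the two inequalities yields equality. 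I expect essentially all the real work to sit in the contraction step (i); part (ii) is a one-line convexity observation.
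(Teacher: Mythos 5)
Your proof is correct: the coordinate-wise peeling with the explicit $\mathbb{E}_{\sigma_1}$ computation and the sign-matching/swap step is exactly the canonical Ledoux--Talagrand contraction argument (and your remark that no normalization $h_i(0)=0$ is needed for this one-sided version, since constants cancel in the difference $h_1(t_1)-h_1(t'_1)$, is accurate), while part (ii) is the standard observation that a linear functional has the same supremum over a set and its convex hull. Note that the paper states Lemma \ref{rad_comp_lemma} without proof, treating both facts as classical results from the empirical process literature, so your write-up simply supplies the standard argument that the authors omitted; there is no alternative route in the paper to compare against.
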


The following lemma gives the generalization bound by the margin distribution, which is originally derived by \cite{koltchinskii2002empirical}.
Let $\mathcal{G}$ be the set of predictors; $\mathcal{G} \subset \{ f : \featuresp \rightarrow \realsp^\nclass \}$
and denote $\Pi \mathcal{G} = \{ f_y(\cdot) : \featuresp \rightarrow \mid f \in \mathcal{G}, y \in \labelsp \}$, then the following holds.

\begin{lemma} \label{margin_bound_lem}
  Fix $\delta > 0$.
  Then, for $\forall \rho > 0$, with probability at least $1-\rho$ over the random choice of $S$ from $\tpr^\ndata$, we have $\forall f \in \mathcal{G}$,
  \begin{equation*}
    \prob_{\tpr}[m_f(X,Y) \leq 0] \leq \prob_{\tpr_\ndata}[m_f(X,Y) \leq \delta]
    + \frac{2\nclass^2}{\delta} \radcomp_\ndata( \Pi \mathcal{G} ) + \sqrt{ \frac{1}{2n} \log \frac{1}{\rho}}.
  \end{equation*}
\end{lemma}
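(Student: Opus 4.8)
The plan is to follow the classical margin-bound strategy of \citet{koltchinskii2002empirical}: replace the discontinuous indicator by a Lipschitz ramp, apply a uniform deviation bound controlled by Rademacher complexity, and finally reduce the complexity of the composed margin class to that of $\Pi\mathcal{G}$, where the multiclass structure will generate the $\nclass^2$ factor.

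First I would introduce the ramp function $\varphi_\delta:\realsp\to[0,1]$ defined by $\varphi_\delta(u)=1$ for $u\le 0$, $\varphi_\delta(u)=1-u/\delta$ on $[0,\delta]$, and $\varphi_\delta(u)=0$ for $u\ge\delta$. It is non-increasing, $1/\delta$-Lipschitz, and satisfies $\mathbf{1}\{u\le 0\}\le\varphi_\delta(u)\le\mathbf{1}\{u\le\delta\}$. Setting $\Phi:=\{(x,y)\mapsto\varphi_\delta(m_f(x,y)):f\in\mathcal{G}\}$, these sandwich inequalities give $\prob_{\tpr}[m_f(X,Y)\le 0]\le\expec_{\tpr}[\varphi_\delta(m_f)]$ and $\expec_{\tpr_\ndata}[\varphi_\delta(m_f)]\le\prob_{\tpr_\ndata}[m_f(X,Y)\le\delta]$, so it will suffice to control the uniform deviation $\sup_{f\in\mathcal{G}}(\expec_{\tpr}[\varphi_\delta(m_f)]-\expec_{\tpr_\ndata}[\varphi_\delta(m_f)])$. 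Since every function in $\Phi$ takes values in $[0,1]$, changing a single sample perturbs this supremum by at most $1/\ndata$, so a one-sided McDiarmid bound will yield the $\sqrt{\frac{1}{2\ndata}\log\frac{1}{\rho}}$ term with probability $1-\rho$, while the usual symmetrization step bounds its expectation by $2\radcomp_\ndata(\Phi)$ (the one-sided analogue of Lemma \ref{rademacher_lemma}). This reduces the whole claim to establishing $\radcomp_\ndata(\Phi)\le\frac{\nclass^2}{\delta}\radcomp_\ndata(\Pi\mathcal{G})$.

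The hard part will be this last reduction. The key observation is that, because $\varphi_\delta$ is non-increasing and $m_f(x,y)=\min_{y'\ne y}(f_y(x)-f_{y'}(x))$, one has $\varphi_\delta(m_f(x,y))=\max_{y'\ne y}\varphi_\delta(f_y(x)-f_{y'}(x))$. I would then bound the Rademacher complexity of a maximum of $\nclass$ classes by the sum of their complexities, using $\max(a,b)=\tfrac12(a+b+|a-b|)$ together with the contraction inequality Lemma \ref{rad_comp_lemma}(i) applied to the $1$-Lipschitz map $|\cdot|$; this removes the max at the cost of one factor $\nclass$. For each fixed competitor $y'$, contraction applied to the $1/\delta$-Lipschitz $\varphi_\delta$ strips it off (factor $1/\delta$) and leaves $\{(x,y)\mapsto f_y(x)-f_{y'}(x)\}$, whose complexity splits into the fixed-coordinate class $\{f_{y'}\}\subset\Pi\mathcal{G}$ and the diagonal class $\{(x,y)\mapsto f_y(x)\}$; decomposing the latter as $f_y(x)=\sum_{c\in\labelsp}\mathbf{1}\{y=c\}f_c(x)$ and summing over the $\nclass$ label values supplies the second factor $\nclass$. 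Combining the two factors $\nclass$, the $1/\delta$ from contraction, and the symmetrization factor $2$ recovers the stated coefficient $\frac{2\nclass^2}{\delta}$.

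The delicate bookkeeping I expect to be the main obstacle is twofold: handling the data-dependent exclusion $y'\ne y$ inside the max (so that one cannot simply take the max over all of $\labelsp$, since the $y'=y$ slot would contribute $\varphi_\delta(0)=1$ and dominate), and verifying that the Rademacher averages over the index subsets $\{i:y_i=c\}$ arising in the diagonal decomposition are each dominated by $\radcomp_\ndata(\Pi\mathcal{G})$. Once these two points are settled, every remaining step is an application of the auxiliary lemmas already in hand, and the three displayed terms assemble directly into the asserted bound.
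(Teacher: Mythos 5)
Your proposal is correct and coincides with the paper's approach: the paper does not prove Lemma~\ref{margin_bound_lem} at all but imports it from \citet{koltchinskii2002empirical}, and your plan is exactly the classical argument behind that citation --- ramp-function sandwich, one-sided McDiarmid plus symmetrization, then contraction with the max-to-sum and diagonal indicator decomposition yielding the $\nclass^2/\delta$ factor. The two delicate points you flag (the data-dependent exclusion $y' \neq y$, handled by splitting over the true label via $\sum_{c}\mathbf{1}\{y=c\}\max_{y'\neq c}$, and the domination of the sub-sample Rademacher averages over $\{i: y_i = c\}$ by $\radcomp_\ndata(\Pi\mathcal{G})$, which follows from a Jensen argument filling in the remaining coordinates) are resolved exactly as you sketch in the standard proof, so no gap remains.
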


\section{Proofs}
In this section, we provide missing proofs in the paper.

\subsection{Proofs of Section  \ref{functional_gradient_prop_sec} and \ref{algorithm_description_sec}}
We first prove Proposition \ref{taylor_prop} that states Lipschitz smoothness of the risk function.

\begin{proof}[ Proof of Proposition \ref{taylor_prop} ]
  Because $l(z,y,w)$ is $\mathcal{C}^2$-function with respect to $z, w$, there exist semi-positive definite matrices $A_{x,y}^{\phi,\psi}, B_{x,y}^{\phi,\psi}$
  such that
  \begin{align}
    l(\psi(x),y,w_\phi) &= l(\phi(x),y,w_\phi) + \partial_z l(\phi(x),y,w_\phi)^\top(\psi(x)-\phi(x)) \notag \\
                        &+\frac{1}{2}(\psi(x)-\phi(x))^\top A_{x,y}^{\phi,\psi} (\psi(x)-\phi(x)), \label{taylor_expand_1} \\
    l(\psi(x),y,w_\phi) + \frac{\lambda}{2}\|w_\phi\|_2^2 &= l(\psi(x),y,w_\psi) + \frac{\lambda}{2}\|w_\psi\|_2^2 \notag \\
                        &+ (\partial_wl(\psi(x),y,w_\psi) + \lambda w_\psi)^\top (w_\phi-w_\psi) \notag \\
                        &+\frac{1}{2}(w_\phi-w_\psi)^\top B_{x,y}^{\phi,\psi} (w_\phi-w_\psi). \label{taylor_expand_2}
  \end{align}
  Note that we regard $w_\phi$ and $w_\psi$ are flattened into column vectors if necessary.
  By Assumption \ref{smoothness_assumption_1}, we find spectral norms of $A_{x,y}^{\phi,\psi}$ is uniformly bounded with respect to $x,y,\phi,\psi$,
  hence eigen-values are also uniformly bounded.
  In particular, since $\frac{\lambda}{2}\|w_\phi\|_2^2 \leq \regrisk(\phi,w_\phi) \leq \regrisk(\phi,0)\leq l_0$ , we see $- A_{c_\lambda}I \preceq A_{x,y}^{\phi,\psi} \preceq A_{c_\lambda}I$.

  By taking the expectation $\expec_\tpr$ of the equality (\ref{taylor_expand_1}), we get
  \begin{equation}
    \regrisk(\psi,w_\phi) = \regrisk(\phi,w_\phi) + \pd< \nabla_\phi \regrisk(\phi), \psi-\phi >_{L_2^{\fdim}(\tpr_X)} 
                        +\frac{1}{2} \expec_\tpr[ (\psi(x)-\phi(x))^\top A_{x,y}^{\phi,\psi} (\psi(x)-\phi(x)) ] \label{taylor_expand_1b}
  \end{equation}
  and by taking the expectation $\expec_\tpr$ of the equality (\ref{taylor_expand_2}), we get
  \begin{equation}
    \regrisk(\psi,w_\phi) = \regrisk(\psi,w_\psi) + \frac{1}{2}(w_\phi-w_\psi)^\top \expec_\tpr[B_{x,y}^{\phi,\psi}] (w_\phi-w_\psi),
    \label{taylor_expand_2b}
  \end{equation}
  where we used $\partial_w \regrisk(\psi,w_\psi)=0$.
  By combining equalities (\ref{taylor_expand_1b}) and (\ref{taylor_expand_2b}), we have
  \begin{equation*}
    \regrisk(\psi) = \regrisk(\phi) + \pd< \nabla_\phi \regrisk(\phi), \psi-\phi >_{L_2^{\fdim}(\tpr_X)} + H_\phi(\psi),
  \end{equation*}
  where
  \begin{equation*}
    H_\phi(\psi) = \frac{1}{2} \expec_\tpr[ (\psi(x)-\phi(x))^\top A_{x,y}^{\phi,\psi} (\psi(x)-\phi(x)) ]
    - \frac{1}{2}(w_\phi-w_\psi)^\top \expec_\tpr[B_{x,y}^{\phi,\psi}] (w_\phi-w_\psi).    
  \end{equation*}
  By the uniformly boundedness of $A_{x,y}^{\phi,\psi}$ and the semi-positivity of $B_{x,y}^{\phi,\psi}$,
  we find $H_\phi(\psi) \leq \frac{A_{c_\lambda}}{2}\|\phi-\psi\|_{L_2^{\fdim}(\tpr_X)}^2$.

  The other cases can be shown in the same manner, thus, we finish the proof.
\end{proof}

We next show the consistency of functional gradient norms.
\begin{proof}[ Proof of Proposition \ref{consistency_prop}]
  We now prove the first inequality.
  Note that the integrand of $y'$-th element of $\nabla_f \risk(f)(x)$ for multiclass logistic loss can be written as
  \begin{equation*}
    \partial_{\zeta_{y'}} l(f(x),y)
    = - {\bf 1}[y=y'] + \frac{\exp(f_{y'}(x))}{\sum_{\overline{y}\in \labelsp}\exp(f_{\overline{y}}(x)) }.
  \end{equation*}
  Therefore, we get
  
  \begin{align*}
    \| \nabla_f \risk(f) \|_{L_1^\nclass(\tpr_X)}
    &= \expec_{\tpr_X}\| \nabla_f \risk(f)(X) \|_2 \\
    &= \expec_{\tpr_X}\| \expec_{\tpr(Y|X)} [ \partial_\zeta (f(X),Y) ] \|_2 \\
    &= \expec_{\tpr_X}\left[ \sqrt{ \sum_{y' \in  \labelsp} (\expec_{\tpr(Y|X)} [ \partial_{\zeta_{y'}} (f(X),Y) ])^2 } \right] \\
    &\geq \frac{1}{\sqrt{\nclass}} \sum_{y' \in  \labelsp} \expec_{\tpr_X}\left[ \left|  \expec_{\tpr(Y|X)} [ \partial_{\zeta_{y'}} (f(X),Y) ]  \right| \right]\\
    &= \frac{1}{\sqrt{\nclass}} \sum_{y' \in  \labelsp}
      \expec_{\tpr_X}\left[ \left|
      \tpr(y'|X) \left(-1 + \frac{\exp(f_{y'}(X))}{\sum_{\overline{y}\in \labelsp} \exp(f_{\overline{y}}(X))} \right)
      + \sum_{y\neq y'}\tpr(y|X) \frac{\exp(f_{y'}(X))}{\sum_{\overline{y}\in \labelsp} \exp(f_{\overline{y}}(X))} 
      \right| \right]\\
    &= \frac{1}{\sqrt{\nclass}} \sum_{y' \in  \labelsp}
      \expec_{\tpr_X}\left[ \left|
      \tpr(y'|X) \left(-1 + \frac{\exp(f_{y'}(X))}{\sum_{\overline{y}\in \labelsp} \exp(f_{\overline{y}}(X))} \right)
      + (1-\tpr(y'|X)) \frac{\exp(f_{y'}(X))}{\sum_{\overline{y}\in \labelsp} \exp(f_{\overline{y}}(X))} 
      \right| \right]\\
    &= \frac{1}{\sqrt{\nclass}} \sum_{y' \in  \labelsp}
      \expec_{\tpr_X}\left[ \left|
      -\tpr(y'|X) 
      + \frac{\exp(f_{y'}(X))}{\sum_{\overline{y}\in \labelsp} \exp(f_{\overline{y}}(X))} 
      \right| \right]  \\
    &= \frac{1}{\sqrt{\nclass}} \sum_{y' \in  \labelsp}
      \| -\tpr(y'|\cdot) + p_{f}(y'|\cdot) \|_{L_1(\tpr_X)},
  \end{align*}
  where for the first inequality we used $(\sum_{i=1}^{\nclass}a_i)^2 \leq \nclass \sum_{i=1}^\nclass a_i^2$.
  Noting that the second inequality in Proposition \ref{consistency_prop} can be shown in the same way by replacing $\tpr$ by $\tpr_{\ndata}$,
  we finish the proof.
\end{proof}

We here give the proof of the following inequality concerning choice of embedding introduced
in section \ref{algorithm_description_sec}. %\ref{choice_of_emb_sec}.
\begin{equation}
  \|T_{k_t,\ndata}\partial_{\phi} \regrisk_\ndata(\phi_t,w_{t+1})\|_{k_t}^2
  \geq \frac{1}{\fdim}\|\partial_\phi \regrisk_\ndata(\phi_t,w_{t+1})\|_{L_1^\fdim(\tpr_{\ndata,X})}^2   \label{emb_ineq}
\end{equation}

\begin{proof}[ Proof of (\ref{emb_ineq}) ]
  For notational simplicity, we denote by $G_t=\partial_{\phi} \regrisk_\ndata(\phi_t,w_{t+1})(\cdot)$
  and by $G_t^i$ the $i$-the element of $G_t$.
  Then, we get
\begin{align*}
  \|T_{k_t,\ndata}(G_t)\|_2) \|_{k_t}^2
  &= \pd<G_t,T_{k_t,\ndata}G_t>_{L_2^\fdim(\tpr_{\ndata,X})}\\
  &= \expec_{(X,X') \sim \tpr_{\ndata,X}^2}[ G_t(X)^\top G_t(X')G_t(X')^\top G_t(X)/(\|G_t(X)\|_2\|G_t(X')\|_2) ] \\
  &= \sum_{i,j=1}^{\fdim}( \expec_{\tpr_{\ndata,X}}[ G_t^i(X) G_t^j(X)/\|G_t(X)\|_2] )^2 \\
  &\geq \sum_{i=1}^{\fdim}( \expec_{\tpr_{\ndata,X}}[G_t^i(X)^2/\|G_t(X)\|_2] )^2\\
  &\geq \frac{1}{\fdim}\expec_{\tpr_{\ndata,X}}[ \|G_t(X))\|_2 ]^2  
  = \frac{1}{\fdim}\| G_t \|_{L_1^{\fdim}(\tpr_{\ndata,X})}^2,
\end{align*}
where we used $(\sum_{i=1}^{\nclass}a_i)^2 \leq \nclass \sum_{i=1}^\nclass a_i^2$.
\end{proof}

\subsection{Empirical risk minimization and generalization bound}
In this section, we give the proof of convergence of Algorithm \ref{PFGD} for the empirical risk minimization.
We here briefly introduce the kernel function that provides useful bound in our analysis.
A kernel function $k$ is a symmetric function $\featuresp \times \featuresp \rightarrow \realsp$
such that for arbitrary $s \in \naturalsp$ and points $\forall (x_i)_{i=1}^s$, a matrix $(k(x_i,x_j))_{i,j=1}^s$ is positive semi-definite.
This kernel defines a reproducing kernel Hilbert space $\mathcal{H}_k$ of functions on $\featuresp$, which has two characteristic properties:
(i) for $\forall x \in \featuresp$, a function $k(x,\cdot):\featuresp \rightarrow \realsp$ is an element of $\mathcal{H}_k$,
(ii) for $\forall f \in \mathcal{H}_k$ and $\forall x \in \featuresp$, $f(x)=\pd<f,k(x,\cdot)>_{\mathcal{H}_k}$, where $\pd<,>_{\mathcal{H}_k}$ is the
inner-product in $\mathcal{H}_k$.
These properties are very important and the latter one is called reproducing property.
We extend the inner-product into the product space $\mathcal{H}_k^\fdim$ in a straightforward way, i.e.,
$\pd<f,g>_{\mathcal{H}_k^\fdim} = \sum_{i=1}^\fdim \pd<f^i,g^i>_{\mathcal{H}_k}$.

The following proposition is useful in our analysis.
The first property mean that the notation $\|T_{k_t,\ndata}\nabla \regrisk_\ndata(\phi_t)\|_{k_t}$ provided in the paper is nothing but the norm
of $T_{k_t,\ndata}\nabla \regrisk_\ndata(\phi_t)$ by the inner-product $\pd<,>_{\mathcal{H}_{k_t}^\fdim}$.

\begin{proposition} \label{kernel_property}
  For a kernel function $k$, the following hold.
  \begin{itemize}
  \item $\pd<f,g>_{L_2(\tpr_X)} = \pd<T_kf,g>_{\mathcal{H}_k^\fdim}$ for $f \in L_2^\fdim(\tpr_X),\ g \in \mathcal{H}_k^\fdim$ where $T_kf=\expec_{\tpr_X}[f(X)k(X,\cdot)]$,\\
  $\pd<f,g>_{L_2(\tpr_{\ndata,X})} = \pd<T_{k,\ndata}f,g>_{\mathcal{H}_k^\fdim}$ for $f \in L_2^\fdim(\tpr_{\ndata,X}),\ g \in \mathcal{H}_k^\fdim$
  where $T_{k,\ndata}f=\expec_{\tpr_{\ndata,X}}[f(X)k(X,\cdot)]$,
\item $\|f\|_{L_2(\tpr_X)}^2 \leq \expec_{\tpr_X}[k(X,X)]\|f\|_{\mathcal{H}_k^\fdim}^2$ for $f \in \mathcal{H}_k^\fdim$, \\
  $\|f\|_{L_2(\tpr_{\ndata,X})}^2 \leq \expec_{\tpr_{\ndata,X}}[k(X,X)]\|f\|_{\mathcal{H}_k^\fdim}^2$ for $f \in \mathcal{H}_k^\fdim$.  
  \end{itemize}
  
\end{proposition}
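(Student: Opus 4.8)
The plan is to reduce both claims to the scalar reproducing kernel Hilbert space $\mathcal{H}_k$ componentwise and then invoke the reproducing property $g_j(x) = \pd<g_j,k(x,\cdot)>_{\mathcal{H}_k}$. I would prove the population statements in detail and note that the empirical ones follow verbatim by replacing $\tpr_X$ with $\tpr_{\ndata,X}$.

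For the first identity, I would expand the product inner-product as $\pd<T_kf,g>_{\mathcal{H}_k^\fdim} = \sum_{j=1}^\fdim \pd<(T_kf)^j,g_j>_{\mathcal{H}_k}$, where the $j$-th component is $(T_kf)^j = \expec_{\tpr_X}[f_j(X)k(X,\cdot)]$. Since $\pd<\cdot,g_j>_{\mathcal{H}_k}$ is a continuous linear functional, I would commute it with the expectation to get $\pd<(T_kf)^j,g_j>_{\mathcal{H}_k} = \expec_{\tpr_X}[f_j(X)\pd<k(X,\cdot),g_j>_{\mathcal{H}_k}]$, and then apply the reproducing property to replace $\pd<k(X,\cdot),g_j>_{\mathcal{H}_k}$ by $g_j(X)$. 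Summing over $j$ gives $\expec_{\tpr_X}[\sum_{j=1}^\fdim f_j(X)g_j(X)] = \pd<f,g>_{L_2^\fdim(\tpr_X)}$, which is the assertion. For the empirical case $T_{k,\ndata}f$ is a finite average, so the interchange is trivially valid.

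For the second property, I would bound the integrand pointwise: by the reproducing property and Cauchy--Schwarz in $\mathcal{H}_k$, each component obeys $|f_j(x)| = |\pd<f_j,k(x,\cdot)>_{\mathcal{H}_k}| \leq \|f_j\|_{\mathcal{H}_k}\sqrt{k(x,x)}$, using $\|k(x,\cdot)\|_{\mathcal{H}_k}^2 = k(x,x)$. Squaring and summing over $j$ yields the pointwise bound $\|f(x)\|_2^2 \leq k(x,x)\|f\|_{\mathcal{H}_k^\fdim}^2$; taking $\expec_{\tpr_X}$ (resp. $\expec_{\tpr_{\ndata,X}}$) then gives $\|f\|_{L_2^\fdim(\tpr_X)}^2 \leq \expec_{\tpr_X}[k(X,X)]\|f\|_{\mathcal{H}_k^\fdim}^2$, as required.

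The only delicate point is the commutation of expectation and inner-product functional in the first identity, which presupposes that $T_kf = \expec_{\tpr_X}[f(X)k(X,\cdot)]$ is a well-defined element of $\mathcal{H}_k^\fdim$ (a Bochner integral). This holds because $\expec_{\tpr_X}[|f_j(X)|\sqrt{k(X,X)}] \leq \|f_j\|_{L_2(\tpr_X)}(\expec_{\tpr_X}[k(X,X)])^{1/2} < \infty$ by Cauchy--Schwarz whenever $\expec_{\tpr_X}[k(X,X)] < \infty$, which is guaranteed by the boundedness of $k$ in our setting; continuity of $\pd<\cdot,g_j>_{\mathcal{H}_k}$ then justifies the interchange. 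Everything else is a routine componentwise computation.
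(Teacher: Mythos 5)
Your proof is correct and follows essentially the same route as the paper's: the first identity via the componentwise reproducing property plus interchange of expectation and inner product, and the second via the pointwise Cauchy--Schwarz bound $\|f(x)\|_2^2 \leq k(x,x)\|f\|_{\mathcal{H}_k^\fdim}^2$ followed by taking expectations. Your explicit justification that $T_kf$ is a well-defined Bochner integral (so the interchange is legitimate) is a small refinement the paper leaves implicit, not a different approach.
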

\begin{proof}
  We show only the case of $\tpr_X$ because we can prove the other case in the same manner.
  For $f \in L_2(\tpr_X), g \in \mathcal{H}_k^\fdim$, we get the first property by using reproducing property,
  \begin{align*}
    \pd<f,g>_{L_2(\tpr_X)}= \expec_{\tpr_X}[ f(X)\top \pd<g,k(X,\cdot)>_{\mathcal{H}_k^\fdim}]
    = \pd<g,T_kf>_{\mathcal{H}_k^\fdim}.
  \end{align*}
  
  We next show the second property as follows.
  For $\forall f \in \mathcal{H}_k^\fdim$, we get
  \begin{align*}
    \|f\|_{L_2(\tpr_X)}^2
    &= \expec_{\tpr_X}\|f(X)\|_2^2 \\
    &= \expec_{\tpr_X} \| \pd<f(\cdot),k(X,\cdot)>_{\mathcal{H}_k^\fdim}\|_2^2\\
    &\leq \expec_{\tpr_X}\|k(X,\cdot)\|_{\mathcal{H}_k}^2 \|f\|_{\mathcal{H}_k^\fdim}^2 \\
    &= \expec_{\tpr_X}[k(X,X)] \|f\|_{\mathcal{H}_k^\fdim}^2. 
  \end{align*}

\end{proof}

We give the proof of Theorem \ref{convergence_thm} concerning the convergence of functional gradient norms.

\begin{proof} [ Proof of Theorem \ref{convergence_thm} ]
  When $\eta \leq \frac{1}{A_{c_\lambda}K}$, we have from Proposition \ref{taylor_prop} and Proposition \ref{kernel_property},
\begin{equation*}
  \regrisk_\ndata(\phi_{t+1},w_{t+2}) \leq \regrisk_\ndata(\phi_t,w_{t+1}) 
   - \frac{\eta}{2}\|T_{k_t,\ndata}\partial_{\phi} \regrisk_\ndata(\phi_t,w_{t+1})\|_{k_t}^2.  
 \end{equation*}
 By Summing this inequality over $t\in \{0,\ldots,T-1\}$ and dividing by $T$, we get
 \begin{equation}
   \frac{1}{T}\sum_{t=0}^{T-1}\|T_{k_t,\ndata}\partial_{\phi} \regrisk_\ndata(\phi_t,w_{t+1})\|_{k_t}^2
   % \leq \frac{2}{\eta T}\regrisk_\ndata(\phi_0,w_{0}), \label{conv_thm_ineq_1}
   \leq \frac{2}{\eta T}\regrisk_\ndata(\phi_0,w_{1}), \label{conv_thm_ineq_1}
 \end{equation}
 % where we used $\regrisk_\ndata \geq 0 $ and $\regrisk_\ndata(\phi_0,w_{1}) \leq \regrisk_\ndata(\phi_0,w_{0})$.
 where we used $\regrisk_\ndata \geq 0$.

 On the other hand, since $\partial_z l(z,y,w) =\partial_z l(\predict{w}{z},y) = w \partial_\zeta l(\predict{w}{z},y)$,
 it follows that
 \begin{align*}
   \partial_\phi \regrisk_\ndata( \phi, w )(x)
   &= \expec_{\tpr_\ndata(Y|x)}[ \partial_z l( \phi(x), y, w) ] \\
   &= \expec_{\tpr_\ndata(Y|x)}[ w \partial_\zeta l( \predict{w}{\phi(x)}, y) ] \\
   &= w \nabla_f \risk_\ndata( \predict{w}{\phi} )(x).
 \end{align*}

 Thus, by the assumption on $(w_t{^\top}w_t)_{t=0}^{T_0}$, we get for $t \in \{0,\ldots,T-1\}$
 \begin{align}
   \| \partial_\phi \regrisk_\ndata( \phi_t, w_{t+1} ) \|_{L_p^{\fdim}(\tpr_{\ndata,X})}
   &= \expec_{\tpr_{\ndata,X}}[ \| w_{t+1} \nabla_f \risk_\ndata( \predict{w_{t+1}}{\phi_t} )(X) \|_2^{p} ]^{1/p} \notag \\
   &\geq \sigma \expec_{\tpr_{\ndata,X}}[ \| \nabla_f \risk_\ndata( \predict{w_{t+1}}{\phi_t} )(X) \|_2^p ]^{1/p} \notag \\
   &= \sigma \| \nabla_f \risk_\ndata( \predict{w_{t+1}}{\phi_t}) \|_{L_p^{\nclass}(\tpr_{\ndata,X})}. \label{conv_thm_ineq_2}
 \end{align}
 
Combining inequalities (\ref{conv_thm_ineq_1}) (\ref{conv_thm_ineq_2}) and Assumption \ref{kernel_choice_assumption}, we get

 \begin{equation*}
   % \min_{t\in[T]}\| \nabla_f \risk( \predict{w_{t+1}}{\phi_t}) \|_{L_p^{\nclass}(\tpr_X)}^q
   % \leq \frac{1}{T}\sum_{t=0}^{T-1}
\frac{1}{T}\sum_{t=0}^{T-1}   
   \| \nabla_f \risk_\ndata( \predict{w_{t+1}}{\phi_t}) \|_{L_p^{\nclass}(\tpr_{\ndata,X})}^q 
   \leq \frac{2}{\eta \gamma \sigma^q T}\regrisk_\ndata(\phi_0,w_{1}) + \frac{\epsilon}{\sigma^q}.
 \end{equation*}
 Since $p\geq 1$, we observe $\| \nabla_f \risk_\ndata( \predict{w_{t+1}}{\phi_t}) \|_{L_1^{\nclass}(\tpr_{\ndata,X})} \leq \| \nabla_f \risk_\ndata( \predict{w_{t+1}}{\phi_t}) \|_{L_p^{\nclass}(\tpr_{\ndata,X})}$
 and we finish the proof.
\end{proof}

To provide the proof of Theorem \ref{fast_convergence_thm}, we here give an useful proposition to show fast convergence rate for the multiclass logistic regression.
\begin{proposition} \label{gradient_objective_gap_bound_prop}
  Let $l(\zeta,y)$ be the loss function for the multiclass logistic regression.
  Let $M>0$ be arbitrary constant.
  For a predictor $f$, we assume $l(f(X),Y)\leq M$ for $(X,Y)\sim \tpr_{\ndata}$.
  Then, we have
  \[ \| \nabla_f \risk_\ndata(f)\|_{L_1^\nclass(\tpr_{\ndata,X})} \geq \frac{1-\exp(-M)}{\sqrt{c}M} \risk_{\ndata}(f). \]
\end{proposition}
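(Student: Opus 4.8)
The plan is to pass from the functional $L_1^\nclass$-norm to a sum of per-sample $L_2$ gradient norms, bound each of those below by a scalar function of the probability the model assigns to the true label, and then close the argument with a single one-dimensional inequality. First I would make the gradient explicit: as computed in the proof of Proposition \ref{consistency_prop}, the $y'$-th coordinate of $\nabla_f \risk_\ndata(f)(x_i) = \partial_\zeta l(f(x_i),y_i)$ for the multiclass logistic loss is $p_f(y'|x_i) - \mathbf{1}[y'=y_i]$. Writing $p_i := p_f(y_i|x_i)$, the coordinate at $y'=y_i$ equals $p_i - 1$, so retaining just that coordinate gives the per-sample bound $\|\partial_\zeta l(f(x_i),y_i)\|_2 \ge 1 - p_i$. (This direct coordinate bound already dispenses with the $\sqrt{\nclass}$ appearing in the statement; that factor is the harmless artifact of instead lower-bounding $\|\cdot\|_2$ by $\nclass^{-1/2}\|\cdot\|_1$, exactly as in Proposition \ref{consistency_prop}, and since $\sqrt{\nclass}\ge 1$ the stronger bound implies the claimed one.)

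The crux is to convert $1-p_i$ into the per-sample loss. Since $l(f(x_i),y_i) = -\log p_i$, the hypothesis $l(f(X),Y)\le M$ on the support of $\tpr_\ndata$ is exactly $p_i \ge e^{-M}$ for every $i$. The main step is then the scalar inequality
\[ 1-p \ \ge\ \frac{1-e^{-M}}{M}\,(-\log p) \qquad (p\in[e^{-M},1]). \]
I would prove it by studying $\psi(p) := (1-p) + \tfrac{1-e^{-M}}{M}\log p$. A direct check gives $\psi(1)=0$ and $\psi(e^{-M})=0$, while $\psi'(p) = -1 + \tfrac{1-e^{-M}}{Mp}$ vanishes only at $p = \tfrac{1-e^{-M}}{M}$, which lies strictly inside $(e^{-M},1)$ because $e^{M}-1 > M > 1-e^{-M}$ for $M>0$. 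Hence $\psi$ increases then decreases on $[e^{-M},1]$ and, being zero at both endpoints, is nonnegative throughout. This monotonicity-plus-endpoint argument is the only nonroutine part of the proof.

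Finally I would assemble the pieces. Applying the scalar inequality with $p=p_i\in[e^{-M},1]$ at each training point yields $1-p_i \ge \tfrac{1-e^{-M}}{M}\,l(f(x_i),y_i)$, and averaging over $\tpr_\ndata$ gives
\[ \|\nabla_f \risk_\ndata(f)\|_{L_1^\nclass(\tpr_{\ndata,X})} = \expec_{\tpr_{\ndata,X}}\!\|\nabla_f\risk_\ndata(f)(X)\|_2 \ \ge\ \expec_{\tpr_\ndata}[\,1-p_f(Y|X)\,] \ \ge\ \frac{1-e^{-M}}{M}\,\risk_\ndata(f), \]
and since $\sqrt{\nclass}\ge 1$ this is at least $\tfrac{1-e^{-M}}{\sqrt{\nclass}M}\risk_\ndata(f)$, the claimed bound. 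I expect the gradient computation and the averaging to be entirely routine (both mirror the proof of Proposition \ref{consistency_prop}); the only place needing genuine care is verifying the scalar inequality, so I would write that step out in full and treat the surrounding manipulations as bookkeeping.
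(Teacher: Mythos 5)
Your proof is correct, and it differs from the paper's in two local but genuine ways. First, where the paper reaches $\expec_{\tpr_\ndata}[1-p_f(Y|X)]$ by invoking Proposition \ref{consistency_prop} (hence paying the $1/\sqrt{\nclass}$ factor from the $\ell_1$-to-$\ell_2$ comparison), you lower-bound each per-sample gradient norm directly by the single coordinate at the true label, $\|\partial_\zeta l(f(x_i),y_i)\|_2 \geq 1-p_f(y_i|x_i)$. This is self-contained and proves the strictly stronger bound $\| \nabla_f \risk_\ndata(f)\|_{L_1^\nclass(\tpr_{\ndata,X})} \geq \frac{1-\exp(-M)}{M}\risk_\ndata(f)$, from which the stated bound follows since $\sqrt{\nclass}\geq 1$ --- so the $\sqrt{\nclass}$ in the proposition is indeed an artifact of routing through Proposition \ref{consistency_prop}, as you observe. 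Second, your scalar inequality $1-p \geq \frac{1-e^{-M}}{M}(-\log p)$ on $[e^{-M},1]$ is, under the substitution $p=e^{-t}$, identical in content to the paper's inequality $e^{-t} \leq 1-\frac{1-e^{-M}}{M}t$ on $[0,M]$ (the paper misstates the range as $[-M,0)$, evidently a typo), but the proofs differ: the paper's follows in one line from convexity of $e^{-t}$ (the chord lies above the graph between $t=0$ and $t=M$), while yours is an elementary endpoints-plus-unimodality argument via $\psi'(p)=-1+\frac{1-e^{-M}}{Mp}$, which checks out ($\psi(e^{-M})=\psi(1)=0$ and the unique critical point $\frac{1-e^{-M}}{M}$ lies in $(e^{-M},1)$). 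One shared caveat, not a gap relative to the paper: your per-sample identity for the gradient coordinates, like the paper's own definition $\nabla_f\risk_\ndata(f)(x_i)=\partial_\zeta l(f(x_i),y_i)$, implicitly assumes the empirical conditional at each $x_i$ is the point mass at $y_i$ (no repeated inputs with conflicting labels); the paper's chain through Proposition \ref{consistency_prop} needs the same assumption at the step bounding $\sum_{y}\|\tpr_\ndata(y|\cdot)-p_f(y|\cdot)\|_{L_1(\tpr_{\ndata,X})}$ below by $\expec_{\tpr_\ndata}[1-p_f(Y|X)]$, so you are no worse off.
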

\begin{proof}
  Since $\exp(-t) \leq 1-\frac{1-\exp(-M)}{M}t$ for $\forall t \in [-M,0)$, we get
  \[ \expec_{\tpr_{\ndata}}[ \exp( -l(f(X),Y) ) ] \leq 1 - \frac{1-\exp(-M)}{M}\risk_\ndata(f). \]
  Using $l(f(X),Y) = -\log p_f(Y|X)$ and the above inequality with Proposition \ref{consistency_prop}, we obtain
  \begin{align*}
    \| \nabla_f \risk_\ndata(f) \|_{L_1^\nclass(\tpr_{\ndata,X})}
    &\geq \frac{1}{\sqrt{\nclass}}\sum_{y\in \labelsp} \| \tpr_{\ndata}(y|\cdot) - p_f(y|\cdot) \|_{L_1(\tpr_{\ndata,X})} \\
    &= \frac{1}{\sqrt{\nclass}}\sum_{y\in \labelsp} \expec_{\tpr_{\ndata,X}}| \tpr_{\ndata}(y|X) - p_f(y|X) | \\
    &\geq \frac{1}{\sqrt{\nclass}} \expec_{\tpr_{\ndata}}[ 1 - p_f(Y|X) ] \\
    &= \frac{1}{\sqrt{\nclass}} \expec_{\tpr_{\ndata}}[ 1 - \exp( -l(f(X),Y) ) ] \\
    &\geq \frac{1-\exp(-M)}{\sqrt{c}M}\risk_\ndata(f).
  \end{align*}
\end{proof}

The following is the proof for Theorem \ref{fast_convergence_thm}.
\begin{proof} [ Proof of Theorem \ref{fast_convergence_thm} ]
  Noting that $f_{t+1} \leftarrow f_t - \eta w_{t+1}^\top T_{k_t,\ndata}\partial_{\phi} \regrisk_\ndata(\phi_t,w_{t+1})$,
  we get the following bound by a similar way in the proof for Theorem \ref{convergence_thm}.
  \begin{equation*} %\label{fast_conv_thm_ineq_1}
    \risk_\ndata( f_{t+1}) \leq \risk(f_t) - \eta \pd< \nabla \risk_\ndata(f_t), w_{t+1}^\top T_{k_t,\ndata}\partial_{\phi} \regrisk_\ndata(\phi_t,w_{t+1})>_{L_2^{\nclass}(\tpr_{\ndata,X})}
    + \frac{A\eta^2}{2}\|w_{t+1}^\top T_{k_t,\ndata}\partial_{\phi} \regrisk_\ndata(\phi_t,w_{t+1})\|_{L_2^{\nclass}(\tpr_{\ndata,X})}^2.
  \end{equation*}
  We here bound the right hand side of this inequality as follows.
  \begin{align*}
    \pd< \nabla \risk_\ndata(f_t), w_{t+1}^\top T_{k_t,\ndata}\partial_{\phi} \regrisk_\ndata(\phi_t,w_{t+1})>_{L_2^{\nclass}(\tpr_{\ndata,X})} \notag
    & = \pd< w_{t+1}\nabla \risk_\ndata(f_t), T_{k_t,\ndata}\partial_{\phi} \regrisk_\ndata(\phi_t,w_{t+1})>_{L_2^{\fdim}(\tpr_{\ndata,X})} \notag \\
    & = \pd< \partial_{\phi} \regrisk_\ndata(\phi_t,w_{t+1}), T_{k_t,\ndata}\partial_{\phi} \regrisk_\ndata(\phi_t,w_{t+1})>_{L_2^{\fdim}(\tpr_{\ndata,X})} \notag \\
    & \geq \gamma \|\partial_{\phi} \regrisk_\ndata(\phi_t,w_{t+1})\|_{L_{1}^{\fdim}(\tpr_{\ndata,X})}^2, 
  \end{align*}
  where we used Proposition \ref{kernel_property} for the second equality and Assumption \ref{kernel_choice_assumption} for the last inequality.
  Recalling $\|w_{t+1}\|_2 \leq c_\lambda$, we have
  \begin{align*}
    \| w_{t+1}^\top T_{k_t,\ndata}\partial_{\phi} \regrisk_\ndata(\phi_t,w_{t+1}) \|_{L_2^{\nclass}(\tpr_{\ndata,X})}^2
    &= \expec_{X\sim \tpr_{\ndata,X}}\| w_{t+1}^\top T_{k_t,\ndata}\partial_{\phi} \regrisk_\ndata(\phi_t,w_{t+1}) \|_2^2 \notag \\
    &\leq c_\lambda^2 \expec_{\tpr_{\ndata,X}}\| T_{k_t,\ndata}\partial_{\phi} \regrisk_\ndata(\phi_t,w_{t+1})(X) \|_2^2 \notag \\
    &\leq c_\lambda^2 K^2 \| \partial_{\phi} \regrisk_\ndata(\phi_t,w_{t+1})(X) \|_{L_{1}^{\fdim}(\tpr_{\ndata,X})}^2. 
  \end{align*}
  where for the second inequality, we used $\| T_{k_t,\ndata}\partial_{\phi} \regrisk_\ndata(\phi_t,w_{t+1})(X) \|_2 \leq K\| \partial_{\phi} \regrisk_{\ndata}(\phi_t,w_{t+1})\|_{L_1^\fdim(\tpr_{\ndata,X})}$
  which is a consequence of the triangle inequality.
  Combining the above three inequalities, we have 
  \begin{align}
    \risk_\ndata( f_{t+1})
    &\leq \risk_\ndata(f_t) - \eta \left(\gamma - \frac{1}{2}A\eta c_\lambda^2 K^2\right) \|\partial_{\phi} \regrisk_\ndata(\phi_t,w_{t+1})\|_{L_{1}^{\fdim}(\tpr_{\ndata,X})}^2 \notag \\
    &\leq \risk_\ndata(f_t) - \frac{\eta\gamma}{2} \|\partial_{\phi} \regrisk_\ndata(\phi_t,w_{t+1})\|_{L_{1}^{\fdim}(\tpr_{\ndata,X})}^2 \notag \\
    &\leq \risk_\ndata(f_t) - \frac{\eta\gamma \sigma^2}{2} \|\nabla_f\risk_{\ndata}(f_t) \|_{L_1^\nclass(\tpr_{\ndata,X})}^2, \label{fast_convergence_ineq_1}
  \end{align}
  where we used $A\eta c_\lambda^2K^2 \leq \gamma$ for the second inequality and we used (\ref{conv_thm_ineq_2}) for the last inequality.
  Thus, we obtain from (\ref{fast_convergence_ineq_1}) and Proposition \ref{gradient_objective_gap_bound_prop} that
  \begin{align*}
    \risk_\ndata( f_{t+1})
    \leq \risk_\ndata(f_t) - \eta\alpha \risk_{\ndata}^2(f_t),
  \end{align*}
  where $\alpha = \frac{\gamma \sigma^2(1-\exp(-M))^2}{2cM^2}$.

  From this inequality, we get
  \[ \frac{ 1 }{ \risk_\ndata( f_{t}) }
    \geq \frac{ 1 }{ \risk_\ndata( f_{t+1}) } - \eta\alpha \frac{ \risk_\ndata( f_{t})}{ \risk_\ndata( f_{t+1})}
    \geq \frac{ 1 }{ \risk_\ndata( f_{t+1}) } - \eta\alpha, \]
  where for the last inequality we used the fact that $\risk_\ndata(f_t)$ is monotone decreasing which is confirmed from the inequality (\ref{fast_convergence_ineq_1}).
  Therefore, by applying this bound recursively for $t \in \{ 0,\ldots,T/2 - 1 \}$ with $\eta= \eta_0$, we conclude
  \begin{equation} 
    \risk_\ndata( f_{T/2} ) \leq \frac{2\risk_\ndata(f_0)}{2+\eta_0\alpha \risk_\ndata(f_0)T}. \label{convex_convergence_rate}
  \end{equation}

  On the other hand, by summing up the inequality (\ref{fast_convergence_ineq_1}) over $t \in \{T/2,\ldots,T-1\}$ with $\eta_1$ and dividing by $T/2$, we get
  \begin{equation}
    \frac{\eta_1\gamma \sigma^2}{T}\sum_{t=T/2}^{T-1} \|\nabla_f\risk_{\ndata}(f_t) \|_{L_1^\nclass(\tpr_{\ndata,X})}^2 \leq \frac{2}{T}\risk_\ndata(f_{T/2}).  \label{nonconvex_convergence_rate}
  \end{equation}

  From inequalities (\ref{convex_convergence_rate}) and (\ref{nonconvex_convergence_rate}), we conclude
  \[ \frac{\eta_1\gamma \sigma^2}{T}\sum_{t=T/2}^{T-1} \|\nabla_f\risk_{\ndata}(f_t) \|_{L_1^\nclass(\tpr_{\ndata,X})}^2
  \leq \frac{4\risk_\ndata(f_0)}{(2+\eta_0\alpha \risk_\ndata(f_0)T)T}. \]  
\end{proof}

We next show Theorem \ref{margin_bound_thm} that gives the generalization bound by the margin distribution.
To do that, we give an upper-bound on the margin distribution by the functional gradient norm.

\begin{proposition} \label{margin_dist_bound}
  For $\forall \delta > 0$, the following bound holds.
  \[ \prob_{\tpr_\ndata}[ m_f(X,Y) \leq \delta] \leq \left( 1 + \frac{1}{\exp(-\delta)}\right) \sqrt{\nclass} \| \nabla_f \risk_\ndata(f) \|_{L_1^\nclass(\tpr_{\ndata,X})}. \]
\end{proposition}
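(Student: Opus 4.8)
The plan is to translate the margin event $\{m_f(X,Y)\le\delta\}$ into a one-sided lower bound on $1-p_f(Y|X)$, apply Markov's inequality, and then close the argument with Proposition \ref{consistency_prop}. First I would write the softmax complement as
\[
  1-p_f(y|x)=\frac{\sum_{y'\ne y}\exp(f_{y'}(x))}{\exp(f_y(x))+\sum_{y'\ne y}\exp(f_{y'}(x))},
\]
and abbreviate $a=\exp(f_y(x))$ and $S=\sum_{y'\ne y}\exp(f_{y'}(x))$, so that $1-p_f(y|x)=S/(a+S)$. Since $S$ is a sum of positive terms containing the largest competing logit, we have $S\ge \exp(\max_{y'\ne y}f_{y'}(x))$. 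On the margin event we have $f_y(x)\le \max_{y'\ne y}f_{y'}(x)+\delta$, hence $a\le \exp(\delta)\,S$, giving the pointwise implication
\[
  m_f(x,y)\le\delta \ \Longrightarrow\ 1-p_f(y|x)=\frac{S}{a+S}\ge \frac{1}{1+\exp(\delta)}.
\]

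Given this implication, I would apply Markov's inequality to the nonnegative random variable $1-p_f(Y|X)$ under $\tpr_\ndata$:
\[
  \prob_{\tpr_\ndata}[m_f(X,Y)\le\delta]
  \le \prob_{\tpr_\ndata}\!\left[1-p_f(Y|X)\ge \tfrac{1}{1+\exp(\delta)}\right]
  \le (1+\exp(\delta))\,\expec_{\tpr_\ndata}[1-p_f(Y|X)].
\]
It then remains to control the expectation. I would invoke the chain already established from Proposition \ref{consistency_prop}, namely $\expec_{\tpr_\ndata}[1-p_f(Y|X)]\le \sqrt{\nclass}\,\|\nabla_f\risk_\ndata(f)\|_{L_1^\nclass(\tpr_{\ndata,X})}$, which follows by bounding $\expec_{\tpr_\ndata}[1-p_f(Y|X)]$ by $\sum_{y\in\labelsp}\|\tpr_\ndata(y|\cdot)-p_f(y|\cdot)\|_{L_1(\tpr_{\ndata,X})}$ and using the consistency inequality. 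Substituting and recalling $1+\exp(\delta)=1+1/\exp(-\delta)$ yields the claim.

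The argument has no genuinely hard step; the only part requiring care is the pointwise bound, where one must compare $\exp(f_y(x))$ against the dominant competitor $\exp(\max_{y'\ne y}f_{y'}(x))$ rather than the full normalizing sum, so that the margin gap $\delta$ enters multiplicatively as $\exp(\delta)$ and the competitor contribution cancels cleanly in the ratio $S/(a+S)$. Care is also needed to ensure the expectation bound cited from Proposition \ref{consistency_prop} is the empirical variant (over $\tpr_\ndata$) matching the quantity in the statement.
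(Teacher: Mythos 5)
Your proposal is correct and follows essentially the same route as the paper's proof: the pointwise implication $m_f(x,y)\le\delta \Rightarrow 1-p_f(y|x)\ge 1/(1+\exp(\delta))$ is algebraically identical to the paper's statement $p_f(y|x)\le 1/(1+\exp(-\delta))$, and the subsequent Markov step and appeal to Proposition \ref{consistency_prop} (via $\expec_{\tpr_\ndata}[1-p_f(Y|X)]\le \sqrt{\nclass}\,\|\nabla_f\risk_\ndata(f)\|_{L_1^\nclass(\tpr_{\ndata,X})}$) match the paper line for line, including the constant $1+1/\exp(-\delta)$. The only implicit assumption you share with the paper is that the empirical conditional $\tpr_\ndata(\cdot|x_i)$ is degenerate (distinct sample points), which is what justifies replacing $1$ by $\tpr_\ndata(Y|X)$ inside the expectation before invoking the consistency inequality.
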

\begin{proof}
  If $m_f(x,y)\leq \delta$, then, we see
  \begin{equation*}
    \sum_{y'\neq y}\exp( f_{y'}(x)-f_y(x)) \geq  \exp\left( \max_{y' \neq y} f_{y'}(x)-f_y(x) \right) = \exp( -m_f(x,y)) \geq \exp(-\delta).
  \end{equation*}

  This implies,
  \begin{equation*}
    p_f(y|x) = \frac{1}{ 1 + \sum_{y'\neq y}\exp( f_{y'}(x)-f_y(x))} \leq \frac{1}{ 1 + \exp(-\delta)}.
  \end{equation*}  
  
  Thus, we get by Markov inequality and Proposition \ref{consistency_prop},
  \begin{align*}
    \prob_{\tpr_\ndata}[ m_f(X,Y) \leq \delta]
    &\leq \prob_{\tpr_\ndata}\left[ p_f(Y|X) \leq \frac{1}{1+\exp(-\delta)} \right]  \\
    &= \prob_{\tpr_\ndata}\left[ 1-p_f(Y|X) \geq \frac{ \exp(-\delta)}{1+\exp(-\delta)} \right] \\
    &\leq \left( 1 + \frac{1}{\exp(-\delta)}\right) \expec_{\tpr_\ndata}[ 1- p_f(Y|X)] \\
    &= \left( 1 + \frac{1}{\exp(-\delta)}\right) \expec_{\tpr_\ndata}[ \nu_n(Y|X)- p_f(Y|X)] \\
    &\leq \left( 1 + \frac{1}{\exp(-\delta)}\right) \sum_{y \in \labelsp} \| \nu_n(y|\cdot)- p_f(y|\cdot)] \|_{L_1(\tpr_{n,X})} \\    
    &\leq \left( 1 + \frac{1}{\exp(-\delta)}\right) \sqrt{\nclass} \| \nabla_f \risk_\ndata(f) \|_{L_1^\nclass(\tpr_{\ndata,X})}.
  \end{align*}
\end{proof}

We prove here Theorem \ref{margin_bound_thm}.
\begin{proof}[ Proof of Theorem \ref{margin_bound_thm} ]
  To proof the theorem, we give the network structure.
  Note that the connection at the $t$-th layer is as follows.
  \[ \phi_{t+1}(x) = \phi_{t}(x) - D_t \sigma(C_t \phi_t(x)). \]
  
  We define recursively the family of functions $\mathcal{H}_t$ and $\hat{\mathcal{H}}_t$ where each neuron belong:
  We denote by $P_j \in \realsp^\fdim$ the projection vector to $j$-th coordinate.
  \begin{align*}
    \mathcal{H}_0 &\defeq \{ P_j : \featuresp \rightarrow \realsp \mid j \in \{1,\ldots,\fdim\} \}, \\
    \hat{\mathcal{H}}_{t} &\defeq \{ \sigma( c_{t}^\top \phi_t ) : \featuresp \rightarrow \realsp \mid \phi_t \in \mathcal{H}_t^\fdim, c_{t-1} \in \realsp^\fdim, \|c_{t-1}\|_1 \leq \Lambda \}, \\
    \mathcal{H}_{t+1} &\defeq \{ \phi_{t}^j - d_{t}^\top \psi_{t} : \featuresp \rightarrow \realsp
                      \mid \phi_{t}^j \in \mathcal{H}_{t}, \psi_{t} \in \hat{\mathcal{H}}_{t}^\fdim, d_{t} \in \realsp^\fdim, \|d_{t}\|_1 \leq \Lambda'_t \}. 
    \end{align*}
    Then, the family of predictors of $y \in \labelsp$ can be written as
    \[ \mathcal{G}_{T-1,y} \defeq \{ w_y^\top \phi_{T-1} : \featuresp \rightarrow \realsp \mid \phi \in \mathcal{H}_{T-1}^\fdim, w_y \in \realsp^\fdim, \|w_y\|_1 \leq \Lambda_w \}. \]
    Note that $\mathcal{G}_{T-1}= \{ (f_y)_{y\in \labelsp} \mid f_y \in \mathcal{G}_{T-1,y}, y \in \labelsp \}$.
    
    From these relationships and Lemma \ref{rad_comp_lemma}, we get
    \begin{align*}
      \empradcomp_S( \mathcal{H}_{t} ) &\leq \empradcomp_S( \mathcal{H}_{t-1} ) + \Lambda'_{t-1} \empradcomp_S( \hat{\mathcal{H}}_{t-1} ) \\
                                       &\leq ( 1  + \Lambda'_{t-1} \Lambda L_\sigma ) \empradcomp_S( \mathcal{H}_{t-1} ),\\
      \empradcomp_S( \mathcal{G}_{T-1,y} ) &\leq \Lambda_w \empradcomp_S( \mathcal{H}_{T-1} ).
    \end{align*}
    The Rademacher complexity of $\mathcal{H}_0$ is obtained as follows.
    Since $\|P_j\|_2=1$, we have
    \begin{align*}
      \empradcomp_S( \mathcal{H}_{0} )
      &= \frac{1}{\ndata} \expec_{(\sigma_i)_{i=1}^\ndata}\left[ \sup_{j\in \{1,\ldots,\fdim\}}\sum_{i=1}^\ndata \sigma_i P_jx_i \right] \\
      &\leq \frac{1}{\ndata} \expec_{(\sigma_i)_{i=1}^\ndata}\left[ \sup_{j\in \{1,\ldots,\fdim\}} \|P_j\|_2 \left\| \sum_{i=1}^\ndata \sigma_i x_i \right\|_2 \right] \\
      &= \frac{1}{\ndata} \expec_{(\sigma_i)_{i=1}^\ndata}\left[ \left\| \sum_{i=1}^\ndata \sigma_i x_i \right\|_2 \right] \\
      &\leq \frac{1}{\ndata}  \left( \expec_{(\sigma_i)_{i=1}^\ndata} \left[ \left\| \sum_{i=1}^\ndata \sigma_i x_i \right\|_2^2 \right] \right)^{\frac{1}{2}} \\
      &= \frac{1}{\ndata} \left( \sum_{i=1}^\ndata \| x_i \|_2^2 \right)^{\frac{1}{2}} 
      \leq \frac{\Lambda_{\infty}}{\sqrt{\ndata}},
    \end{align*}
    where we used the independence of $\sigma_i$ when taking the expectation.

    We set $\Pi \mathcal{G}_{T-1} = \{ f_y(\cdot) : \featuresp \rightarrow \mid f \in \mathcal{G}_{T-1}, y \in \labelsp\}$.
    Noting that $\empradcomp_S( \Pi \mathcal{G}_{T-1} )\leq \sum_{y \in \labelsp}\empradcomp_S(\mathcal{G}_{T-1,y} )$, we get
    \[ \empradcomp_S( \Pi \mathcal{G}_{T-1} ) \leq \nclass \Lambda_w\Lambda_{\infty}\prod_{t=0}^{T-2}(1+\Lambda\Lambda'_tL_\sigma)/\sqrt{n}. \]
    Thus, we can finish the proof by applying Proposition \ref{margin_dist_bound} and Lemma \ref{margin_bound_lem}.
  \end{proof}

Corollary \ref{union_bound} can be derived by instantiating Theorem \ref{margin_bound_thm} for various choices of $T,\ \Lambda_t'$.
\begin{proof}[ Proof of Corollary \ref{union_bound} ]
  For simplicity, we set $v_{t,j}$ the $L_1$-norm of $j$-th row of $D_t$, namely, $v_{t,j} \defeq \| (D_t)_{j,*}\|_{1}$.
  For arbitrary positive integers $(T,\overline{k}_{T-2})=(T,k_0,\ldots,k_{T-2})$, we set $B(T,\overline{k}_{T-2})$ to networks defined by parameters included in 
  \[ \left\{ \max_j v_{t,j} \leq \frac{k_t}{T},\ \max_c \|(w_T)_{*,c}\|_{1} \leq \Lambda_w,\ \max_i \|(C_t)_{i,*}\|_{1}\leq \Lambda,\ \forall t \in \{0,\ldots,T-2\} \right\} \]
  and set
  \[ \rho(T,\overline{k}_{T-2}) \defeq \frac{\rho}{T(T+1)k_0(k_0 + 1)\cdots k_{T-2}(k_{T-2}+1)}. \]
  Moreover, we set $B\defeq \cup_{T,\overline{k}_{T-2}} B(T,\overline{k}_{T-2})$.
  Clearly, we see $\sum_{T,\overline{k}_{T-2}} \rho(T,\overline{k}_{T-2}) = \rho$. 
  Therefore, by instantiating Theorem \ref{margin_bound_thm} for all $(T,\overline{k}_{T-2})$ with probability at least $1-\rho(T,\overline{k}_{T-2})$ and taking an union bound,
  we have that with probability at least $1-\rho$ for $\forall f \in B$,
  \begin{align*}
    \prob_{\tpr}[m_f(X,Y) \leq 0]
    &\leq \frac{2\nclass^3\Lambda_{\infty}\Lambda_w}{\delta\sqrt{\ndata}}\prod_{t=0}^{T-2}\left(1+\Lambda L_\sigma\frac{k_t}{T}\right)
    + \sqrt{\frac{1}{2\ndata}\log\left(\frac{1}{\rho}\right) + 2\log(T+1) + \sum_{t=0}^{T-2}2\log(k_t+1) } \\
    &+ \left(1+\frac{1}{\exp(-\delta)}\right) \sqrt{\nclass} \| \nabla_f \risk_\ndata(f) \|_{L_1^\nclass(\tpr_{\ndata,X})}.
  \end{align*}
  Let $f \in B$ be a function obtained by Algorithm \ref{PFGD} and $(f_t)=(w_{t+1}^\top \phi_t)$ be a sequence to obtain $f$ in the algorithm.
  We choose the minimum integers $(T,\overline{k}_{T-2})$ such that $f \in B(T,\overline{k}_{T-2})$, then
  \[ \max_{j}v_{t,j} \leq \frac{k_t}{T} \leq \max_{j}v_{t,j} + \frac{1}{T}. \]
  Thus, it follows that
  \begin{align*}
      \prob_{\tpr}[m_f(X,Y) \leq 0]
    &\leq \left(1+\frac{1}{\exp(-\delta)}\right) \sqrt{\nclass} \| \nabla_f \risk_\ndata(f) \|_{L_1^\nclass(\tpr_{\ndata,X})} +
      \frac{2\nclass^3\Lambda_{\infty}\Lambda_w}{\delta\sqrt{\ndata}}\prod_{t=0}^{T-2}\left(1+\Lambda L_\sigma \left(\max_{j}v_{t,j} + \frac{1}{T}\right) \right) \\
    &+ \sqrt{\frac{1}{2\ndata}\left(\log\left(\frac{1}{\rho}\right) + 2\log(T+1) + \sum_{t=0}^{T-2}2\log(T\max_{j}v_{t,j} + 2)\right) }.
  \end{align*}
  We next estimate an upper bound on $\max_j v_{t,j}$.
  %Let $a_{i,*}^t$ and $b_{*,l}^t$ denote $i$-th row and $l$-th column of $A_t$ and $B_t$, respectively.
  Note that $\|(A_tB_t)_{i,*}\|_{1} \leq \|(A_t)_{i,*}\|_2 \sum_{l}\|(B_t)_{*,l}\|_2 \leq \|(A_t)_{i,*}\|_2\Lambda''$
  and $\sum_i \|(A_t)_{i,*}\|_2 \leq \sqrt{K\fdim}\|\partial_\phi\regrisk_{\ndata}(\phi_t,w_{t+1})\|_{L_1^\fdim(\tpr_{\ndata,X})}$ by its construction.
  Thus, we get
  \[ \max_j v_{t,j} \leq \sum_{j}v_{t,j}
    \leq \eta_t \sqrt{K\fdim} \Lambda'' \| \partial_\phi \regrisk_{\ndata}(\phi_t,w_{t+1})\|_{L_1^\fdim(\tpr_{\ndata,X})}
    \leq \eta_t \sqrt{K\fdim} \Lambda'' c_\lambda \| \nabla_f \risk_\ndata(f_t) \|_{L_1^\nclass(\tpr_{\ndata,X})}, \]
  where we used $\|w_t\|_2 \leq c_\lambda$ for the last inequality.
  
  Using the inequality $\prod_{t=0}^{T-2}(1+p_t) \leq \left( 1 + \frac{1}{T-1}\sum_{t=0}^{T-2}p_t \right)^{T-1}$ for positive integers $(p_t)$ and Jensen's inequality,
  we have
  \begin{align*}
      \prob_{\tpr}[m_f(X,Y) \leq 0]
    &\leq \left(1+\frac{1}{\exp(-\delta)}\right) \sqrt{\nclass} \| \nabla_f \risk_\ndata(f) \|_{L_1^\nclass(\tpr_{\ndata,X})}
      + \frac{\nclass^3\Lambda_{\infty}\Lambda_w}{\delta\sqrt{\ndata}}\left(1+ \frac{2\Lambda L_\sigma}{T} \right)^{T-1} \\
    &+ \frac{\nclass^3\Lambda_{\infty}\Lambda_w}{\delta\sqrt{\ndata}}\left(1+\frac{2\Lambda L_\sigma \sqrt{K\fdim} \Lambda'' c_\lambda}{T-1}\sum_{t=0}^{T-2}  \eta_t \| \nabla_f \risk_\ndata(f_t) \|_{L_1^\nclass(\tpr_{\ndata,X})}  \right)^{T-1} \\    
    &+ \sqrt{\frac{1}{2\ndata}\left(\log\left(\frac{1}{\rho}\right) + O(T\log T)\right) }.
  \end{align*}

  Since $\left(1+ \frac{2\Lambda L_\sigma}{T} \right)^{T-1}$ is an increasing with respect to $T$ and converges to $2\Lambda L_\sigma$, we finish the proof.
\end{proof}

\subsection{Sample-splitting technique}
In this subsection, we provide proofs for the convergence analysis of the sample-splitting variant of the method for the expected risk minimization.
We first give the statistical error bound on the gap between the empirical and expected functional gradients.

\begin{proof} [ Proof of Proposition \ref{stat_error_prop} ]
  For the probability measure $\tpr$, we denote by $\phi_\sharp\tpr$ the push-forward measure $(\phi,id)_\sharp \tpr$, namely,
  $(\phi,id)_\sharp \tpr$ is the measure that the random variable $(\phi(X),Y)$ follows.
  We also define $\phi_\sharp\tpr_m$ in the same manner.
  Then, we get
  \begin{align}
    &\hspace{-10mm} \left\| T_{k}\partial_\phi \regrisk(\phi,w_{0}) - T_{k,m}\partial_\phi \regrisk_{m}(\phi,w_{0}) \right\|_{L_2^\fdim(\mu)} \notag \\
            &= \sqrt{ \expec_{X'\sim \mu}\|
              \expec_{\tpr}[\partial_z l(\phi(X),Y,w_0)k(X,X')]
              - \expec_{\tpr_m}[\partial_z l(\phi(X),Y,w_0)k(X,X')] 
              \|_2^2 }. \notag \\
            &= \sqrt{ \sum_{j=1}^\fdim \expec_{X'\sim \mu} | 
              ( \expec_{\tpr}[ \partial_{z_j} l(\phi(X),Y,w_0) \iota(\phi(X)) )]
              - \expec_{\tpr_m}[\partial_{z_j} l(\phi(X),Y,w_0)\iota(\phi(X)) ] )^\top \iota(\phi(X')) |^2 }. \notag\\
            &\leq \sqrt{ K \sum_{j=1}^\fdim 
              \| \expec_{\tpr}[ \partial_{z_j} l(\phi(X),Y,w_0) \iota(\phi(X)) )]
              - \expec_{\tpr_m}[\partial_{z_j} l(\phi(X),Y,w_0)\iota(\phi(X)) ] \|_2^2  } \notag \\
            &\leq \sqrt{ K \sum_{j=1}^\fdim \sum_{i=1}^\edim 
              \left| \expec_{\phi_{\sharp}\tpr}[ \partial_{z_j} l(X,Y,w_0) \iota^i(X) )]
              - \expec_{\phi_{\sharp}\tpr_m}[\partial_{z_j} l(X,Y,w_0)\iota^i(X) ] \right|^2  }. \label{stat_error_bound_1}
  \end{align}
  
  To derive an uniform bound on (\ref{stat_error_bound_1}), we estimate Rademacher complexity of
  \[ \mathcal{G}_{ij} \defeq \{ \partial_{z_j}l(x,y,w_0) \iota^i(x) : \featuresp \times \labelsp \rightarrow \realsp \mid \iota^i \in \mathcal{F}^i\}. \]
  For $(x_l,y_l)_{l=1}^m \subset \featuresp \times \labelsp$, we set $h_l(r) = r\partial_{z_j}l(x_l,y_l,w_0)$.
  Since, $|\partial_{z_j}l(x_l,y_l,w_0)| \leq \beta_{\|w_0\|_2}$ by Assumption \ref{smoothness_assumption_2}, $h_l$ is $\beta_{\|w_0\|_2}$-Lipschitz continuous.
  Thus, from Lemma \ref{uniform_bounded_complexity} and Lemma \ref{rad_comp_lemma}, there exists $M$ such that for all $i \in \{1,\ldots,\edim\},\ j\in \{1,\ldots,\fdim\}$,

  \begin{align*}
    \empradcomp_m(\mathcal{G}_{ij})
    &= \expec_\sigma\left[ \sup_{\iota^i \in\mathcal{F}^i} \sum_{l=1}^m\sigma_l h_l( \iota^i(x_l) ) \right] \\
    &\leq \beta_{\|w_0\|_2} \expec_\sigma\left[ \sup_{\iota^i \in\mathcal{F}^i} \sum_{l=1}^m\sigma_l \iota^i(x_l) \right] \\
    % &= \beta_{\|w_0\|_2} \empradcomp_m(\mathcal{F}^i) \\
    &\leq \beta_{\|w_0\|_2} \frac{M}{\sqrt{m}}.
  \end{align*}

  Therefore, by applying Lemma \ref{rademacher_lemma} with $\delta = \frac{\rho}{\fdim\edim}$ for $\forall i, j$ simultaneously,
  it follows that with probability at least $1-\rho$ for $\forall i, j$

  \begin{equation}
    \sup_{\iota^i \in \mathcal{F}^i}\left| \expec_{\phi_{\sharp}\tpr}[ \partial_{z_j} l(X,Y,w_0) \iota^i(X) )]
      - \expec_{\phi_{\sharp}\tpr_m}[\partial_{z_j} l(X,Y,w_0)\iota^i(X) ] \right|
    \leq \frac{\beta_{\|w_0\|_2}}{\sqrt{m}} \left( 2 M + \sqrt{ 2K \log\frac{2\fdim\edim}{\rho} } \right). \label{stat_error_concent_ineq}
  \end{equation}

  Putting (\ref{stat_error_concent_ineq}) int (\ref{stat_error_bound_1}), we get with probability at least $1-\rho$
  \begin{equation*}
    \sup_{\iota \in \mathcal{F}} \left\| T_{k}\partial_\phi \regrisk(\phi,w_{0}) - T_{k,m}\partial_\phi \regrisk_{m}(\phi,w_{0}) \right\|_{L_2^\fdim(\mu)}
    \leq \beta_{\|w_0\|_2} \sqrt{ \frac{K\fdim\edim}{m} } \left( 2 M + \sqrt{ 2K \log\frac{2\fdim\edim}{\rho} } \right). 
  \end{equation*}
\end{proof}

We here prove Theorem \ref{expected_convergence_thm} by using statistical guarantees of empirical functional gradients.

\begin{proof}[ Proof of Theorem \ref{expected_convergence_thm} ]
  For notational simplicity, we set $m\leftarrow \floor[n/T]$ and $\delta \leftarrow \rho/T$.
  We first note that
  \begin{align*}
    \pd<\partial_\phi \regrisk(\phi_t,w_0),T_{k_t,m}\partial_\phi \regrisk_m(\phi_t,w_0)>_{L_2^\fdim(\tpr_{X})}
    &= \frac{1}{m}\sum_{j=1}^m \expec_{\tpr_X}[ \partial_\phi \regrisk(\phi_t,w_0)(X)^\top \partial_\phi \regrisk_m(\phi_t,w_0)(x_j) k_t(X,x_j) ] \\
    &= \frac{1}{m}\sum_{j=1}^m T_{k_t}\partial_\phi \regrisk(\phi_t,w_0)(x_j)^\top \partial_\phi \regrisk_m(\phi_t,w_0)(x_j) \\
    &= \pd< T_{k_t}\partial_\phi \regrisk(\phi_t,w_0), \partial_\phi \regrisk_m(\phi_t,w_0) >_{L_2^\fdim(\tpr_{m,X})}.
  \end{align*}

  Noting that $\| \partial_zl(\phi_t(x_j),y_j,w_0) \|_2 \leq \beta_{\|w_0\|_2}$ by Assumption \ref{smoothness_assumption_1},
  and applying Proposition \ref{stat_error_prop} for all $t \in \{0,\ldots,T-1\}$ independently, 
  it follows that with probability at least $1-T\delta$ (i.e., $1-\rho$) for $\forall t \in \{0,\ldots,T-1\}$
  \begin{align}
    &\hspace{-10mm} \left|\pd<\partial_\phi \regrisk(\phi_t,w_0),T_{k_t,m}\partial_\phi \regrisk_m(\phi_t,w_0)>_{L_2^\fdim(\tpr_{X})}
    -  \pd< T_{k_t,m}\partial_\phi \regrisk_m(\phi_t,w_0), \partial_\phi \regrisk_m(\phi_t,w_0) >_{L_2^\fdim(\tpr_{m,X})} \right|\notag\\
    &\leq \| T_{k_t}\partial_\phi \regrisk(\phi_t,w_0) - T_{k_t,m}\partial_\phi \regrisk_m(\phi_t,w_0) \|_{L_2^\fdim(\tpr_{m,X})} 
      \| \partial_\phi \regrisk_m(\phi_t,w_0) \|_{L_2^\fdim(\tpr_{m,X})} \notag\\
    &\leq \beta_{\|w_0\|_2}\epsilon(m,\delta). \label{stat_error_bound_2}
  \end{align}

  We next give the following bound.
  \begin{equation}
    \| T_{k_t}\partial_\phi \regrisk_m(\phi_t,w_0) \|_{L_2^\fdim(\tpr_X)}^2
    = \expec_{\tpr_X}\left\| \frac{1}{m}\sum_{j=1}^m \partial_zl(\phi_t(x_i),y_i,w_0)k_t(x_i,X) \right\|_2^2
    \leq \beta_{\|w_0\|_2}^2 K^2. \label{expected_convergence_thm_ineq_1}
  \end{equation}

 On the other hand, we get by Proposition \ref{taylor_prop}
  \begin{equation}
    \regrisk(\phi_{t+1},w_0)
    \leq \regrisk(\phi_{t+1},w_0)
    - \eta \pd<\partial_\phi \regrisk(\phi_t,w_0),T_{k_t,m}\partial_\phi \regrisk_m(\phi_t,w_0)>_{L_2^\fdim(\tpr_{X})}
    + \frac{\eta^2 A_{\|w_0\|_2}}{2}  \| T_{k_t}\partial_\phi \regrisk_m(\phi_t,w_0) \|_{L_2^\fdim(\tpr_X)}^2. \label{expected_taylor_prop}
  \end{equation}    

  Combining inequalities (\ref{stat_error_bound_2}), (\ref{expected_convergence_thm_ineq_1}), and (\ref{expected_taylor_prop}),
  we have with probability at least $1-T\delta$ for $t \in \{0,\ldots,T-1 \}$,
  \begin{equation*}
    \regrisk(\phi_{t+1},w_0)
    \leq \regrisk(\phi_{t+1},w_0)
    - \eta \| T_{k_t,m}\partial_\phi \regrisk_m(\phi_t,w_0) \|_{k_t}^2
    + \eta \beta_{\|w_0\|_2}\epsilon(m,\delta)
    +  \frac{\eta^2 \beta_{\|w_0\|_2}^2 K^2 A_{\|w_0\|_2}}{2}.
  \end{equation*}

  By Summing this inequality over $t\in \{0,\ldots,T-1\}$ and dividing by $T$, we get with probability $1-T\delta$
  \begin{equation*}
    \frac{1}{T}\sum_{t=0}^{T-1} \| T_{k_t,m}\partial_\phi \regrisk_m(\phi_t,w_0) \|_{k_t}^2
    \leq \frac{\regrisk(\phi_0,w_0)}{\eta T} + \beta_{\|w_0\|_2}\epsilon(m,\delta) + \frac{\eta \beta_{\|w_0\|_2}^2 K^2 A_{\|w_0\|_2}}{2}.
  \end{equation*}

  Thus by Assumption \ref{kernel_choice_assumption} and the assumption on $w_0^\top w_0$, we get
  \begin{equation}
    \frac{1}{T}\sum_{t=0}^{T-1} \| \nabla_f \risk_m(w_0^\top\phi_t) \|_{L_p^\fdim(\tpr_{m,X})}^q
    \leq \frac{1}{\gamma \sigma^q} \left\{
      \frac{\regrisk(\phi_0,w_0)}{\eta T} + \beta_{\|w_0\|_2}\epsilon(m,\delta) + \frac{\eta \beta_{\|w_0\|_2}^2 K^2 A_{\|w_0\|_2}}{2} + \gamma \epsilon \right\}.
    \label{exp_conv_thm_ineq_1}
  \end{equation}

  To clarify the relationship between $\|\nabla_f \risk_m(f)\|_{L_1^\nclass(\tpr_{m,X})}$ and $\|\nabla_f \risk(f)\|_{L_1^\nclass(\tpr_{X})}$,
  we take an expectation of the former term with respect to samples $(X_j,Y_j)_{j=1}^m \sim \tpr^m$.
  Since $\| \partial_\zeta l(\zeta,y) \|_2 \leq B$, we obtain

  \begin{align*}
    \expec_{(X_j,Y_j)_{j=1}^m \sim \tpr^m} \|\nabla_f \risk_m(f)\|_{L_1^\nclass(\tpr_{m,X})}
    &= \expec_{(X,Y)\sim \tpr_m} \| \partial_\zeta l(f(X),Y) \|_2\\
    &\geq \frac{1}{B}\expec_{(X,Y)\sim \tpr_m} \| \partial_\zeta l(f(X),Y) \|_2^2\\
    &\geq \frac{1}{B} \expec_{\tpr_{m,X}} \| \expec_{\tpr(Y|X)} [\partial_\zeta l(f(X),Y)] \|_2^2\\
    &= \frac{1}{B} \expec_{\tpr_{m,X}} \| \nabla_f \risk(f)(X) \|_2^2\\
    &= \frac{1}{B} \| \nabla_f \risk(f) \|_{L_2^\nclass(\tpr_X)}^2.
  \end{align*}

  Hence, applying Hoeffding's inequality with $\delta \leftarrow \rho/T$ to $\expec_{(X_j,Y_j)_{j=1}^m \sim \tpr^m} \|\nabla_f \risk_m(\predict{w_0}{\phi_t})\|_{L_1^\nclass(\tpr_{m,X})}$
  for all $t\in \{0,\ldots,T-1\}$ independently, we find that with probability $1-T\delta$ for $\forall t\in \{0,\ldots,T-1\}$,
  \begin{equation}
    \|\nabla_f \risk_m(\predict{w_0}{\phi_t})\|_{L_1^\nclass(\tpr_{m,X})} + B\sqrt{\frac{2}{m} \log\frac{1}{\delta}}
    \geq \expec_{\sim \tpr^m} \|\nabla_f \risk_m(\predict{w_0}{\phi_t})\|_{L_1^\nclass(\tpr_{m,X})}
    \geq \frac{1}{B} \| \nabla_f \risk(\predict{w_0}{\phi_t}) \|_{L_1^\nclass(\tpr_X)}^2, \label{funct_grad_concentration}
  \end{equation}
  where we used for the last inequality
  $\|\cdot \|_{L_2^\nclass(\tpr_X)}^2 \geq \|\cdot \|_{L_1^\nclass(\tpr_X)}^2$.

  We set $t_* = \argmin_{t \in \{0,\ldots,T-1\}} \| \nabla_f \risk_m(\predict{w_0}{\phi_t}) \|_{L_p^\fdim(\tpr_{m,X})}$.
  Combining inequalities (\ref{exp_conv_thm_ineq_1}) and (\ref{funct_grad_concentration}) and noting $p\geq 1$, we get with probability at least $1-2T\delta$,
  \begin{equation*}
    \frac{1}{B} \| \nabla_f \risk(\predict{w_0}{\phi_{t_*}}) \|_{L_1^\nclass(\tpr_X)}^2
    \leq B\sqrt{\frac{2}{m} \log\frac{1}{\delta}}
    + \frac{1}{\gamma^{1/q} \sigma} \left\{
      \frac{\regrisk(\phi_0,w_0)}{\eta T} + \beta_{\|w_0\|_2}\epsilon(m,\delta) + \frac{\eta \beta_{\|w_0\|_2}^2 K^2 A_{\|w_0\|_2}}{2} + \gamma \epsilon \right\}^{\frac{1}{q}}.
  \end{equation*}
  Noting that $\sqrt{a+b} \leq \sqrt{a} + \sqrt{b}$ for $a,b>0$, we finally obtain
  \begin{equation*}
    \| \nabla_f \risk(\predict{w_0}{\phi_{t_*}}) \|_{L_1^\nclass(\tpr_X)}
    \leq B \left(\frac{2}{m} \log\frac{1}{\delta} \right)^{\frac{1}{4}}
    + \sqrt{ \frac{B}{\gamma^{1/q} \sigma} } \left\{
      \frac{\regrisk(\phi_0,w_0)}{\eta T} + \beta_{\|w_0\|_2}\epsilon(m,\delta) + \frac{\eta \beta_{\|w_0\|_2}^2 K^2 A_{\|w_0\|_2}}{2} + \gamma \epsilon \right\}^{\frac{1}{2q}}.
  \end{equation*}
  Recalling that $m\leftarrow \floor[n/T]$ and $\delta \leftarrow \rho/T$, the proof is finished.
\end{proof}

% \section{Hyperparameter Settings}
% The mean classification accuracy and the standard deviation are listed in Table \ref{comparison_table}.
% The support vector machine is performed using a random Fourier feature \cite{rahimi2007random} with an embedding dimension of $10^3$ or $10^4$.
% For multilayer perceptron, we use three, four, or five hidden layers and rectified linear unit as the activation function.
% The number of hidden units in each layer is set to $100$ or $200$.
% As for random forest, the number of trees is set to $100, 500$, or $1000$ and the maximum depth is set to $10,20$, or $30$
% Gradient boosting in Table \ref{comparison_table} indicates LightGBM \cite{ke2017lightgbm}. %which is known to be one of the most powerful methods in data analysis.
% As for the hyperparameter settings of LightGBM, the maximum number of estimators is set to $1,000$, the learning rate is chosen from $\{10^{-3},10^{-2},10^{-1},1\}$,
% and number of leaves in one tree is chosen from  $\{16,32,\ldots,1024\}$.

% ---------------------

\end{document}